\newcommand\reallywidehat[1]{%
\savestack{\tmpbox}{\stretchto{%
  \scaleto{%
    \scalerel*[\widthof{\ensuremath{#1}}]{\kern-.6pt\bigwedge\kern-.6pt}%
    {\rule[-\textheight/2]{1ex}{\textheight}}%WIDTH-LIMITED BIG WEDGE
  }{\textheight}% 
}{0.5ex}}%
\stackon[1pt]{#1}{\tmpbox}%
}
\title{\LARGE \bf
Learning Stabilizing Policies via an Unstable \\
Subspace Representation
}
\author[1]{Leonardo F. Toso\footnote{Email addresses: \texttt{\{lt2879,james.anderson\}@columbia.edu}, \texttt{yelintao93@hust.edu.cn}.}}
\author[2]{Lintao Ye}
\author[1]{James Anderson}
\affil[1]{Department of Electrical Engineering, Columbia University}
\affil[2]{School of Artificial Intelligence and
Automation, HUST}
\date{}
\begin{document}

\doparttoc 
\faketableofcontents

\maketitle
\allowdisplaybreaks
\begin{abstract}
We study the problem of learning to stabilize (LTS) a linear time-invariant (LTI) system. Policy gradient (PG) methods for control assume access to an initial stabilizing policy. However, designing such a policy for an \emph{unknown} system is one of the most fundamental problems in control, and it may be as hard as learning the optimal policy itself. Existing work on the LTS problem requires large data as it scales quadratically with the ambient dimension. We propose a two-phase approach that first learns the \emph{left unstable subspace} of the system and then solves a series of discounted linear quadratic regulator (LQR) problems on the learned unstable subspace, targeting to stabilize only the system's unstable dynamics and reduce the effective dimension of the control space. We provide non-asymptotic guarantees for both phases and demonstrate that operating on the unstable subspace reduces sample complexity. In particular, when the number of unstable modes is much smaller than the state dimension, our analysis reveals that LTS on the unstable subspace substantially speeds up the stabilization process. Numerical experiments are provided to support this sample complexity reduction achieved by our approach.
\end{abstract}

\section{Introduction}

In contrast to traditional model-based control methods, model-free, policy gradient (PG) approaches offer two substantial advantages: (i) they are simple to implement without requiring knowledge of the underlying system dynamics, and (ii) they adapt readily to new tasks with minimal parameter tuning. These methods have been widely used to solve reinforcement learning (RL) tasks in unknown environments \citep{sutton1999policy}, with recent work establishing strong optimality guarantees \citep{agarwal2021theory}. As a result, there has been much interest in applying PG methods to optimal control, see the excellent review by \cite{hu2023toward} for an overview. Problems of particular relevance to this work includes the linear quadratic regulator (LQR) problem in the offline setting \citep{fazel2018global,malik2019derivative,gravell2020learning,mohammadi2021convergence}, online setting \citep{cassel2021online}, multi-task setting \citep{wang202model,toso2024meta,toso2024async, zhan2025coreset}, and networked setting \citep{mitra2024towards}. A crucial milestone was achieved in \cite{fazel2018global}, which showed that the LQR problem exhibits a benign optimization landscape, enabling global convergence of PG methods (with linear rate as shown in \cite{mohammadi2020linear}).

There is however a major obstacle encountered when applying PG methods to control: it is typically assumed that one has access to an initial stabilizing policy. For one of the most fundamental problems in control, that of finding a stabilizing policy for an unknown system, such an assumption precludes the use of PG methods. In particular, learning to stabilize (LTS) a linear system can be as hard as learning the optimal policy itself \citep{tsiamis2022learning,zeng2023hardness}.

Several solutions to the LTS problem have been proposed, c.f., \citep{lale2020explore, lamperski2020computing, chen2021black,perdomo2021stabilizing,hu2022sample, zhao2024convergence}. Two notable existing approaches that this work builds on are: 
discounted methods \citep{lamperski2020computing,perdomo2021stabilizing, zhao2024convergence} and unstable subspace learning \citep{hu2022sample,zhang2024learning,werner2025system}. In the first, PG solves discounted LQR problems with a carefully selected sequence of increasing discount factors. Since the policy gradients are estimated from data (i.e., system trajectories), this approach typically suffers from a high sample complexity as it scales quadratically with the ambient problem dimension \citep{zhao2024convergence}. 

On the other hand, since a stabilizing policy only needs to address the system’s unstable dynamics, focusing on stabilizing just the unstable modes reduces the effective dimensionality of the control space and consequently, the sample complexity, as shown in \cite{hu2022sample} for the noiseless setting and in \cite{zhang2024learning} for the stochastic setting. However, these works rely on identifying the full unstable dynamics to construct a stabilizing policy on top of the identified model, therefore being model-based and highly sensitive to the model's estimation accuracy. Furthermore, the analyses in \cite{hu2022sample,zhang2024learning} are restricted to diagonalizable systems.

\begin{figure}
  \centering
  \includegraphics[width=1\textwidth]{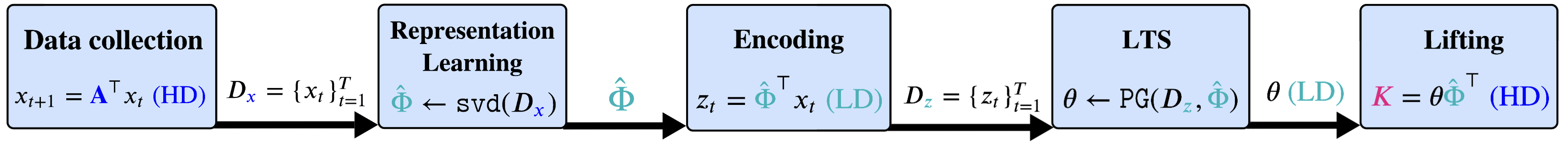}
  \caption{Workflow for learning to stabilize (LTS) a high-dimensional (HD) discrete-time LTI system on its low-dimensional (LD) unstable subspace.}
  \label{fig:LTS_pipeline}
\end{figure}

In contrast, our approach accommodates non-diagonalizable system and combines the strengths of both perspectives: we avoid explicitly identifying the system's unstable dynamics while using policy gradient to stabilize only the unstable modes. In particular, we solve a sequence of discounted LQR problems by performing policy gradient updates on the left unstable subspace of the system (see Figure~\ref{fig:LTS_pipeline}). Moreover, our work addresses the following questions:
\vspace{0.35cm}

\noindent $\bullet$ \emph{To what extent can we guarantee the stability of a high-dimensional system by performing a discount-factor annealing method on its low-dimensional unstable subspace?}
\vspace{0.1cm}

\noindent $\bullet$ \emph{How does this approach reduce the sample complexity of learning a stabilizing controller?}
\vspace{0.1cm}

\noindent $\bullet$ \emph{What is the sample complexity of estimating the representation of the left unstable subspace?}

\subsection{Contributions}

\noindent $\bullet$ \textbf{Sample complexity reduction:} 
    By operating on the unstable subspace, namely, the subspace associated with the system's $\ell \in \mathbb{N}$ unstable modes, we aim to stabilize only the portion of the system that requires stabilization rather than the full $\dx$-dimensional state space. We demonstrate the reduction in the sample complexity of finding a stabilizing policy from $\tilde{\mathcal{O}}(\textcolor{red}{\dx^2} \du)$ \citep{zhao2024convergence} to $\tilde{\mathcal{O}}(\textcolor{blue}{\ell^2} \du)$ (Theorem~\ref{theorem:main-result}), with $\du$ being the number of inputs, which is significant when the number of unstable modes is much smaller than the state dimension, i.e., $\ell \ll \dx$.
\vspace{0.1cm}

\noindent $\bullet$ \textbf{Learning the left unstable subspace:} We demonstrate that operating on the \emph{left} unstable subspace allows for controlling the closed-loop spectral radius in terms of the accuracy of the learned representation. We also provide finite-sample guarantees for learning this representation by sampling data from an adjoint system (Theorem~\ref{theorem:learning the left unstable subspace mb}). Therefore, the closed-loop spectral radius error decreases as more data is collected. 
This contrasts with prior work of \cite{hu2022sample, zhang2024learning}, which recovers a basis of the right unstable subspace. Their error bounds depend on a ``coupling term" that arises from decomposing the system's dynamics into stable and unstable components and inevitably incurs a bias that is significant for non-symmetric system matrices.
\vspace{0.1cm}

\noindent $\bullet$ \textbf{Non-diagonalizable matrices:} Our results accommodate non-diagonalizable systems. In contrast to \cite{hu2022sample, zhang2024learning}, which restrict the analysis to diagonalizable systems, we leverage the Jordan form decomposition and establish that the left unstable subspace representation can be learned with a finite amount of samples (Lemma \ref{lemma: lower bound on sigma_l} and Theorem \ref{theorem:learning the left unstable subspace mb}). That is in contrast to \cite{zhang2024learning}, where the sample complexity scales inversely with the spectral gap between the unstable modes; this dependence is problematic when the system is non-diagonalizable, as the gap goes to zero and data grows prohibitively large. We prove that it should not be the case.

\subsection{Related Work}

\noindent $\bullet$ \textbf{Learning to stabilize with identified model:} A natural idea to find a stabilizing controller for an unknown system is first to identify the system's model from data and then synthesize a controller on top of it.  \cite{chen2021black} show that the sample complexity scales exponentially with $\dx$ when learning to stabilize from a single trajectory. However, such scaling is undesirable when $\dx$ is large. To overcome this, \cite{hu2022sample} demonstrate that a stabilizing policy can be learned by only identifying the unstable modes of the system, which leads to a much more benign sample complexity that scales with the number of unstable modes $\ell \ll \dx$.
In contrast to \cite{hu2022sample}, this work does not require identifying the unstable dynamics; namely, we identify a basis (or representation) of the left unstable subspace of the system. Moreover, our approach accommodates non-diagonalizable matrices, which is not the case in \cite{hu2022sample,zhang2024learning}. 
\vspace{0.1cm}

\noindent $\bullet$ \textbf{Learning to stabilize with policy gradient:} An alternative approach is to learn a stabilizing controller \emph{without} performing system identification. Recent work \cite{lamperski2020computing,perdomo2021stabilizing}  showed that 
a reformulation of the LQR problem that involves introducing an additional degree of freedom--a ``damping factor'', $\gamma\in (0,1]$, leads to an intuitive, iterative approach for constructing a stabilizing policy. Initially, setting $\gamma$ sufficiently small, the trivial zero policy stabilizes the underlying damped system. PG methods solve the damped LQR problem and produce an initial stabilizing policy for the subsequent discounted LQR problem. Once a stabilizing controller is obtained, $\gamma$ is incrementally increased, and the process is repeated as $\gamma$ goes to one. 

\cite{zhao2024convergence} provide an explicit update rule for $\gamma$, which allows for characterizing the sample complexity of LTS with discounted PG. In particular, it scales as $\mathcal{O}(\dx^2 \du)$ and becomes prohibitively large for high-dimensional systems where $\dx$ is large. In this work, we only focus on stabilizing the system's unstable modes which reduces the sample complexity to $\mathcal{O}(\ell^2 \du)$. We emphasize that \cite{werner2025system} consider policy optimization on the unstable subspace to learn a stabilizing policy; however, they do not provide finite-sample guarantees for either the unstable subspace representation learning or the resulting stabilizing policy.
\vspace{0.1cm}

\noindent $\bullet$ \textbf{Representation learning for control:} We also stress the difference between the control policy representation considered in this work and the low-rank representation of the system model in \cite{zhang2024sampleefficient, lee2024regret}. The low-rank representation of the system model captures the important features to be identified and potentially shared across multiple systems, enabling sample-efficient estimation \citep{zhang2024sampleefficient} and certainty-equivalent control \citep{lee2024regret}. We focus on a policy representation that captures the modes to stabilize (i.e., the unstable modes). In particular, it carries a physical interpretation as it spans the system's left unstable subspace.

\subsection{Notation}
We use $\rho(\cdot)$ and $\sigma_{\min}(\cdot)$ to denote the spectral radius and the minimum singular value of a matrix, respectively. $\|\cdot\|$ is the $\ell_2$ norm, $\|\cdot\|_{\psi_2}$ denotes the sub-Gaussian norm \citep{vershynin2018high}, and $\|\cdot\|_F$ is the Frobenius norm of a matrix. $\trace{(\cdot)}$ is the trace function. $\mathbb{S}^{d-1}$ denotes the unit sphere. $\kappa(A)$ denotes the condition number of the matrix with the eigenvectors of $A$ as columns. We use $\mathrm{col}(A)$ to denote the subspace spanned by the columns of $A$. We use the big-O notation $\mathcal{O}(\cdot)$ to omit constants and $\widetilde{\mathcal{O}}(\cdot)$ to omit logarithmic factors in the argument. Unless otherwise stated, expectation is always taken with respect to the initial state.

\section{Problem  Formulation}\label{sec:problem_formulation}

Consider the discrete-time linear time-invariant (LTI) system 
\begin{align}\label{eq:LTI_sys mb}
    x_{t+1} = Ax_t + Bu_t, \text{ for }t=0,1,2,\hdots, 
\end{align}
where $x_{t} \in \mathbb{R}^{\dx}$ denotes the state and $u_t \in \mathbb{R}^{\du}$ the control input. We assume that the initial state $x_0$ is drawn according to a zero mean and isotropic distribution, i.e., $\E[x_0] = 0$, $\E[x_0x^\top_0] = I$, with $\|x_0\| \leq \mu_0$ and $\|x_0\|_{\psi_2} \leq \mu_\psi$. Let $\{\lambda_1,\lambda_2, \ldots, \lambda_{\dx}\}$, with $|\lambda_1| \geq  \ldots |\lambda_\ell| >  1 >  |\lambda_{\ell +1}| \geq  \ldots  \geq |\lambda_{\dx}|$, denote the eigenvalues of the drift matrix $A$. We focus on the setting where the system matrix $A$ is open-loop unstable, with $\ell \ll \dx$ unstable modes $\{\lambda_1, \ldots, \lambda_\ell\}$. We assume that \eqref{eq:LTI_sys mb} is stabilizable, which ensures the existence of a state feedback gain $K \in \mathbb{R}^{\du \times \dx}$ such that $\rho(A + BK) < 1$. 

\vspace{0.2cm}
\noindent \textbf{Goal:} Construct a stabilizing controller $K$ that defines a linear policy of the form $u_t = K x_t$, using policy gradient methods \citep{fazel2018global}, without requiring access to the system matrices $(A, B)$.

\subsection{Discounted Linear Quadratic Regulator Problem}
Given a ``discount factor'' $\gamma \in (0,1]$, the discounted LQR problem is described as follows:
\begin{align}\label{eq:discounted_LQR mb}
    &\text{min}_{K \in \mathcal{K}}\left\{J^\gamma (K) := \E\left[ \sum_{t=0}^{\infty} \gamma^t x^\top_t \left(Q  + K^\top RK\right)x_t \right]\right\}, \text{ subject to } \eqref{eq:LTI_sys mb} \text{ with } u_t = Kx_t,
\end{align}
where $\mathcal{K} := \{K\mid \rho(A+BK) < 1\}$ denotes the set of stabilizing controllers, and $(Q, R)$ are positive definite matrices. It is important to emphasize that in our problem setup, the cost matrices $(Q, R)$ are ``artificial" design parameters that will be used in the implementation of our solution method. Our goal \emph{is not} to learn an optimal control policy with respect to a specific cost, but rather to learn a controller that ensures the stability of \eqref{eq:LTI_sys mb}. By rescaling the state $x_t$ by $\gamma^{t/2}$, i.e., $\tilde{x}_t = \gamma^{t/2}x_t$, the discounted LQR problem \eqref{eq:discounted_LQR mb} is equivalent to  %\JA{It's still weird/bordering on wrong to use the same $J^\gamma$ for (2) and $(3)$. $J^{\gamma}$ is a function not an optimization problem. I think you need to say something along the lines of the ``(3) is the form of the problem we work with from this point onward''. }
\begin{align}\label{eq:LQR_with_damped_sys mb}
    &\text{min}_{K \in \mathcal{K}^\gamma} \left\{J^{\gamma}(K) :=  \E\left[ \sum_{t=0}^{\infty} \tilde{x}^\top_t\left(Q + K^\top RK\right)\tilde{x}_t \right]\right\}, \text{ subject to } \tilde{x}_{t+1} = (A^{\gamma}+ B^{\gamma}K)\tilde{x}_t,
\end{align}
where $\mathcal{K}^\gamma := \{K \mid \rho(A^\gamma+B^\gamma K) < 1\}$, with damped system matrices $A^{\gamma} := \sqrt{\gamma}A$, $B^\gamma := \sqrt{\gamma}B$. 

Note that by setting $\gamma$ sufficiently small, in particular, $\gamma < 1/\rho^2(A)$, the trivial controller $K \equiv 0$ stabilizes the underlying discounted LQR problem. However, such a control gain may not be stabilizing for the original system (i.e., for $\gamma = 1$). In fact, what allows us to design a stabilizing controller by solving a sequence of discounted LQR problems is the appropriate incremental update of $\gamma$. Let $\gamma_j$ denote the discount factor at iteration $j \in \mathbb{N}$.  \cite{zhao2024convergence}  showed that by repeating the following process (while $\gamma_{j+1} <1$): 
\begin{enumerate}
    \item Compute a controller $K_{j+1}$ by solving~\eqref{eq:discounted_LQR mb} such that $J^{\gamma_j}(K_{j+1}) \leq \bar{J}$,
    \item Update the discount factor: $\gamma_{j+1} = (1+\xi\alpha_j)\gamma_j$,
\end{enumerate}
 a stabilizing controller $K \in \mathcal{K}$ is found within a finite number of iterations of the above process. Here, $\xi \in (0,1)$ is the decay factor, $\bar{J}$ is a uniform bound of the discount LQR cost, and $\alpha_j > 0$ is the discount factor update rate. We elaborate on the role and selection of each of these quantities in Section~\ref{sec:LTS on the unstable subspace}, where we introduce our method for learning a stabilizing controller on the unstable subspace. For now, it is important to highlight that such explicit discount method comes with a sample complexity that scales quadratically with the system's state dimension, i.e., $\widetilde{\mathcal{O}}(\dx^2 \du)$, thus limiting its applicability for high-dimensional systems where data collection is difficult and thus data is scarce (e.g., robot manipulation \citep{billard2019trends}).

However, high-dimensional unstable systems often possess only a small number of unstable modes, as in our setting of interest $\ell \ll \dx$. That observation motivates the following question: \emph{Can we apply the discount method directly on the unstable subspace, aiming to stabilize only the small portion of the state space associated with the unstable dynamics?} We answer this question in the affirmative. For this purpose, we introduce a linear parameterization of $K$ for stabilizing the unstable modes of \eqref{eq:LTI_sys mb} independently from its stable dynamics.

\subsection{Stabilizing Only the Unstable Modes}

Let $\Omega := [\Phi \;\ \Phi_\perp]$ be an orthonormal basis of $\mathbb{R}^{\dx}$, where the columns of $\Phi \in \mathbb{R}^{\dx \times \ell}$ span the \emph{left} eigenspace corresponding to the unstable modes of $A$. We  refer to this as the ``left unstable subspace of $A$'', and  to $\Phi$ as the ``unstable subspace representation''. Hence, we can write the following:
$$
\Omega^\top A\Omega = \begin{bmatrix}
        A_u  & \\
        \Delta & A_s
\end{bmatrix}, \text{ with } A_u = \Phi^\top A \Phi, ~\Delta = \Phi^\top_\perp A \Phi, \text{ and } A_s = \Phi^\top_\perp A \Phi_\perp,
$$
where $A_u$ represents the unstable dynamics of $A$, as it has the spectrum of the Jordan blocks corresponding to the unstable eigenvalues of $A$. On the other hand, $A_s$ inherits all stable modes of $A$. The matrix $\Delta$ represents the ``coupling" of the stable and unstable dynamics arising from the $\mathrm{col}(\Phi) 
$ $\oplus$ $\mathrm{col}(\Phi_\perp)$ decomposition. We also note that $\Delta \equiv 0$ when $A$ is symmetric.   
\vspace{0.15cm}

\noindent \textbf{Controller representation:} Suppose that $K$ is linearly decomposed into a low-dimensional control gain $\theta \in \mathbb{R}^{\du \times \ell}$ and the left unstable subspace representation $\Phi$, namely, $K = \theta \Phi^\top$. The closed-loop system matrix $A+BK$ can then be written as  
$$
    A + BK = \Omega\begin{bmatrix}
        A_u + B_u\theta  & \\
        \Delta + B_s\theta & A_s
\end{bmatrix}\Omega^\top := ~\Omega \bar{A} \Omega^\top, \text{ where } B_u = \Phi^\top B \text{ and } B_s = \Phi^\top_\perp B.
$$

From the above decomposition, it suffices to stabilize the low-dimensional unstable dynamics described by $(A_u,B_u)$ to guarantee the stability of $(A,B)$.  Hence, one may reduce the problem of stabilizing $(A,B)$ through designing $K$, to that of stabilizing $(A_u,B_u)$ by finding a low-dimensional controller $\theta$ such that  $\rho(A_u+B_u\theta) < 1$. Intuitively, the reduction in the control space should also yield a reduction in the sample complexity of learning the stabilizing controller.     

\begin{remark}
One might naturally ask: “Why not decompose $K$ with respect to the right unstable subspace of $A$ instead?” We emphasize that doing so introduces the coupling term $\Delta$ in the top-right block of the decomposition of $A$, as it appears in \cite{hu2022sample, zhang2024learning}. This disrupts the triangular structure of $\bar{A}$ and thus $\Delta$ incurs a bias in the spectral radius of the closed-loop system matrix. As a result, the condition of stabilizing $(A,B)$ via the stabilization of $(A_u,B_u)$ would only be guaranteed if $\|\Delta\|$ is sufficiently small. Therefore, if $\|\Delta\|$ is large, its inevitable effect in $\rho(\bar{A})$ due to the right unstable subspace parameterization would lead to an inflation in the sample complexity or it may even prevent us from stabilizing the \eqref{eq:LTI_sys mb}, as seen in \cite{hu2022sample, zhang2024learning}.  That is not the case in this work since we operate with the left unstable subspace.
\end{remark}

\subsection{Low-Dimensional Discounted LQR Problem}

Given the left unstable subspace representation $\Phi$, let $z_t \in \mathbb{R}^{\ell}$ denote the low-dimensional state that represents $x_t$ on the subspace spanned by the columns of $\Phi$, i.e., $x_t = \Phi z_t$. The low-dimensional unstable dynamics of \eqref{eq:LTI_sys mb} evolve according to the system
\begin{align}\label{eq:low-dimensional-LTI mb}
    z_{t+1} = A_uz_t + B_uu_t,\quad  \text{ for } t = 0,1,2,\ldots, 
\end{align}  
where $z_0$ is also drawn from a zero mean and isotropic distribution since $\Phi$ is orthonormal. We can now write the discounted LQR problem on the unstable subspace in the form of~\eqref{eq:LQR_with_damped_sys mb} as follows:
\begin{align}\label{eq:cost low dimensional mb}
&\text{min}_{\theta \in \Theta^\gamma} \left\{J^{\gamma}(\theta, \Phi) :=  \E\left[ \sum_{t=0}^{\infty} z^\top_t\left( \Phi^\top Q \Phi + \theta^\top R\theta \right) z_t \right]\right\}, \text{ subject to } z_{t+1} = (A^\gamma_u + B^\gamma_u\theta)z_t,
\end{align}
where $\Theta^\gamma := \{\theta \mid \sqrt{\gamma}\rho(A^\gamma_u + B^\gamma_u \theta) < 1\}$ is the set of stabilizing controllers for the damped unstable dynamics $A^\gamma_u := \sqrt{\gamma}A_u$ and $B^\gamma_u := \sqrt{\gamma}B_u$. Let $\nabla J^\gamma (\theta, \Phi)$ be the gradient with respect to $\theta$, then
$$
    \nabla J^\gamma (\theta, \Phi) = \nabla J^\gamma(\theta \Phi^\top) \Phi = 2E_\theta\Sigma_\theta,
$$
with 
\begin{equation*}
E_\theta := (R+B^{\gamma \top}_u P^\gamma_\theta B^{\gamma}_u)\theta + B^{\gamma\top}_u P^\gamma_\theta A^{\gamma}_u,  \text{ where } P^\gamma_\theta = \Phi^\top Q \Phi + \theta^\top R \theta + (A^\gamma_u + B^\gamma_u \theta )^\top  P^\gamma_\theta (A^\gamma_u + B^\gamma_u \theta),
\end{equation*}
and closed-loop state covariance $\Sigma_\theta := \E\left[\sum_{t=0}^\infty z_tz^\top_t\right]$. With a slight abuse of notation, we write $J^\gamma(\theta) := J^\gamma(\theta,\Phi)$ and note that the discounted LQR cost can be written as $J^\gamma(\theta) = \trace{(P^\gamma_\theta)}$. 

\begin{definition}\label{definition:sublevel-set of stabilizing controllers} Given a discount factor $\gamma \in (0,1]$ and scalar $\mu_s > 0$. Let $\mathcal{S}^\gamma_\theta$ denote a sublevel set of $\Theta^\gamma$, $\mathcal{S}^\gamma_\theta \subseteq \Theta^\gamma$, with $\mathcal{S}^\gamma_\theta := \left\{ \theta \mid J^\gamma(\theta) - J^\gamma(\theta^\star) \leq \mu_s\left(J^\gamma(\theta_0) - J^\gamma(\theta^\star)\right)\right\}$, where $\theta^\star$ is the optimal controller of the underlying low-dimensional discounted LQR problem \eqref{eq:cost low dimensional mb}.
\end{definition}

Similarly, $\mathcal{S}^\gamma_K$ denotes the sublevel set of $\mathcal{K}^\gamma$ for the high-dimensional LQR problem \eqref{eq:discounted_LQR mb}. We use $J^\gamma_\star$ to denote the optimal cost. Let $\phi$, $\nu_\theta$, $L_\theta$, $L_K$ and $\mu_{\text{PL}}$ be positive constants. The following properties of $J^\gamma(\theta)$ and $J^\gamma(K)$ hold in their respective stabilizing sublevel sets, $\mathcal{S}^\gamma_\theta$ and $\mathcal{S}^\gamma_K$. 

\begin{lemma}\label{lemma:uniform bounds and lipschitz mb} Given high-dimensional and low-dimensional stabilizing controllers $K, K^\prime \in \mathcal{S}^\gamma_K$ and $\theta, \theta^\prime \in \mathcal{S}^\gamma_\theta$, respectively.  It holds that $\|\nabla J^\gamma(K)\| \leq \phi, \|\theta\| \leq \nu_\theta$, and 
\begin{align*}
&\left\|\nabla J^\gamma\left(\theta\right)-\nabla J^\gamma(\theta^\prime)\right\|_F \leq L_\theta\|\theta - \theta^\prime\|_F, \quad \left\|\nabla J^\gamma\left(K\right)-\nabla J^\gamma(K^\prime)\right\|_F \leq L_K\|K - K^\prime\|_F.
\end{align*}
\end{lemma}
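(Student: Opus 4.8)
The plan is to convert the defining inequality of the sublevel sets into uniform bounds on the Lyapunov solutions, and then read off every claimed quantity from those bounds; this mirrors the landscape analysis of \cite{fazel2018global}, specialized to the damped pairs $(A^\gamma,B^\gamma)$ and $(A^\gamma_u,B^\gamma_u)$. By Definition~\ref{definition:sublevel-set of stabilizing controllers}, every $\theta\in\mathcal{S}^\gamma_\theta$ satisfies $J^\gamma(\theta)\le \mu_s\big(J^\gamma(\theta_0)-J^\gamma(\theta^\star)\big)+J^\gamma(\theta^\star)=:\bar J$, and likewise on $\mathcal{S}^\gamma_K$, so I begin with a finite a priori cost bound. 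The high-dimensional gradient takes the analogous form $\nabla J^\gamma(K)=2E_K\Sigma_K$, with $E_K=(R+B^{\gamma\top}P^\gamma_K B^\gamma)K+B^{\gamma\top}P^\gamma_K A^\gamma$, where $P^\gamma_K$ and $\Sigma_K=\E[\sum_{t=0}^\infty x_tx_t^\top]$ solve the Lyapunov equations of the full damped closed loop.

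\textbf{Step 1 (uniform bounds on the Lyapunov solutions).} Since $J^\gamma(\theta)=\trace(P^\gamma_\theta)$ and $P^\gamma_\theta\succeq 0$, the cost bound gives $\|P^\gamma_\theta\|\le\trace(P^\gamma_\theta)\le\bar J$. Writing the cost in dual form $J^\gamma(\theta)=\trace\big((\Phi^\top Q\Phi+\theta^\top R\theta)\Sigma_\theta\big)$ and using $Q\succ0$ together with the orthonormality of $\Phi$ (so $\Phi^\top Q\Phi\succeq\sigma_{\min}(Q)I$) yields $\|\Sigma_\theta\|\le\trace(\Sigma_\theta)\le\bar J/\sigma_{\min}(Q)$. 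Because $z_0$ is isotropic, $\Sigma_\theta\succeq\E[z_0z_0^\top]=I$, whence $J^\gamma(\theta)\ge\sigma_{\min}(R)\trace(\theta^\top\theta\,\Sigma_\theta)\ge\sigma_{\min}(R)\|\theta\|_F^2$, giving the controller-norm bound $\|\theta\|\le\nu_\theta:=\sqrt{\bar J/\sigma_{\min}(R)}$. The identical argument applied to $K$ bounds $\|P^\gamma_K\|$, $\|\Sigma_K\|$ and $\|K\|$ on $\mathcal{S}^\gamma_K$.

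\textbf{Step 2 (gradient bound).} Applying the triangle inequality to $E_K$ and using $\|A^\gamma\|=\sqrt{\gamma}\|A\|$, $\|B^\gamma\|=\sqrt{\gamma}\|B\|$ together with the Step~1 bounds on $\|P^\gamma_K\|$ and $\|K\|$ gives a uniform bound on $\|E_K\|$; multiplying by the bound on $\|\Sigma_K\|$ produces $\|\nabla J^\gamma(K)\|\le\phi$ for an explicit $\phi$ depending only on $(\bar J,\|A\|,\|B\|,\sigma_{\min}(Q),\sigma_{\min}(R),\gamma)$.

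\textbf{Step 3 (Lipschitz continuity --- the crux).} Since the gradient is the product of the maps $K\mapsto E_K$ and $K\mapsto\Sigma_K$, both uniformly bounded by Steps 1--2, it suffices to show that $P^\gamma_K$ and $\Sigma_K$ are Lipschitz in $K$ over the (bounded) sublevel set and then invoke the product rule for bounded Lipschitz maps. The obstacle is that Lyapunov perturbation bounds require a \emph{uniform} bound on the inverse Lyapunov operator $(\mathcal{I}-\mathcal{L}_K)^{-1}$, where $\mathcal{L}_K(X)=(A^\gamma+B^\gamma K)X(A^\gamma+B^\gamma K)^\top$. The key observation is that this inverse-operator norm is itself controlled by $\|\Sigma_K\|$, since $\Sigma_K=(\mathcal{I}-\mathcal{L}_K)^{-1}(I)$ and the initial covariance is $I\succ0$; Step~1 already bounds $\|\Sigma_K\|$ uniformly on $\mathcal{S}^\gamma_K$. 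With this uniform inverse-operator bound in hand, I expand the closed-loop difference as $(A^\gamma+B^\gamma K)-(A^\gamma+B^\gamma K')=B^\gamma(K-K')$ and propagate it through the two Lyapunov equations, obtaining $\|P^\gamma_K-P^\gamma_{K'}\|\le c_P\|K-K'\|$ and $\|\Sigma_K-\Sigma_{K'}\|\le c_\Sigma\|K-K'\|$. Combining these with the uniform norm bounds of Steps 1--2 delivers the constant $L_K$, and repeating the entire argument for the damped unstable pair $(A^\gamma_u,B^\gamma_u)$ on $\mathcal{S}^\gamma_\theta$ delivers $L_\theta$. I expect Step~3 to carry the real work: the delicate point is that the perturbation estimates must hold uniformly over the sublevel set rather than pointwise, which is precisely what the cost-to-inverse-Lyapunov-operator bound guarantees; the remaining estimates in Steps 1--2 are routine given the quadratic structure of the cost and the orthonormality of $\Phi$.
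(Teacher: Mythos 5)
Your proposal is correct and takes essentially the same route as the paper: the paper offers no proof of this lemma, instead attributing it (via the remark that follows it) to \cite{fazel2018global} as revisited by \cite{gravell2020learning}, and your three steps---the sublevel-set cost bound giving uniform bounds on $P^\gamma_\theta$, $\Sigma_\theta$, and $\|\theta\|$; the gradient bound from $\nabla J^\gamma = 2E_K\Sigma_K$; and the Lipschitz estimate obtained by controlling the inverse Lyapunov operator $(\mathcal{I}-\mathcal{L}_K)^{-1}$ through $\|\Sigma_K\|$---are precisely the standard landscape analysis carried out in those references, specialized to the damped pairs $(A^\gamma,B^\gamma)$ and $(A^\gamma_u,B^\gamma_u)$. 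The only point left implicit is the uniformity of the constants over $\gamma\in(0,1)$, which the paper handles by definition (taking the constants as uniform bounds over all $\gamma$) and which follows in your argument since $\|A^\gamma\|\le\|A\|$ and $\|B^\gamma\|\le\|B\|$.
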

\begin{lemma}\label{lemma:gradient dominance mb}
Given a stabilizing controller $\theta \in \mathcal{S}^\gamma_\theta$. It holds that $\|\nabla J^\gamma(\theta)\|_F^2 \geq \mu_{\text{PL}} (J^\gamma(\theta)-J^\gamma(\theta^\gamma_\star))$. 
\end{lemma}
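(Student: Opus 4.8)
The plan is to establish Lemma~\ref{lemma:gradient dominance mb} as an instance of the Polyak--{\L}ojasiewicz (gradient domination) property for LQR, following the template of \cite{fazel2018global} but specialized to the low-dimensional, discounted dynamics $(A^\gamma_u, B^\gamma_u)$ with state weight $\Phi^\top Q \Phi$ and control weight $R$. The starting observation is that the rescaled problem \eqref{eq:cost low dimensional mb} is itself a bona fide (undiscounted) LQR instance in the variable $z_t\in\mathbb{R}^\ell$: drift $A^\gamma_u$, input matrix $B^\gamma_u$, state weight $\Phi^\top Q \Phi \succ 0$ (positive definite because $Q \succ 0$ and $\Phi$ has full column rank), control weight $R\succ 0$, and isotropic initial state so that $J^\gamma(\theta) = \trace(P^\gamma_\theta)$ with $\Sigma_\theta \succeq \E[z_0z_0^\top]=I$. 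Consequently the whole argument transfers after the substitution $A\mapsto A^\gamma_u$, $B \mapsto B^\gamma_u$, $Q \mapsto \Phi^\top Q\Phi$, and the main work is bookkeeping rather than new analysis.

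First I would record the cost difference (performance difference) identity: for any $\theta,\theta' \in \Theta^\gamma$, writing $M_\theta := R + B^{\gamma\top}_u P^\gamma_\theta B^\gamma_u \succeq R \succ 0$,
$$J^\gamma(\theta') - J^\gamma(\theta) = \trace\!\left(\Sigma_{\theta'}\big[\,2(\theta'-\theta)^\top E_\theta + (\theta'-\theta)^\top M_\theta(\theta'-\theta)\,\big]\right),$$
obtained by telescoping the Lyapunov recursion defining $P^\gamma_\theta$ along trajectories of the closed loop $A^\gamma_u + B^\gamma_u\theta'$. Setting $\theta' = \theta^\gamma_\star$, $\Delta := \theta^\gamma_\star - \theta$, and using the overall sign flip together with the completion of the square $2\Delta^\top E_\theta + \Delta^\top M_\theta \Delta = (\Delta + M_\theta^{-1}E_\theta)^\top M_\theta(\Delta + M_\theta^{-1}E_\theta) - E_\theta^\top M_\theta^{-1}E_\theta$, the nonnegativity of the completed-square term (since $\Sigma_{\theta^\gamma_\star}\succeq 0$ and $M_\theta \succ 0$) yields
$$J^\gamma(\theta) - J^\gamma(\theta^\gamma_\star) \leq \trace\!\left(\Sigma_{\theta^\gamma_\star}E_\theta^\top M_\theta^{-1}E_\theta\right) \leq \frac{\|\Sigma_{\theta^\gamma_\star}\|}{\sigma_{\min}(R)}\,\|E_\theta\|_F^2,$$
where the last step bounds the trace by $\|\Sigma_{\theta^\gamma_\star}\|$ and uses $\sigma_{\min}(M_\theta)\geq\sigma_{\min}(R)$.

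It then remains to convert $\|E_\theta\|_F^2$ into $\|\nabla J^\gamma(\theta)\|_F^2$. Since $\nabla J^\gamma(\theta) = 2E_\theta\Sigma_\theta$, submultiplicativity gives $\|\nabla J^\gamma(\theta)\|_F \geq 2\sigma_{\min}(\Sigma_\theta)\|E_\theta\|_F$. The observation that keeps the constant uniform over $\mathcal{S}^\gamma_\theta$ is that the $t=0$ term of the covariance already forces $\Sigma_\theta \succeq \E[z_0 z_0^\top]=I$, hence $\sigma_{\min}(\Sigma_\theta)\geq 1$ for every stabilizing $\theta$, so $\|E_\theta\|_F^2 \leq \tfrac14\|\nabla J^\gamma(\theta)\|_F^2$. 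Combining the two displays yields the claim with $\mu_{\text{PL}} = 4\sigma_{\min}(R)/\|\Sigma_{\theta^\gamma_\star}\|$, which is strictly positive because, for each fixed $\gamma$, $\theta^\gamma_\star$ stabilizes the damped dynamics and therefore $\|\Sigma_{\theta^\gamma_\star}\|$ is finite.

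I expect the only genuine obstacles to be the two verifications that make the constant legitimate and uniform, rather than anything conceptual. The first is establishing the cost difference identity for the matrix-valued $\theta$ with the damped matrices, which requires confirming $J^\gamma(\theta)=\trace(P^\gamma_\theta)$ (valid since $\E[z_0z_0^\top]=I$) and that $P^\gamma_\theta$ solves the stated Lyapunov equation on all of $\Theta^\gamma$. The second is the uniformity of $\mu_{\text{PL}}$ over the sublevel set $\mathcal{S}^\gamma_\theta$: one must check that $\sigma_{\min}(R)$ and $\|\Sigma_{\theta^\gamma_\star}\|$ do not degrade across $\mathcal{S}^\gamma_\theta$ --- they do not, as they depend only on the fixed weight $R$ and on the optimum $\theta^\gamma_\star$ --- and, if a $\gamma$-uniform constant is ultimately needed in the downstream analysis, that $\|\Sigma_{\theta^\gamma_\star}\|$ stays bounded over the range of discount factors used by the annealing scheme.
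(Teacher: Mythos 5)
Your proposal is correct and follows essentially the same route as the paper: the paper does not re-derive this inequality but, as stated in its remark, imports the gradient-dominance property of \cite{fazel2018global} (revisited by \cite{gravell2020learning}) after observing that \eqref{eq:cost low dimensional mb} is itself a standard LQR instance in $(A^\gamma_u, B^\gamma_u, \Phi^\top Q\Phi, R)$ with isotropic initial state, which is exactly the reduction you make before reconstructing Fazel et al.'s cost-difference/completion-of-squares argument. Your explicit constant $\mu_{\text{PL}} = 4\sigma_{\min}(R)/\|\Sigma_{\theta^\gamma_\star}\|$ and the flagged check that it remains uniform over $\gamma$ (which holds since $\|\Sigma_{\theta^\gamma_\star}\| \leq J^\gamma_\star/\sigma_{\min}(\Phi^\top Q \Phi) \leq J^1_\star/\sigma_{\min}(Q)$) are consistent with the paper's convention of defining $\mu_{\text{PL}}$ as a uniform bound over all stabilizing controllers and discount factors.
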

 
\begin{remark} Lemmas \ref{lemma:uniform bounds and lipschitz mb} and \ref{lemma:gradient dominance mb} were originally proved by \cite{fazel2018global} and subsequently revisited by \cite{gravell2020learning}, where the explicit expression of the problem dependent constants $\phi$, and $\nu_\theta$ are provided. We define here $\phi$, $\nu_\theta$, $L_\theta$, $L_K$, and $\mu_{\text{PL}}$ as the uniform bound over the set of all stabilizing controllers, i.e., either $\mathcal{S}^\gamma_\theta$ or $\mathcal{S}^\gamma_\theta$, for any $\gamma \in (0,1)$.  
\end{remark}

We conclude this section by recalling that our setting is model-free, and therefore the left unstable subspace representation $\Phi$ cannot be accessed directly. In the following section, we show that an accurate estimate of $\Phi$, denoted by $\widehat{\Phi}$, can be recovered when a sufficient amount of trajectory data is collected. The accuracy of this estimate is quantified using the subspace distance between the column spaces of $\widehat{\Phi}$ and $\Phi$, as defined in \cite{stewart1990matrix}. 

\begin{definition}\label{def:subspace_distance} 
Let $\widehat{\Pi} = \widehat{\Phi}\widehat{\Phi}^\top$ and $\Pi = \Phi\Phi^\top$ be orthogonal projectors onto the column spaces of $\widehat{\Phi}$ and $\Phi$, respectively. The subspace distance between $\Phi$ and $\hat{\Phi}$ is $d(\widehat{\Phi}, \Phi) \triangleq\|\hat{\Phi}^{\top} \Phi_{\perp}\| = \|\widehat{\Pi} - \Pi\|$. 
\end{definition}

\section{Learning the Left Unstable Representation} \label{sec: learning the left unstable reprensentation}

\noindent \textbf{Sampling from the adjoint system:} To learn an estimate of the left unstable subspace representation, we proceed by first collecting data from the autonomous adjoint system of \eqref{eq:LTI_sys mb}, i.e., $x_{t+1}=A^\top x_t$ \citep{kouba2020adjoint}. To do so, we perform element-wise computations with the adjoint operator while forward simulating \eqref{eq:LTI_sys mb} accordingly. Note that for any real-valued matrix $A \in \mathbb{R}^{\dx \times \dx}$ and vectors $x,y \in \mathbb{R}^{\dx}$, we have $ \langle Ax,y\rangle = \langle x, A^\top y\rangle$. Therefore, by playing \eqref{eq:LTI_sys mb} with zero input $u_0 \equiv 0$ and initial condition $x_0 = e_i$, where $\{e_i\}_{i=1}^{\dx}$ is the canonical basis of $\mathbb{R}^{\dx}$, we collect and store $e^+_i = Ae_i$ to obtain
$$ (A^\top x)_i := \langle e_i, A^\top x\rangle = \langle e^+_i,x\rangle, \forall i \in \{1,2,\ldots,\dx\},$$
which implies that the next adjoint state is $x_{t+1} = \left[ x^\top_t e^+_1  \ldots  x^\top_t e^+_{\dx} \right]^\top$. Hence, the next adjoint state is derived from the previous state $x_t$ and samples $\{e^+_i\}_{i=1}^{\dx}$ collected by interacting with \eqref{eq:LTI_sys mb}. 

\vspace{0.2cm}

\noindent \textbf{Goal:} Construct an estimation for the left unstable subspace of $A$ from $T$ data samples collected from the autonomous adjoint system $D = \left[x_1, x_2, \ldots, x_T   \right] \in \mathbb{R}^{\dx\times T}$.

\vspace{0.2cm}

\noindent \textbf{Estimating the left unstable subspace:} We proceed by computing the singular value decomposition $D = U\Sigma V^\top$. An estimation of the orthonormal basis of the right unstable subspace of $A^\top$ (or left unstable subspace of $A$) is obtained from the range of the top $\ell$ columns of $U$, i.e., $\widehat{\Phi} = [u_1, \ldots, u_\ell]$. We now show that $d(\widehat{\Phi}, \Phi)$ becomes sufficiently small as the trajectory length $T$ increases. To establish this result, we leverage a similar approach to \cite[Theorem 5.1]{zhang2024learning}, with two key distinctions: our setting accommodates \emph{non-diagonalizable} system matrices $A$, and our estimation focuses on the \emph{left unstable subspace representation}.

Let $\Psi \in \mathbb{R}^{\dx \times (\dx - \ell)}$ denote an orthonormal basis for the left stable subspace of $A$, and define $\Xi = \left[\Phi \;\ \Psi \right]$, which contains the left eigenvectors corresponding to the unstable and stable modes of $A$. These may include generalized eigenvectors, accounting for $A$ to be non-diagonalizable. Hence, there exists matrices $\Lambda_u \in \mathbb{R}^{\ell \times \ell}$ and $\Lambda_s \in \mathbb{R}^{(\dx - \ell)\times (\dx - \ell)}$ with the same spectrum of the Jordan blocks corresponding to the unstable and stable modes of $A$, respectively. As a result, we can write
$$
    A^\top \left[\Phi\;\ \Psi\right] =  \left[\Phi\;\ \Psi\right] \begin{bmatrix} \Lambda_u &  \\
                                                & \Lambda_s   
                        \end{bmatrix}, \text{ and define } \Xi ^{-1}:= S = [S_1^\top\;\ S^\top_2 ]^\top  \text{ to obtain }
$$
$$
    D = \Xi SD = \left[\Phi \;\   \Psi \right]\begin{bmatrix}
        S_1D\\ S_2D
    \end{bmatrix}  = \Phi D_1 + \Psi D_2 = D_u + D_s,
$$
where $D_1 = S_1D$ and $D_2 = S_2D$. We note that $D=D_u+D_s$ is composed of $D_u = \Phi D_1$ that comes from the unstable dynamics of $A$ and $D_s = \Psi D_2$ that depends on the stable counterpart. 

Let us first analyze $D_u$ by using the singular value decomposition of $D_1$, i.e., $D_u = \Phi D_1 = \Phi U_1\Sigma_1 V^\top_1$, with $U_1 \in \mathbb{R}^{\ell \times \ell}$, $\Sigma_1 \in \mathbb{R}^{\ell \times \ell}$, and $V_1 \in \mathbb{R}^{T \times \dx}$. Note that $\widehat{\Pi}$ is the projector onto the subspace spanned by the top $\ell$ columns of $U$, whereas $\Pi$ projects onto the subspace spanned by the columns of $\Phi U_1$. The following lemma  characterizes the distance between these subspaces.

\begin{lemma}\label{lemma:subspace-distance} Let $\sigma_\ell$ be the $\ell$-th singular value of $D_u$ and $\hat{\sigma}_{\ell+1}$ the $\ell+1$-th singular value of $D$. Then,
$$
     d(\widehat{\Phi},\Phi) \leq  \frac{\sqrt{2\ell}\sqrt{T}(\dx - \ell)\mu_0}{(\sigma_\ell - \hat{\sigma}_{\ell+1})(1 - |\lambda_{\ell+1}|)},
$$ 
where $d(\cdot)$ is the subspace distance as defined in Definition \ref{def:subspace_distance}.
\end{lemma}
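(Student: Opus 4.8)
The plan is to treat the clean matrix $D_u = \Phi D_1$, whose column space is exactly the left unstable subspace $\mathrm{col}(\Phi)$, as the unperturbed object and the stable part $D_s = \Psi D_2$ as a perturbation, and then invoke a Wedin-type $\sin\Theta$ bound. Concretely, I would reduce the claim to two essentially independent estimates: (i) a subspace perturbation bound of the form $d(\widehat{\Phi},\Phi) \lesssim \|D_s\|/(\sigma_\ell - \hat{\sigma}_{\ell+1})$ relating the top-$\ell$ left singular subspaces of $D = D_u + D_s$ and of $D_u$; and (ii) a deterministic bound $\|D_s\| \lesssim \sqrt{T}(\dx-\ell)\mu_0/(1-|\lambda_{\ell+1}|)$ on the size of the stable component of the adjoint trajectory. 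Multiplying (i) and (ii) and tracking the projector constants yields the stated inequality.

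For step (i), I would first record that, writing the SVD $D_u = \Phi U_1 \Sigma_1 V_1^\top$, the projector $\Pi = \Phi\Phi^\top = (\Phi U_1)(\Phi U_1)^\top$ is precisely the orthogonal projector onto the top-$\ell$ left singular subspace of $D_u$, provided $D_u$ has rank $\ell$ (equivalently $\sigma_\ell > 0$, which is exactly what Lemma~\ref{lemma: lower bound on sigma_l} supplies). Since $\sigma_{\ell+1}(D_u) = 0$, Weyl's inequality gives $\hat{\sigma}_{\ell+1} = \sigma_{\ell+1}(D) \le \|D_s\|$, so that the relevant Wedin gap $\delta = \sigma_\ell - \hat{\sigma}_{\ell+1}$ is the quantity appearing in the denominator. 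Wedin's $\sin\Theta$ theorem applied to the left singular subspaces of $D_u$ and $D = D_u + D_s$ then gives $\|\widehat{\Pi} - \Pi\| \le \|D_s\|/\delta$; the factor $\sqrt{2\ell}$ reflects the constant in the particular Wedin/Davis--Kahan estimate used and the conversion between the spectral subspace distance of Definition~\ref{def:subspace_distance} and the Frobenius-type perturbation over the $\ell$-dimensional unstable subspace (as in the proof of Theorem 5.1 of \cite{zhang2024learning}).

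For step (ii), I would use the eigenstructure $A^\top\Psi = \Psi\Lambda_s$ to write the $t$-th column of $D_s$ as $x_t^{(s)} = \Psi\Lambda_s^t b_0$, where $b_0 = S_2 x_0$ is the stable component of the initial adjoint state and $\|x_0\|\le\mu_0$. Bounding the Frobenius norm across the $T$ columns, $\|D_s\| \le \|D_s\|_F \le \sqrt{T}\,\max_{1\le t\le T}\|x_t^{(s)}\|$, reduces the task to a uniform bound on $\|\Lambda_s^t\|$. Because $A$ need not be diagonalizable, I would pass to the Jordan form of $\Lambda_s$ and exploit only that its spectral radius equals $|\lambda_{\ell+1}|<1$; the resulting power-boundedness estimate contributes the decay factor $1/(1-|\lambda_{\ell+1}|)$, while the size of the stable Jordan blocks is absorbed into the dimension factor $(\dx-\ell)$, and $\|x_0\|\le\mu_0$ supplies the remaining factor.

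I expect the main obstacle to be step (ii) in the non-diagonalizable regime: a naive bound on $\|\Lambda_s^t\|$ via Jordan blocks introduces polynomial-in-$t$ prefactors and a dependence on eigenvector conditioning that must be controlled so as to collapse into the clean $(\dx-\ell)/(1-|\lambda_{\ell+1}|)$ factor, rather than the spectral-gap-dependent bounds of \cite{zhang2024learning}. A secondary subtlety is verifying the gap condition $\sigma_\ell > \hat{\sigma}_{\ell+1}$ required for Wedin to apply: this is exactly where the exponential growth of the unstable modes (lower-bounding $\sigma_\ell$ via Lemma~\ref{lemma: lower bound on sigma_l}) must dominate the bounded stable part $\hat{\sigma}_{\ell+1}\le\|D_s\|$. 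I would emphasize that the $\sqrt{T}$ in the numerator is harmless precisely because $\sigma_\ell$ grows exponentially in $T$ while $\|D_s\|$ grows only like $\sqrt{T}$, so the bound still tends to zero as data accumulates.
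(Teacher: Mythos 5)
Your proposal matches the paper's proof essentially step for step: the same decomposition $D = D_u + D_s$ with $D_u = \Phi D_1$ as the clean rank-$\ell$ object, the same Wedin-type perturbation bound with gap $\sigma_\ell - \hat{\sigma}_{\ell+1}$ (the paper implements it via the symmetric dilations $\mathcal{D}_u$ and $\mathcal{D}$ and its stated Davis--Kahan theorem, which is where the $\sqrt{2\ell}$ comes from, exactly as you anticipated), and the same bound $\|D_s\| \leq \|D_2\| \leq \sqrt{T}(\dx-\ell)\mu_0/(1-|\lambda_{\ell+1}|)$ on the stable component. The obstacle you flag in step (ii) --- polynomial-in-$t$ prefactors from nontrivial Jordan blocks in the stable part --- is a genuine looseness, but the paper does not resolve it either: its proof bounds the column sums of $D_2$ directly by $\sum_{i>\ell}\sum_t |\lambda_i|^t \mu_0$, silently ignoring those prefactors, so your version is no weaker than the paper's on this point.
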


\vspace{0.2cm}

The proof follows directly from Davis-Kahan theorem \citep{davis1970rotation} along with the following upper bound on $\|D_2\|$: 
$$
\hat{\sigma}_{\ell+1} \leq \|D_2\| \leq \sqrt{T}\sum_{i = \ell+1}^{\dx} \sum_{t=1}^{T}|\lambda_i|^t\|x_0\| \leq   \frac{\sqrt{T}(\dx - \ell)\mu_0}{1 - |\lambda_{\ell+1}|}.
$$
We refer the reader to Appendix \ref{Appendix:learning the left unstable subspace} for more details. It remains to characterize the scaling of $\sigma_\ell$ with respect to the trajectory length $T$.

\begin{lemma}\label{lemma: lower bound on sigma_l} Suppose that the number of samples collected from the adjoint system scales according to $T = \mathcal{O}\left(\log(\ell^7/\delta^3_\sigma)/\log(|\lambda_\ell|
)\right)$ for some $\delta_\sigma \in (0,1)$. Then, it holds that 
\begin{align*}
    \sigma_\ell \geq \frac{\sqrt{C_{\sigma}}|\lambda_\ell|^{T}\delta_{\sigma}}{2\sqrt{2}C_{\psi}\ell^{5/2}T^{3/2}}, \text{ with probability $1-4\delta_\sigma$}, \hspace{-0.05cm}\text{ where $C_{\sigma} \hspace{-0.05cm}= \hspace{-0.05cm}\mathcal{O}(1) \text{ and } C_{\psi} \hspace{-0.05cm}= \hspace{-0.05cm}\mathcal{O}(1)$.}
\end{align*}
 
\end{lemma}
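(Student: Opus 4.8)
The plan is to reduce the statement to a lower bound on the smallest singular value of a single Krylov matrix, and then to obtain the $|\lambda_\ell|^T$ scaling by passing to the \emph{inverse} dynamics, which---unlike $\Lambda_u$ itself---are stable.

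First I would use the block eigenstructure to give $D_u$ an explicit form. From $A^\top\Xi=\Xi\,\mathrm{diag}(\Lambda_u,\Lambda_s)$ and $S=\Xi^{-1}$ we get $S_1 A^\top=\Lambda_u S_1$, so the top coordinates $S_1 x_t = S_1 (A^\top)^t x_0$ satisfy $S_1 x_t=\Lambda_u^t w$ with $w:=S_1 x_0$. Hence $D_1=S_1 D=[\Lambda_u w,\ \Lambda_u^2 w,\ \ldots,\ \Lambda_u^T w]=\Lambda_u[w,\ \Lambda_u w,\ \ldots,\ \Lambda_u^{T-1}w]$. Because $\Phi$ is orthonormal, the lemma's quantity satisfies $\sigma_\ell=\sigma_\ell(D_u)=\sigma_\ell(\Phi D_1)=\sigma_\ell(D_1)$, so it suffices to lower bound the smallest singular value of this $\ell\times T$ Krylov matrix generated by the single random vector $w$.

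Second, I would discard all but the last $\ell$ columns. Writing $M:=[\Lambda_u^{T-\ell+1}w,\ \ldots,\ \Lambda_u^T w]=\Lambda_u^{T-\ell+1}\mathcal K$, with $\mathcal K:=[w,\ \Lambda_u w,\ \ldots,\ \Lambda_u^{\ell-1}w]$ the $\ell\times\ell$ controllability (Krylov) matrix of $(\Lambda_u,w)$, the PSD ordering $D_1 D_1^\top\succeq MM^\top$ yields $\sigma_\ell(D_1)\ge\sigma_{\min}(M)\ge\sigma_{\min}(\Lambda_u^{T-\ell+1})\,\sigma_{\min}(\mathcal K)$. The key step---and the one that makes the bound scale with the eigenvalue magnitude $|\lambda_\ell|$ rather than the (possibly far smaller, for non-normal $A$) singular value $\sigma_{\min}(\Lambda_u)$---is to note $\sigma_{\min}(\Lambda_u^{k})=\|\Lambda_u^{-k}\|^{-1}$ and that the inverse dynamics are stable, $\rho(\Lambda_u^{-1})=1/|\lambda_\ell|<1$. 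A Gelfand-type / Jordan-form bound then gives $\|\Lambda_u^{-k}\|\le\kappa_u\,\mathrm{poly}(k)\,|\lambda_\ell|^{-k}$, where $\kappa_u$ is the conditioning of the generalized Jordan basis of $\Lambda_u$ and the polynomial degree is set by the largest unstable Jordan block. Crucially this never uses separation of the $\lambda_i$, so it survives the non-diagonalizable case with coalescing eigenvalues. Hence $\sigma_{\min}(\Lambda_u^{T-\ell+1})\gtrsim |\lambda_\ell|^{T}/\mathrm{poly}(T,\ell)$, which furnishes the $|\lambda_\ell|^T$ factor and explains why no spectral-gap term appears, in contrast to \cite{zhang2024learning}.

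Third---the probabilistic and, I expect, most delicate part---I would lower bound $\sigma_{\min}(\mathcal K)$, which is linear in the random vector $w=S_1 x_0$. Since $x_0$ is isotropic and sub-Gaussian, $w$ has covariance $S_1 S_1^\top$ bounded below (supplying $C_\sigma$), and $\det\mathcal K$ is a degree-$\ell$ polynomial in $w$ that is not identically zero whenever $(\Lambda_u,w)$ is controllable. A small-ball / anti-concentration argument (of Carbery--Wright type for polynomials of sub-Gaussian vectors, with $C_\psi$ the relevant sub-Gaussian constant) then gives $\sigma_{\min}(\mathcal K)\gtrsim \delta_\sigma/\mathrm{poly}(\ell)$ off a failure event of probability $4\delta_\sigma$, which is the origin of the $\delta_\sigma$, $C_\psi$ and the powers of $\ell$ in the claim. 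The main obstacle is exactly this anti-concentration step: making it uniform over directions so that it controls $\sigma_{\min}$ (not merely the determinant), quantitatively tracking the $\ell$-dependence, and handling the Jordan structure inside $\mathcal K$, including the degenerate derogatory case where a single trajectory cannot excite all $\ell$ modes. Finally, the requirement $T=\mathcal O(\log(\ell^7/\delta_\sigma^3)/\log|\lambda_\ell|)$ enters to guarantee that the exponential factor $|\lambda_\ell|^T$ dominates the accumulated polynomial-in-$T$, polynomial-in-$\ell$, and $\delta_\sigma$ losses, so that the product of the three bounds collapses to the stated form.
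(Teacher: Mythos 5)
Your deterministic reduction is sound, and it is essentially the same mechanism the paper inherits from \cite{sarkar2019near}: the unstable block of the data matrix is a Krylov matrix in $w=S_1x_0$, the explosive factor is pulled out as a power of $\Lambda_u$, and its smallest singular value is controlled through the \emph{stable inverse} dynamics via $\sigma_{\min}(\Lambda_u^{k})=\|\Lambda_u^{-k}\|^{-1}\gtrsim|\lambda_\ell|^{k}/\mathrm{poly}(k)$, with no spectral-gap dependence --- this is indeed why the bound survives the non-diagonalizable case. One small repair: keeping the \emph{last} $\ell$ columns yields $|\lambda_\ell|^{T-\ell+1}$, not $|\lambda_\ell|^{T}$, a loss of $|\lambda_\ell|^{\ell-1}$; you can avoid it by factoring $\Lambda_u^{T}$ out of the last column, so that the residual Krylov block involves \emph{inverse} powers of $\Lambda_u$ --- which is exactly the matrix $H(v)=[v\ \ \Lambda_u^{-1}v\ \cdots\ \Lambda_u^{-T+1}v]$ that appears in the paper's proof.

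The genuine gap is your third (probabilistic) step. Carbery--Wright anti-concentration for the degree-$\ell$ polynomial $\det\mathcal{K}$ gives small-ball probability of order $\epsilon^{1/\ell}$, so at confidence $1-\delta_\sigma$ it only yields $|\det\mathcal{K}|\gtrsim(\delta_\sigma/\ell)^{\ell}$, and converting a determinant bound into a bound on $\sigma_{\min}(\mathcal{K})$ costs a further factor $\|\mathcal{K}\|^{-(\ell-1)}$; both losses are exponential in $\ell$, so this route cannot recover the claimed scaling, which is \emph{linear} in $\delta_\sigma$ and only polynomial ($\ell^{5/2}T^{3/2}$) in the dimensions. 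The paper never performs polynomial anti-concentration. In the lemma it imports from \cite{sarkar2019near}, randomness enters only through the $\ell$ \emph{linear} functionals $P_i^\top x_0$, whose densities have essential supremum bounded by $C_\psi$ (note: this is a density bound, not a sub-Gaussian norm as you suggest); a union bound gives $\min_i|P_i^\top x_0|\gtrsim\delta_\sigma/(2\ell C_\psi)$ with probability $1-\mathcal{O}(\delta_\sigma)$, and conditioned on that event the Krylov conditioning is handled by the purely \emph{deterministic} infimum over the outbox set, $\phi(A_u,T)=\bigl(\inf_{v\in S_\ell(1)}\sigma_{\min}(\sum_{t}\Lambda_u^{-t+1}vv^\top\Lambda_u^{-t+1,\top})\bigr)^{1/2}\geq\sqrt{C_\sigma}/(\ell T)^{3/2}$. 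This separation --- linear small-ball bounds plus a deterministic infimum, rather than anti-concentration of a determinant --- is the missing idea, and it is precisely what produces the factors $\delta_\sigma$, $\ell^{5/2}$, and $T^{3/2}$ in the statement. Finally, your concern about the derogatory case is well founded, but it is not merely an obstacle to your argument: if an unstable eigenvalue has geometric multiplicity greater than one, the single-trajectory Krylov matrix has rank strictly less than $\ell$, so $\sigma_\ell=0$ identically and no positive lower bound can hold; in the paper this is absorbed by $\phi(A_u,T)$ (and hence $C_\sigma$) collapsing to zero, as acknowledged informally after Theorem~\ref{theorem:learning the left unstable subspace mb}.
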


We detail the proof in Appendix \ref{Appendix:learning the left unstable subspace}. For now, it is important to note that if $|\lambda_\ell| \gg 1$, then as $T \to \infty$, the subspace distance $d(\widehat{\Phi},\Phi) = \frac{\mathcal{O}({T}^2)}{\mathcal{O}(|\lambda_\ell|^{T}) - \mathcal{O}({T}^2)}$ goes to zero, with high probability.  Below, we formalize the non-asymptotic guarantees of learning the left unstable subspace representation.

\begin{theorem}\label{theorem:learning the left unstable subspace mb} Suppose that the amount of trajectory data for learning the left unstable subspace representation scales according to $T = \mathcal{O}\left(\log\left( \frac{\ell^{7}(\dx - \ell)\mu_0}{(1 - |\lambda_{\ell+1}|)\varepsilon \delta^3_\sigma} \right)/\log(|\lambda_{\ell}|)\right)$, for some small accuracy $\varepsilon>0$ and $\delta_\sigma \in (0,1)$. Then, it holds that $d(\widehat{\Phi},\Phi)\leq \varepsilon$, with probability $1 - 4\delta_\sigma$.  
\end{theorem}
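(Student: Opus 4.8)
The plan is to combine the deterministic Davis--Kahan estimate of Lemma~\ref{lemma:subspace-distance} with the high-probability geometric lower bound on $\sigma_\ell$ from Lemma~\ref{lemma: lower bound on sigma_l} and the polynomial upper bound on $\hat\sigma_{\ell+1}$. Lemma~\ref{lemma:subspace-distance} already supplies
$$
d(\widehat\Phi,\Phi) \le \frac{\sqrt{2\ell}\sqrt{T}(\dx-\ell)\mu_0}{(\sigma_\ell-\hat\sigma_{\ell+1})(1-|\lambda_{\ell+1}|)},
$$
so the whole task reduces to lower-bounding the spectral gap $\sigma_\ell-\hat\sigma_{\ell+1}$ in the denominator and then inverting the inequality to solve for $T$.

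First I would control the gap. The displayed bound preceding Lemma~\ref{lemma: lower bound on sigma_l} gives $\hat\sigma_{\ell+1}\le \sqrt{T}(\dx-\ell)\mu_0/(1-|\lambda_{\ell+1}|)$, which grows only polynomially in $T$, whereas Lemma~\ref{lemma: lower bound on sigma_l} gives $\sigma_\ell \ge \sqrt{C_\sigma}|\lambda_\ell|^{T}\delta_\sigma/(2\sqrt2\,C_\psi\,\ell^{5/2}T^{3/2})$, which grows geometrically since $|\lambda_\ell|>1$. Hence there is a threshold of order $\log(\cdot)/\log|\lambda_\ell|$ beyond which $\hat\sigma_{\ell+1}\le \tfrac12\sigma_\ell$, so that $\sigma_\ell-\hat\sigma_{\ell+1}\ge \tfrac12\sigma_\ell$. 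Substituting the lower bound on $\sigma_\ell$ into the Davis--Kahan estimate then yields, on the $1-4\delta_\sigma$ event of Lemma~\ref{lemma: lower bound on sigma_l},
$$
d(\widehat\Phi,\Phi)\le \mathcal{O}\!\left(\frac{\ell^{3}\,T^{2}\,(\dx-\ell)\mu_0}{|\lambda_\ell|^{T}\,\delta_\sigma\,(1-|\lambda_{\ell+1}|)}\right),
$$
where the $\ell^{3}T^{2}$ comes from multiplying the numerator factor $\sqrt{2\ell}\sqrt{T}$ by the $\ell^{5/2}T^{3/2}$ from the reciprocal of the $\sigma_\ell$ bound.

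Next I would force the right-hand side to be at most $\varepsilon$. Taking logarithms turns this into $T\log|\lambda_\ell| \ge 2\log T + \log\big(\tfrac{\ell^{3}(\dx-\ell)\mu_0}{\varepsilon\delta_\sigma(1-|\lambda_{\ell+1}|)}\big)$, up to constants. This is the main obstacle: the inequality is self-referential, since the polynomial factors $T^2$ and $T^{3/2}$ reappear as $2\log T$ on the right. The resolution is to use $|\lambda_\ell|>1$, so that the linear term $T\log|\lambda_\ell|$ eventually dominates $2\log T$ and one has $T\log|\lambda_\ell| - 2\log T \ge \tfrac12 T\log|\lambda_\ell|$ past a logarithmic threshold; it then suffices to take $T = \mathcal{O}(\log(\cdot)/\log|\lambda_\ell|)$ with the displayed log-argument. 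I would carry out this absorption explicitly to certify that a finite $T$ of the claimed order indeed forces $d(\widehat\Phi,\Phi)\le\varepsilon$.

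Finally, I would reconcile the two sample-size constraints. The bound on $\sigma_\ell$ is valid only once $T\ge \mathcal{O}(\log(\ell^7/\delta_\sigma^3)/\log|\lambda_\ell|)$, so the stated $T$ must dominate both this prerequisite and the accuracy requirement just derived. Using $\ell\ge 1$, $\delta_\sigma\in(0,1)$, and small $\varepsilon$, the single log-argument $\tfrac{\ell^{7}(\dx-\ell)\mu_0}{(1-|\lambda_{\ell+1}|)\varepsilon\delta_\sigma^3}$ appearing in the theorem simultaneously upper-bounds the $\ell^7/\delta_\sigma^3$ term from Lemma~\ref{lemma: lower bound on sigma_l} and the $\ell^3/\delta_\sigma$ term from the accuracy step, the surplus factor $\ell^4/\delta_\sigma^2\ge 1$ absorbing the latter, so both events hold for the same $T$. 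Since the only randomness enters through Lemma~\ref{lemma: lower bound on sigma_l} while the Davis--Kahan and $\hat\sigma_{\ell+1}$ bounds are deterministic, the conclusion $d(\widehat\Phi,\Phi)\le\varepsilon$ holds on exactly that $1-4\delta_\sigma$ event, with no further union bound required.
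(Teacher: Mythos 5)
Your proposal is correct and follows essentially the same route as the paper's proof: Davis--Kahan via Lemma~\ref{lemma:subspace-distance}, the geometric lower bound on $\sigma_\ell$ from Lemma~\ref{lemma: lower bound on sigma_l} against the polynomial upper bound on $\hat{\sigma}_{\ell+1}$, a threshold choice of $T$ so the gap is at least $\tfrac{1}{2}\sigma_\ell$, and then solving $\mathcal{O}\bigl(\ell^3 T^2(\dx-\ell)\mu_0/(|\lambda_\ell|^T\delta_\sigma(1-|\lambda_{\ell+1}|))\bigr)\le\varepsilon$ for $T$ of logarithmic order. Your explicit treatment of the self-referential $2\log T$ absorption and the reconciliation of the two sample-size constraints is a slightly more careful rendering of steps the paper leaves implicit, but it is the same argument.
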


Let us now take a moment to explain this result. First, observe that the required number of samples $T$ depends only \emph{logarithmically} on the problem ambient dimension $\dx$ and the number of unstable modes $\ell$. The main bottleneck in learning the left unstable subspace arises when the least unstable mode is close to marginal stability, i.e., $|\lambda_\ell| \approx 1$. Conversely, Theorem~\ref{theorem:learning the left unstable subspace mb} states that the estimation becomes easier as the system becomes more explosive (i.e., $|\lambda_\ell| \gg 1$).

In addition, while the constant $C_\sigma$ does not scale with $\ell$ or $T$, it is sensitive to the spectral properties of the system. In particular, it depends on the spectral norm of the Jordan matrix $\Lambda = \texttt{blkdiag}(\Lambda_1,\ldots, \Lambda_n)$, with $n$ being the number of distinct eigenvalues of $A$. Each Jordan block takes the form $\Lambda_i = \texttt{diag}(\lambda_i,\ldots,\lambda_i) + {N}_i$, where ${N}_i$ is a nilpotent matrix with ones on the first superdiagonal, if the geometric multiplicity of $\lambda_i$, denoted by $\text{gm}(\lambda_i)$, is equal to one. As discussed in \cite{sarkar2019near}, the estimation of the unstable component becomes inconsistent when the geometric multiplicity of the unstable eigenvalues is greater than one. In our setting, this effect deflates $C_\sigma$ which in turn increases the number of samples $T$ when $A$ contains unstable modes with geometric multiplicity greater than one.

\begin{wrapfigure}{r}{0.65\textwidth}
    \centering
    \includegraphics[width=0.6\textwidth]{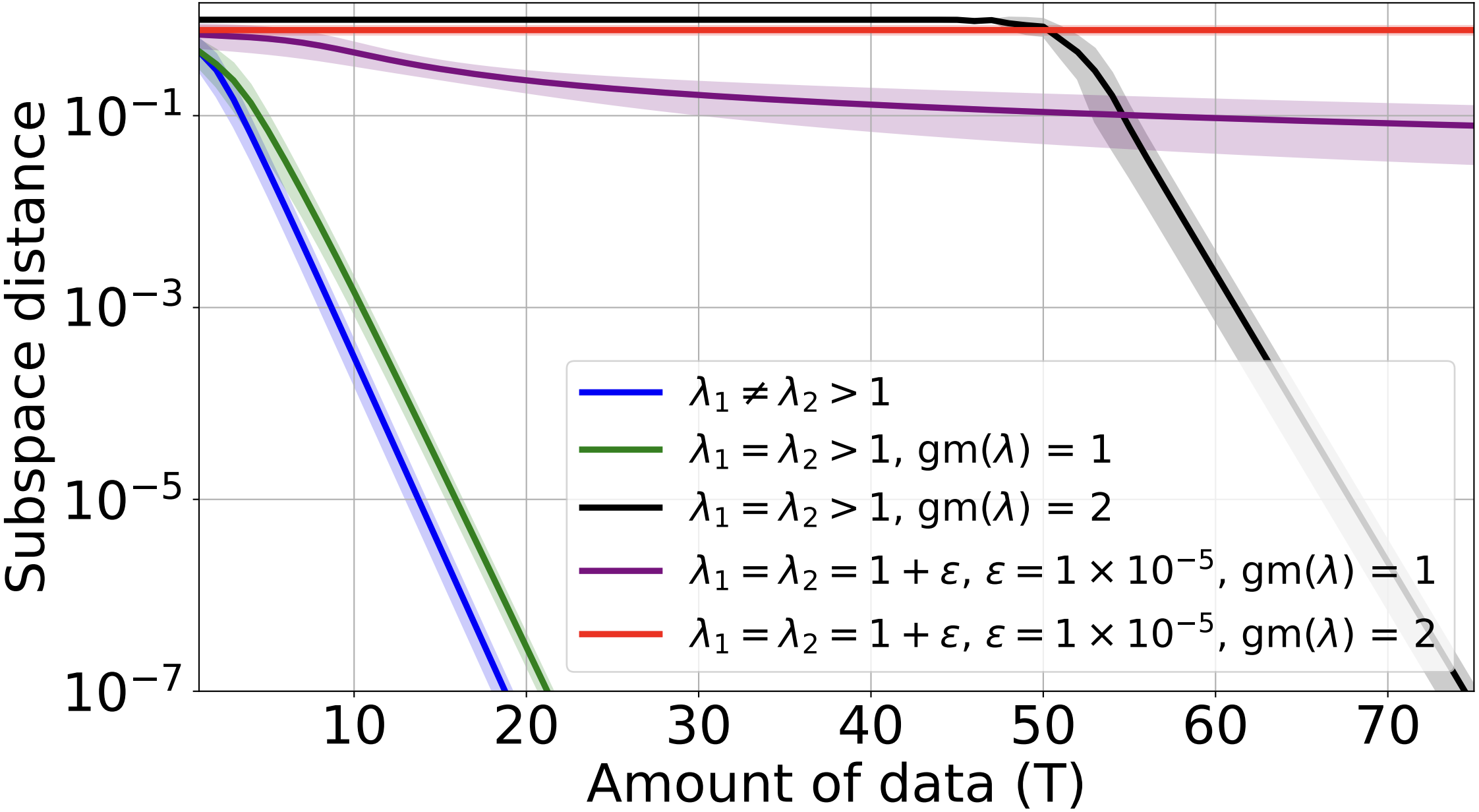}
    \caption{Subspace distance $d(\widehat{\Phi},\Phi)$ with respect to data $(T)$.} 
    \label{fig:subspace_dist}
\end{wrapfigure}

Figure \ref{fig:subspace_dist}, illustrates these trends for a simple example with $\dx = 3$ states and $\ell = 2$ unstable modes. The plot depicts the mean and the standard deviation for $10$ different random initial conditions. Notably, learning the unstable subspace for a diagonalizable matrix (blue curve) requires roughly the same amount of data as for a non-diagonalizable case with $\text{gm}(\lambda)=1$ (green curve). In contrast, as the least unstable mode gets close to one, successful estimation becomes infeasible.

\begin{remark} We emphasize that the guarantees for learning the right unstable subspace of $A$ presented in \cite{zhang2024learning} could not be directly applied in our setting. This is because their order of $T$ depends inversely on the gap between the unstable modes, which becomes problematic when the system matrix is non-diagonalizable, as the gap goes to zero and the amount of data grows prohibitively large. Moreover, such a dependence on the spectral gap appears to be counterintuitive and does not align with the results illustrated in Figure~\ref{fig:subspace_dist}. 
\end{remark}

\section{Learning to Stabilize on the Unstable Subspace}\label{sec:LTS on the unstable subspace}

We now introduce our approach for learning to stabilize (LTS) by operating on the system's unstable subspace. This method combines the unstable subspace representation learning, discussed in the previous section, with the discounted LQR method applied directly on the learned subspace. Specifically, our goal is to learn a low-dimensional controller $\theta \in \mathcal{S}^1_\theta$ that stabilizes the unstable dynamics $(A_u, B_u)$. This is accomplished by solving a series of discounted LQR problems with PG.

Recall that PG requires access to the gradient $\nabla J^\gamma(\theta, \Phi)$. However, because we operate in a model-free setting, this gradient cannot be computed directly. To address this, we use a zeroth-order gradient estimation method \citep{flaxman2004online, spall2005introduction}, which yields a gradient estimate denoted by $\widehat{\nabla}J^\gamma(\theta,\widehat{\Phi})$. This estimation is performed by collecting trajectory data from the original system \eqref{eq:LTI_sys mb}, projected onto the estimated left unstable subspace via $\widehat{\Phi}$, i.e., using $z_t = \widehat{\Phi}^\top x_t$.

Before introducing the zeroth-order gradient estimation and its finite-sample guarantees, we first provide an upper bound on the error between $\nabla J^\gamma(\theta, \Phi)$ and $\nabla J^\gamma(\theta, \widehat{\Phi})$.

\begin{lemma}\label{lemma: error gradient misspecification}
Suppose that $\theta \in \mathcal{S}^\gamma_\theta$. Then, 
$$
    \left\|\nabla J^\gamma (\theta, \Phi) - \nabla J^\gamma (\theta, \widehat{\Phi})\right\|_F \leq C_{\Phi} d(\widehat{\Phi}, {\Phi}), \text{ with } C_\Phi = \sqrt{\ell}\left( (L_K\nu_\theta+\phi) \sqrt{2\ell} + \phi\right).
$$
\end{lemma}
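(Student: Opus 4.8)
The plan is to push everything back to the full-dimensional gradient through the chain-rule identity $\nabla J^\gamma(\theta,\Phi) = \nabla J^\gamma(\theta\Phi^\top)\Phi$ (and likewise $\nabla J^\gamma(\theta,\widehat\Phi) = \nabla J^\gamma(\theta\widehat\Phi^\top)\widehat\Phi$), so that the object to control becomes $\nabla J^\gamma(\theta\Phi^\top)\Phi - \nabla J^\gamma(\theta\widehat\Phi^\top)\widehat\Phi$, where $\nabla J^\gamma(\cdot)$ denotes the gradient of the \emph{high-dimensional} cost evaluated at the composite gains $K = \theta\Phi^\top$ and $\widehat K = \theta\widehat\Phi^\top$. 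First I would note that since $\theta\in\mathcal{S}^\gamma_\theta$ and, for $d(\widehat\Phi,\Phi)$ small enough, the misspecified gain $\theta\widehat\Phi^\top$ still lies in the high-dimensional stabilizing sublevel set $\mathcal{S}^\gamma_K$, both the uniform gradient bound $\|\nabla J^\gamma(\cdot)\|\le\phi$ and the $L_K$-smoothness from Lemma~\ref{lemma:uniform bounds and lipschitz mb} (together with $\|\theta\|\le\nu_\theta$) are available at the two evaluation points.

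Next I would telescope by inserting $\nabla J^\gamma(\theta\widehat\Phi^\top)\Phi$:
\[
\nabla J^\gamma(\theta,\Phi) - \nabla J^\gamma(\theta,\widehat\Phi)
= \big[\nabla J^\gamma(\theta\Phi^\top) - \nabla J^\gamma(\theta\widehat\Phi^\top)\big]\Phi
+ \nabla J^\gamma(\theta\widehat\Phi^\top)\big(\Phi - \widehat\Phi\big).
\]
For the first term, submultiplicativity of the Frobenius norm combined with $L_K$-smoothness gives $L_K\|\theta\Phi^\top - \theta\widehat\Phi^\top\|_F \le L_K\|\theta\|_2\|\Phi-\widehat\Phi\|_F \le L_K\nu_\theta\|\Phi-\widehat\Phi\|_F$. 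For the second term, $\|\nabla J^\gamma(\theta\widehat\Phi^\top)\|\le\phi$ yields $\phi\|\Phi-\widehat\Phi\|_F$, so both contributions reduce to the basis-level error $\|\Phi-\widehat\Phi\|_F$ times $(L_K\nu_\theta+\phi)$.

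The crux is converting this basis error into the subspace distance $d(\widehat\Phi,\Phi)=\|\widehat\Pi-\Pi\|$ of Definition~\ref{def:subspace_distance}, and this is where the main obstacle sits: $\Phi$ and $\widehat\Phi$ are fixed only up to an orthogonal rotation of their columns, so $\|\Phi-\widehat\Phi\|_F$ is \emph{not} a priori bounded by $d$. I would resolve this with the $\sin\Theta$ machinery of \citep{davis1970rotation}: using the orthogonal alignment given by the polar factor of $\widehat\Phi^\top\Phi$, one obtains $\|\Phi-\widehat\Phi\|_F \le \sqrt{2}\,\|\sin\Theta\|_F \le \sqrt{2\ell}\,\|\sin\Theta\| = \sqrt{2\ell}\,d(\widehat\Phi,\Phi)$, the $\sqrt{\ell}$ being the Frobenius-to-spectral conversion over the $\ell$ principal angles; the residual rotation mismatch in the composite controller $\theta\widehat\Phi^\top$ then produces the extra additive $\phi$ contribution. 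Substituting these bounds and collecting the remaining Frobenius-versus-spectral conversions on the $\du\times\ell$ gradient factors assembles $C_\Phi = \sqrt{\ell}\big((L_K\nu_\theta+\phi)\sqrt{2\ell}+\phi\big)$. I expect the rotation/gauge handling to be the delicate point, since the clean two-term bound only controls the aligned basis difference and the alignment must be threaded through the controller argument without breaking the dependence on $d$ alone.
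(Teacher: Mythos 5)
Your proposal is correct in substance and reaches the lemma by a slightly different, and in fact leaner, route than the paper. The paper's own proof (Appendix~\ref{appendix:gradient error misspecified rep}) inserts the cross term $\nabla J^\gamma(\theta\widehat{\Phi}^\top)\Pi\widehat{\Phi}$ and thereby splits the error into \emph{three} pieces: a projector-difference piece $\nabla J^\gamma(\theta\widehat{\Phi}^\top)(\widehat{\Pi}-\Pi)\widehat{\Phi}$, bounded by $\phi\, d(\widehat{\Phi},\Phi)$, plus two basis-difference pieces bounded by $L_K\nu_\theta\|\widehat{\Phi}-\Phi\|$ and $\phi\|\widehat{\Phi}-\Phi\|$; it then invokes $\|\widehat{\Phi}-\Phi\|\leq\sqrt{2\ell}\,d(\widehat{\Phi},\Phi)$ from \cite[Corollary 5.3]{hu2022sample} and multiplies by $\sqrt{\ell}$ to pass to the Frobenius norm, which is exactly how the additive $\phi$ inside $C_\Phi$ arises. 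Your two-term telescoping $\bigl[\nabla J^\gamma(\theta\Phi^\top)-\nabla J^\gamma(\theta\widehat{\Phi}^\top)\bigr]\Phi+\nabla J^\gamma(\theta\widehat{\Phi}^\top)\bigl(\Phi-\widehat{\Phi}\bigr)$ avoids the projector insertion altogether and yields $(L_K\nu_\theta+\phi)\|\Phi-\widehat{\Phi}\|_F\leq(L_K\nu_\theta+\phi)\sqrt{2\ell}\,d(\widehat{\Phi},\Phi)$, which is \emph{smaller} than $C_\Phi\, d(\widehat{\Phi},\Phi)$ and hence implies the lemma a fortiori; both arguments rest on the same two pillars, namely the $\phi$, $L_K$, $\nu_\theta$ bounds of Lemma~\ref{lemma:uniform bounds and lipschitz mb} and the conversion of basis error into subspace distance. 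You are also more careful than the paper on one point: you flag that $\theta\widehat{\Phi}^\top$ must itself lie in $\mathcal{S}^\gamma_K$ for $\phi$ and $L_K$ to be applicable, a fact the paper uses silently.

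The one incorrect element is your closing claim that the ``residual rotation mismatch in the composite controller'' produces the extra additive $\phi$ contribution. That is not where the paper's $\phi$ comes from (it comes from the projector-difference term), and no argument of that type can work: if $\widehat{\Phi}=\Phi O$ for an orthogonal $O\neq I$, then $d(\widehat{\Phi},\Phi)=0$ while $\nabla J^\gamma(\theta,\widehat{\Phi})\neq\nabla J^\gamma(\theta,\Phi)$ in general, so a genuine rotation misalignment is invisible to $d(\widehat{\Phi},\Phi)$ and cannot be absorbed into any term of order $d(\widehat{\Phi},\Phi)$. Both your proof and the paper's are therefore valid only under the aligned-basis convention implicit in the inequality $\|\widehat{\Phi}-\Phi\|\leq\sqrt{2\ell}\,d(\widehat{\Phi},\Phi)$ --- the paper via \cite[Corollary 5.3]{hu2022sample}, you via the polar factor in the Davis--Kahan framework of \cite{davis1970rotation}. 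Once that convention is granted, your two-term bound is complete as it stands and the extra $\phi$ term is simply not needed; delete the rotation-mismatch sentence and your proof is correct, and tighter than the paper's.
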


Note that the error in the gradient incurred by the learned representation, i.e., $\widehat{\Phi}$, can be made arbitrarily small, provided that $d(\widehat{\Phi},\Phi)$ is sufficiently small. The proof of this lemma follows from Lemmas \ref{lemma:uniform bounds and lipschitz mb} and \ref{lemma:gradient dominance mb}, combined  with the upper bound $\|\widehat{\Phi} - \Phi\| \leq \sqrt{2\ell}d(\widehat{\Phi}, \Phi)$ from \cite[Corollary 5.3]{hu2022sample}. Additional details can be found in Appendix \ref{appendix: linear decomposition of the control policy}.

\vspace{-0.1cm}
\subsection{Gradient and Cost Estimation}

The zeroth-order gradient estimation method is standard and has been widely adopted for policy gradient estimation in model-free LQR \citep{fazel2018global, malik2019derivative}. Next, we define the two-point zeroth-order estimation.
\begin{align*}
    \widehat{\nabla} J^\gamma (\theta, \widehat{\Phi}) := \frac{1}{2rn_s}\sum_{i=1}^{n_s} \left(V^{\gamma,\tau}(\theta_{1,i},z^i_0) - V^{\gamma,\tau}(\theta_{2,i},z^i_0)
  \right)U_i,
\end{align*}
where $U_i$ is randomly drawn from a uniform distribution on the sphere $\sqrt{\ell \du}\mathbb{S}^{\ell\du-1}$. In addition, we have that $\theta_{1,i} = \theta + rU_i$, $\theta_{2,i} = \theta - r U_i$, with $r > 0$ denoting the smoothing radius and $n_s$ the number of rollouts (or trajectories). Let $\tau > 0$ denote the time horizon. The finite-horizon value function $V^{\gamma,\tau}(\theta, z_0)$ is defined as follows:
\begin{align*}
    V^{\gamma,\tau}(\theta,z_0) := \sum_{t=0}^{\tau-1} \gamma^tz^\top_t\left(\widehat{\Phi}^\top Q\widehat{\Phi} + \theta^\top R \theta \right)z_t, 
\end{align*}
with $\{z_t\}^{\tau-1}_{t=0} = \{\widehat{\Phi}^\top x_t\}_{t=0}^{\tau-1}$ and $\{x_t\}^{\tau-1}_{t=0}$ being the trajectory data from \eqref{eq:LTI_sys mb} with $u_t = \theta \widehat{\Phi}^\top x_t$. 

\begin{lemma}\label{lemma:zeroth-order estimation mb}
Let $\zeta$ and $\varepsilon_\tau$ be positive scalars. Suppose that $n_s  = \mathcal{O}(\zeta^4\mu^4_\psi\log^6(\ell))\ell$, $\tau = \mathcal{O}( \log(1/\varepsilon_\tau))$ and $r = \mathcal{O}(\sqrt{\varepsilon_\tau})$. It holds with probability at least $1 - c_1(\ell^{-\zeta}+n^{-\zeta}_s-n_se^{-\ell/8} - e^{-c_2n_s})$ that 
$$
     \|\widehat{\nabla} J^\gamma(\theta,\widehat{\Phi})\|^2_F \leq C_{\text{est,1}} \|\nabla J^\gamma(\theta) \|^2_F  + C_{\text{est},1}C^2_\Phi d(\widehat{\Phi},{\Phi})^2  +\varepsilon^2_\tau,
$$
$$
    \langle \nabla J^\gamma(\theta), \widehat{\nabla}J^\gamma(\theta,\widehat{\Phi})  \rangle   \geq  C_{\text{est},2} \left\|\nabla J^\gamma (\theta)\right\|^2_F-C_{\text{est},3} C^2_\Phi d(\widehat{\Phi}, {\Phi})^2 - C_{\text{est},4}\varepsilon^2_\tau,
$$
with positive scalars $c_1$ and $c_2$. $C_{\text{est},1}$, $C_{\text{est},3}$, and $C_{\text{est},4}$ scale as $\mathcal{O}(\du \ell \log^2(\ell))$, and $C_{\text{est},2} = \mathcal{O}(1)$. 
\end{lemma}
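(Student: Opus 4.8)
The plan is to decompose the zeroth-order estimator into four sources of error and control each separately: (i) the \emph{smoothing bias} from replacing the true gradient by the gradient of the spherical smoothing of $J^\gamma(\cdot,\widehat{\Phi})$; (ii) the \emph{truncation bias} from using the finite-horizon value $V^{\gamma,\tau}$ in place of the infinite-horizon cost; (iii) the \emph{Monte-Carlo error} from averaging only $n_s$ rollouts with random directions $U_i$; and (iv) the \emph{misspecification error} from using $\widehat{\Phi}$ rather than $\Phi$. The structural fact that makes this tractable is that, because $\Phi$ spans the \emph{left} unstable subspace, the projected state $z_t=\Phi^\top x_t$ of the true closed loop evolves exactly as $z_{t+1}=(A_u+B_u\theta)z_t$ (the top-right block $\Phi^\top A\Phi_\perp$ vanishes); hence, with $\widehat{\Phi}=\Phi$, the value collected from the real system equals the reduced-model value defining $J^\gamma(\theta,\Phi)$, and the only representation-induced discrepancy is captured through $d(\widehat{\Phi},\Phi)$.

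For the idealized estimator (infinite horizon, exact expectation) I would use the standard spherical-smoothing identity: for $U_i$ uniform on $\sqrt{\ell\du}\,\mathbb{S}^{\ell\du-1}$, the two-point average is an unbiased estimate of $\nabla J^\gamma_r(\theta,\widehat{\Phi})$, the gradient of the radius-$r$ smoothing of $J^\gamma(\cdot,\widehat{\Phi})$. The smoothness constant $L_\theta$ from Lemma~\ref{lemma:uniform bounds and lipschitz mb} then bounds $\|\nabla J^\gamma_r(\theta,\widehat{\Phi})-\nabla J^\gamma(\theta,\widehat{\Phi})\|_F=\mathcal{O}(L_\theta r)$, so the choice $r=\mathcal{O}(\sqrt{\varepsilon_\tau})$ pushes this into the $\varepsilon_\tau$ budget. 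Because the damped closed loop $\sqrt{\gamma}(A_u+B_u\theta)$ is stable for $\theta\in\mathcal{S}^\gamma_\theta$, the tail $V^{\gamma,\infty}-V^{\gamma,\tau}$ decays geometrically; with $\tau=\mathcal{O}(\log(1/\varepsilon_\tau))$ the horizon truncation, even after division by $r$ inside the estimator, also contributes only at the $\varepsilon_\tau$ level. Finally, Lemma~\ref{lemma: error gradient misspecification} lets me replace $\nabla J^\gamma(\theta,\widehat{\Phi})$ by $\nabla J^\gamma(\theta)=\nabla J^\gamma(\theta,\Phi)$ at the cost of the additive term $C_\Phi d(\widehat{\Phi},\Phi)$.

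The principal obstacle is the concentration step. Each summand $g_i=\frac{1}{2r}\big(V^{\gamma,\tau}(\theta_{1,i},z_0^i)-V^{\gamma,\tau}(\theta_{2,i},z_0^i)\big)U_i$ is a product of a quadratic form in the sub-Gaussian initial state $z_0^i$ (hence sub-exponential, which explains the $\mu_\psi^4$ dependence once the second moment of this quadratic is taken) with the sphere direction $U_i$, so $\frac{1}{n_s}\sum_i g_i$ is an average of heavy-tailed random vectors. I would follow a truncate-then-Bernstein strategy in the spirit of \cite{malik2019derivative, fazel2018global}: first show that $\|z_0^i\|^2=\mathcal{O}(\ell)$ and $\|U_i\|=\sqrt{\ell\du}$ hold simultaneously for all $i$ with probability at least $1-n_s e^{-\ell/8}$ (a union bound over $n_s$ rollouts of a dimension-$\ell$ sub-Gaussian tail), then apply a vector Bernstein inequality to the truncated summands to control the deviation of the empirical mean from $\nabla J^\gamma_r(\theta,\widehat{\Phi})$; the residual tails produce the $\ell^{-\zeta}$, $n_s^{-\zeta}$, and $e^{-c_2 n_s}$ events, and requiring the deviation to be $\mathcal{O}(\varepsilon_\tau)$ fixes the sample size $n_s=\mathcal{O}(\zeta^4\mu_\psi^4\log^6(\ell))\ell$, the $\log^6$ factors arising from the sub-exponential moment growth and covering arguments. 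Getting the polynomial-in-$\ell$ dependence and confirming $C_{\text{est},1},C_{\text{est},3},C_{\text{est},4}=\mathcal{O}(\du\ell\log^2\ell)$ with $C_{\text{est},2}=\mathcal{O}(1)$ is where the bookkeeping is most delicate.

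With all four terms controlled, the two inequalities follow. For the upper bound, a direct second-moment computation of the spherical two-point estimator averaged over $U_i$ and $z_0^i$ yields $\mathcal{O}(\du\ell\log^2\ell)\,\|\nabla J^\gamma(\theta,\widehat{\Phi})\|_F^2$ plus lower-order bias; the dimension factor is precisely $C_{\text{est},1}$, and replacing $\nabla J^\gamma(\theta,\widehat{\Phi})$ by $\nabla J^\gamma(\theta)$ via Lemma~\ref{lemma: error gradient misspecification} while absorbing the smoothing and truncation residual into $\varepsilon_\tau^2$ gives the first bound, with the concentration inequality above upgrading the in-expectation statement to the stated high-probability one. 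For the correlation bound, I expand $\langle \nabla J^\gamma(\theta), \widehat{\nabla}J^\gamma(\theta,\widehat{\Phi})\rangle$, use that the conditional expectation of the estimator is $\nabla J^\gamma_r(\theta,\widehat{\Phi})$, apply Young's inequality to peel off the misspecification and truncation/smoothing contributions as $-C_{\text{est},3}C_\Phi^2 d(\widehat{\Phi},\Phi)^2-C_{\text{est},4}\varepsilon_\tau^2$, and invoke concentration to retain the constant fraction $C_{\text{est},2}=\mathcal{O}(1)$ of $\|\nabla J^\gamma(\theta)\|_F^2$.
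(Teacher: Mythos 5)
Your high-level decomposition---smoothing bias, truncation bias, Monte--Carlo concentration, misspecification via Lemma~\ref{lemma: error gradient misspecification}, then a Young-inequality assembly---is exactly the structure of the paper's proof; the paper simply imports the bias and concentration steps as black boxes from \cite{mohammadi2020linear} (its Lemmas~\ref{lemma: zeroth-order estimation bias} and~\ref{lemma: events}) and then performs the same algebra you describe, and your structural observation that $\Phi^\top A \Phi_\perp = 0$ for the \emph{left} unstable subspace is correct and is indeed what makes the projected rollouts consistent with the reduced model. However, the two steps you propose to prove from scratch contain genuine gaps. The first is the smoothing-bias rate: you bound $\|\nabla J^\gamma_r(\theta,\widehat{\Phi})-\nabla J^\gamma(\theta,\widehat{\Phi})\|_F = \mathcal{O}(L_\theta r)$ and claim that $r=\mathcal{O}(\sqrt{\varepsilon_\tau})$ pushes this into the $\varepsilon_\tau$ budget. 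It does not: $L_\theta r = \mathcal{O}(\sqrt{\varepsilon_\tau})$, whose square contributes an additive $\mathcal{O}(\varepsilon_\tau)$ term, not the $\varepsilon_\tau^2$ appearing in the lemma. The reason $r=\mathcal{O}(\sqrt{\varepsilon_\tau})$ is the correct scaling is the symmetric (central-difference) structure of the two-point estimator: the even-order Taylor terms cancel, and local third-order smoothness of the LQR cost on the sublevel set yields bias $\mathcal{O}(r^2)=\mathcal{O}(\varepsilon_\tau)$---this cancellation is precisely the content of Lemma~\ref{lemma: zeroth-order estimation bias}. Gradient-Lipschitzness alone is insufficient; shrinking $r$ to $\mathcal{O}(\varepsilon_\tau)$ would repair it but abandons the stated parameter scaling.

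The second, more consequential gap is that your concentration target is additive where it must be relative. You plan to require the Monte--Carlo deviation to be $\mathcal{O}(\varepsilon_\tau)$ and assert that this fixes $n_s = \mathcal{O}(\zeta^4\mu_\psi^4\log^6(\ell))\ell$. But this $n_s$ has no dependence on $\varepsilon_\tau$ whatsoever; a Bernstein bound with additive accuracy $\varepsilon_\tau$ would force $n_s \gtrsim 1/\varepsilon_\tau^2$, so your argument cannot produce the lemma as stated. What is actually needed---and what the paper imports as the events $\mathcal{E}_1,\mathcal{E}_2$ of Lemma~\ref{lemma: events}---is concentration \emph{relative to the gradient itself}: $\langle \nabla J^\gamma(\theta,\widehat{\Phi}), \widetilde{\nabla}J^\gamma(\theta,\widehat{\Phi})\rangle \geq \mu_1\|\nabla J^\gamma(\theta,\widehat{\Phi})\|_F^2$ and $\|\widetilde{\nabla}J^\gamma(\theta,\widehat{\Phi})\|_F^2 \leq \mu_2\|\nabla J^\gamma(\theta,\widehat{\Phi})\|_F^2$, with $\mu_1,\mu_2$ independent of any accuracy parameter. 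This is achievable with an $\varepsilon_\tau$-free $n_s$ because the fluctuations of the summands $\langle \nabla V^\gamma(\theta,z_0^i),U_i\rangle U_i$ scale with $\|\nabla J^\gamma(\theta,\widehat{\Phi})\|_F$ itself, and it is this multiplicative form that survives the Young-inequality assembly and yields $C_{\text{est},2}=\mathcal{O}(1)$ in front of $\|\nabla J^\gamma(\theta)\|_F^2$. Your remaining ingredients (truncation handled through $\tau = \mathcal{O}(\log(1/\varepsilon_\tau))$, the misspecification step, and the final assembly) are correct and coincide with the paper's treatment in Lemma~\ref{lemma: bounds gradient estimation}.
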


The proof for this lemma follows from \cite[Section V]{mohammadi2020linear} and Lemma \ref{lemma: error gradient misspecification}. Lemma~\ref{lemma:zeroth-order estimation mb} states that if $\Phi$ is accurately estimated, the smoothing parameter $r$ is chosen sufficiently small, and both the time horizon $\tau$ and the number of rollouts $n_s$ are sufficiently large, then $\|\widehat{\nabla} J^\gamma(\theta,\widehat{\Phi})\|^2_F = \mathcal{O}\left(\|{\nabla} J^\gamma(\theta)\|^2_F\right)$ and $\langle \nabla J^\gamma(\theta,{\Phi}), \widehat{\nabla}J^\gamma(\theta,\widehat{\Phi})  \rangle = \mathcal{O}\left(\left\|\nabla J^\gamma (\theta,{\Phi})\right\|^2_F\right)$, with high probability. This result is crucial to establish the linear convergence of PG for \eqref{eq:discounted_LQR mb}.

Moreover, let $\widehat{J}^{\gamma,\tau}(\theta,\widehat{\Phi}) = \frac{1}{n_c}\sum_{i=1}^{n_c} V^{\gamma,\tau}(\theta,z^i_0)$ be the estimated cost with $n_c$ rollouts. We provide the following lemma to control $|J^\gamma(\theta)-\widehat{J}^{\gamma,\tau}(\theta,\widehat{\Phi})|$; the proof is deferred to Appendix \ref{appendix:cost and gradient estimation}.

\begin{lemma}\label{lemma:error cost}

Given a stabilizing controller $\theta \in \mathcal{S}^\gamma_\theta$ and $\delta_\tau \in (0,1)$. Suppose that  
\begin{align*}
    \tau \geq \tau_0 := \frac{J^\gamma(\theta,\widehat{\Phi})}{\sigma_{\min}( Q)}\log\left(\frac{8(J^{\gamma}(\theta,\widehat{\Phi}))^2\mu^2_0}{\sigma_{\min}(Q)J^\gamma(\theta)}\right), n_c \geq 8\mu^2_0\log\left(2/\delta_\tau\right), \text{ and } d(\widehat{\Phi},\Phi) \leq J^\gamma(\theta)/(4\ell\sqrt{\ell}C_{\text{cost}}).
\end{align*}
Then, it holds that  $|\widehat{J}^{\gamma,\tau}(\theta,\widehat{\Phi}) - J^\gamma(\theta)|\leq \frac{1}{2}J^\gamma(\theta)$, with probability $1-\delta_\tau$. $C_{\text{cost}}$ is polynomial in the problem parameters $\|A\|$, $\|B\|$, $\|Q\|$, $\|R\|$, and $\nu_\theta$.   
\end{lemma}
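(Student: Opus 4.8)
The plan is to split the total error into three independent sources, control each separately, and recombine by the triangle inequality. Introduce the infinite-horizon discounted cost evaluated along the \emph{true} rollout trajectories with the estimated representation, namely $J^\gamma(\theta,\widehat{\Phi}) = \E\left[\sum_{t=0}^\infty \gamma^t z_t^\top\left(\widehat{\Phi}^\top Q\widehat{\Phi} + \theta^\top R\theta\right)z_t\right]$ with $z_t = \widehat{\Phi}^\top x_t$, which is finite because $\theta \in \mathcal{S}^\gamma_\theta$ and $d(\widehat{\Phi},\Phi)$ is small enough to keep $\rho\big(\sqrt{\gamma}(A + B\theta\widehat{\Phi}^\top)\big) < 1$ by continuity of the spectral radius. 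Then I would write
\[
\big|\widehat{J}^{\gamma,\tau}(\theta,\widehat{\Phi}) - J^\gamma(\theta)\big| \leq \underbrace{\big|\widehat{J}^{\gamma,\tau}(\theta,\widehat{\Phi}) - \E[V^{\gamma,\tau}(\theta,z_0)]\big|}_{\text{(i) sampling}} + \underbrace{\big|\E[V^{\gamma,\tau}(\theta,z_0)] - J^\gamma(\theta,\widehat{\Phi})\big|}_{\text{(ii) truncation}} + \underbrace{\big|J^\gamma(\theta,\widehat{\Phi}) - J^\gamma(\theta,\Phi)\big|}_{\text{(iii) representation}},
\]
and show that each of the three conditions in the statement forces the corresponding term below $\tfrac16 J^\gamma(\theta)$, so that their sum is at most $\tfrac12 J^\gamma(\theta)$.

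For the truncation term (ii) I would run the standard geometric-decay argument for discounted finite-horizon costs. Writing $\mathrm{Tail}(\tau) = \E[V^{\gamma,\infty}] - \E[V^{\gamma,\tau}] = \gamma^\tau \E[z_\tau^\top P^\gamma_\theta z_\tau]$ and noting that the stage-cost matrix satisfies $\widehat{\Phi}^\top Q\widehat{\Phi} + \theta^\top R\theta \succeq \sigma_{\min}(Q)I$ while $\|P^\gamma_\theta\| \leq \trace(P^\gamma_\theta) = J^\gamma(\theta,\widehat{\Phi})$, one obtains the one-step contraction $\mathrm{Tail}(\tau+1) \leq \big(1 - \sigma_{\min}(Q)/J^\gamma(\theta,\widehat{\Phi})\big)\mathrm{Tail}(\tau)$. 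Iterating and bounding $\mathrm{Tail}(0) = J^\gamma(\theta,\widehat{\Phi})$ gives $\mathrm{Tail}(\tau) \leq \exp\!\big(-\sigma_{\min}(Q)\tau/J^\gamma(\theta,\widehat{\Phi})\big)J^\gamma(\theta,\widehat{\Phi})$; choosing $\tau \geq \tau_0$ exactly drives this below the target, which is where the logarithmic factor in $\tau_0$ and the ratio $J^\gamma(\theta,\widehat{\Phi})/J^\gamma(\theta)$ come from.

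For the sampling term (i), since $\|x_0\| \leq \mu_0$ almost surely we have $\|z_0\| \leq \mu_0$ and each per-rollout cost $V^{\gamma,\tau}(\theta,z_0^i)$ is a nonnegative, bounded, i.i.d. random variable (bounded by $\|P^\gamma_\theta\|\,\mu_0^2 \leq J^\gamma(\theta,\widehat{\Phi})\mu_0^2$). A Hoeffding/Bernstein concentration inequality for the empirical mean over $n_c$ rollouts then yields the deviation bound with probability $1-\delta_\tau$, and matching the required accuracy produces the condition $n_c \geq 8\mu_0^2\log(2/\delta_\tau)$. For the representation term (iii), I would carry out a perturbation analysis of the cost in $\widehat{\Phi}$: the cost depends on the representation only through $\widehat{A}_u = \widehat{\Phi}^\top A\widehat{\Phi}$, $\widehat{B}_u = \widehat{\Phi}^\top B$, the projected weight $\widehat{\Phi}^\top Q\widehat{\Phi}$, and the solution $P^\gamma_\theta$ of the associated discounted Lyapunov equation, so a Lipschitz bound on the Lyapunov solution combined with $\|\widehat{\Phi} - \Phi\| \leq \sqrt{2\ell}\,d(\widehat{\Phi},\Phi)$ (from \cite[Corollary 5.3]{hu2022sample}) gives $|J^\gamma(\theta,\widehat{\Phi}) - J^\gamma(\theta,\Phi)| \leq \ell\sqrt{\ell}\,C_{\text{cost}}\,d(\widehat{\Phi},\Phi)$ with $C_{\text{cost}}$ polynomial in $\|A\|,\|B\|,\|Q\|,\|R\|,\nu_\theta$, so the stated bound on $d(\widehat{\Phi},\Phi)$ closes the argument.

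I expect the representation term (iii) to be the main obstacle. Unlike (i) and (ii), which are routine concentration and tail arguments, it requires a genuine perturbation bound for the solution of the discrete Lyapunov equation under simultaneous perturbation of the closed-loop matrix and the cost weights, together with a guarantee that the perturbed system remains in the stabilizing set $\mathcal{S}^\gamma_\theta$ so that $J^\gamma(\theta,\widehat{\Phi})$ stays finite and comparable to $J^\gamma(\theta)$. This last point also creates a mild interdependence between (ii) and (iii): the contraction rate in (ii) is stated through $J^\gamma(\theta,\widehat{\Phi})$, so one must first ensure via (iii) that $J^\gamma(\theta,\widehat{\Phi})$ and $J^\gamma(\theta)$ agree up to a constant factor before the truncation estimate can be phrased in terms of the target $J^\gamma(\theta)$; the left-subspace (triangular) structure of $\bar{A}$ is what keeps this perturbation clean and avoids the coupling-term bias discussed in the remark following the controller representation.
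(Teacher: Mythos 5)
Your proposal is correct and follows essentially the same route as the paper: the paper's proof also pivots through $J^\gamma(\theta,\widehat{\Phi})$ via the triangle inequality, bounds the representation term exactly as you do (a Lyapunov perturbation bound combined with $\|\widehat{\Phi}-\Phi\|\leq\sqrt{2\ell}\,d(\widehat{\Phi},\Phi)$ from \cite[Corollary 5.3]{hu2022sample}, giving $|J^\gamma(\theta,\widehat{\Phi})-J^\gamma(\theta)|\leq C_{\text{cost}}\,\ell\sqrt{\ell}\,d(\widehat{\Phi},\Phi)$), and then allocates the error budget across the pieces. The only difference is that the paper treats your terms (i) and (ii) as a single block bounded by $J^\gamma(\theta)/4$ by invoking \cite[Lemma 5]{zhao2024convergence} as a black box, whereas you re-derive that lemma's content (geometric tail contraction for the truncation, Hoeffding for the sampling) — a more self-contained but equivalent treatment; your observation that one must first ensure $J^\gamma(\theta,\widehat{\Phi})$ is finite and comparable to $J^\gamma(\theta)$ before stating the truncation horizon is a genuine subtlety that the paper leaves implicit.
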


\subsection{Discounted LQR on the Unstable Subspace}

With the gradient and cost estimation results in place, we are now ready to present our discounted LQR method on the unstable subspace for learning to stabilize the system’s unstable dynamics. As a starting point, we assume access to an upper bound on the largest eigenvalue, namely, $|\lambda_1| \leq \bar{\lambda}_1$. This assumption is necessary to initialize the discount factor as $\gamma_0 < 1/\bar{\lambda}_1^2$, which ensures that the initial controller $\theta_0 \equiv 0$ stabilizes the corresponding damped system.

To ensure that the discount factor reaches one within a finite number of iterations, we adopt the explicit discount scheme proposed in \cite{zhao2024convergence}. In particular, $\gamma_{j+1} = (1+\xi \alpha_j)\gamma_j$, where $\xi \in (0,1)$ is the decay factor and the update rate $\alpha_j$ is given by
\begin{align} \label{eq:alpha_update}
    \alpha_j = \frac{3\sigma_{\min}\left( \widehat{\Phi}^\top Q \widehat{\Phi} + \theta^\top_jR\theta_j \right)}{\frac{4}{3}\widehat{J}^{\gamma_j,\tau}(\theta_j,\widehat{\Phi}) - 3\sigma_{\min}\left( \widehat{\Phi}^\top Q \widehat{\Phi} + \theta^\top_jR\theta_j \right)}.
\end{align}

The update rule for the discount factor follows from the Lyapunov stability analysis of the low-dimensional damped system. Let $V(z_t) = z^\top_t P^\gamma_\theta z_t$ be a quadratic Lyapunov function for the damped dynamics $z_{t+1} = \sqrt{\gamma_{j+1}}(A_u + B_u\theta)z_t$, and define $\Delta V = V(z_{t+1}) - V(z_t)$. Hence, we have
\begin{align*}
     \Delta V= z^\top_t\left(\frac{\gamma_{j+1}}{\gamma_j}\left(P^\gamma_\theta - {\Phi}^\top Q {\Phi} - \theta^\top R \theta\right) - P^\gamma_\theta   \right)z_t,
\end{align*}
and thus $\frac{\gamma_{j+1}}{\gamma_j}\left(P^\gamma_\theta - \Phi^\top Q \Phi - \theta^\top R \theta\right) - P^\gamma_\theta \prec 0$ yields $\sqrt{\gamma_{j+1}}\rho(A_u + B_u\theta) < 1$. Sufficiently, we write
\begin{align*}
1 - \frac{\gamma_j}{\gamma_{j+1}} &\leq \sigma_{\min}({\Phi}^\top Q {\Phi} + \theta^\top R \theta)/\trace{(P^\gamma_\theta)} \leq \frac{3}{2}\sigma_{\min}(\widehat{\Phi}^\top Q \widehat{\Phi} + \theta^\top R \theta)/J^\gamma(\theta), 
\end{align*}
where the last inequality follows from Bauer-Fike theorem \citep{bauer1960norms} and making the subspace distance to satisfy  $d(\widehat{\Phi},\Phi)\leq \frac{\sigma_{\min}(\widehat{\Phi}^\top Q \widehat{\Phi}+\theta^\top R \theta)}{4\|Q\|\sqrt{2\ell}\kappa(\widehat{\Phi}^\top Q \widehat{\Phi}+\theta^\top R \theta)}$. Therefore, by invoking Lemma \ref{lemma:error cost}, we recover \eqref{eq:alpha_update}. As also discussed in \cite{zhao2024convergence}, the decay factor $\xi \in (0,1)$ ensures that the updated controller $\theta_{j+1}$ ``strongly" stabilizes the damped system $(A^{\gamma_{j+1}}_u, B^{\gamma_{j+1}}_u)$. In the following section, we show the role of $\xi$ in providing a uniform bound for $\sqrt{\gamma_{j+1}}\rho(A^{\gamma_{j+1}}_u + B^{\gamma_{j+1}}_u \theta_{j+1})$.   

We conclude this section by presenting the algorithm for learning a stabilizing controller for system \eqref{eq:LTI_sys mb} by operating on its unstable subspace. As previously discussed, we initialize the discount factor with $\gamma_0 < 1/\bar{\lambda}_1^2$ and choose $\xi \in (0,1)$. The data $D$ collected from the adjoint system is used in line 2 of Algorithm~\ref{alg:disc_LQR} to estimate the left unstable subspace representation, which in turn defines the discounted LQR problem over the learned subspace. The algorithm proceeds by solving a sequence of low-dimensional discounted LQR problems, while $\gamma_j < 1$ (lines 4–8). In particular, given a stabilizing controller $\theta_j$ for the damped system with factor $\gamma_j$, $N$ policy gradient iterations using the estimated gradient $\widehat{\nabla} J^{\gamma_j}(\theta, \widehat{\Phi})$ are performed (lines 5 and 6). The number of iterations $N$ is set to ensure that $J^{\gamma_j}(\theta) \leq \bar{J}$ (this is made explicit in the next section). The discount factor is updated with \eqref{eq:alpha_update} (line 8). Finally, Algorithm \ref{alg:disc_LQR} returns $K = \theta_{j+1} \widehat{\Phi}^\top \in \mathcal{S}^{1}_K$.

\begin{algorithm}
\caption{Learning to Stabilize on the Unstable Subspace } 
\label{alg:disc_LQR}
\begin{algorithmic}[1]
\State \textbf{Input:} $\gamma_0$, $\xi$, $N$, $\eta$, $D$\
\State \textbf{Compute} $D = U\Sigma V^\top$ and let $\widehat{\Phi}= \left[u_1,\ldots, u_\ell\right]$ be the top $\ell$ columns of $U$
\State \textbf{Initialize} $\theta_0 = 0$ and $j = 0$
\State \textbf{While} $\gamma_{j} < 1$ \textbf{do}\
\State \quad \textbf{Initialize} $\bar{\theta}_0 = {\theta}_j$ and \textbf{for} $n = 0,1,\ldots,N-1$ \textbf{do}
\State \quad \quad $\bar{\theta}_{n+1} = \bar{\theta}_n - \eta \widehat{\nabla}J^{\gamma_j}(\bar{\theta}_n, \widehat{\Phi})$
\State \quad \textbf{Let} ${\theta}_{j+1}= \bar{\theta}_N$ and   \textbf{compute} $\alpha_j$ as in Eq. \eqref{eq:alpha_update}\
\State \quad \textbf{Update} $\gamma_{j+1} = (1+\xi\alpha_j)\gamma_{j}$ and $j\leftarrow j+1$
\State \textbf{Output:} $K = {\theta}_{j+1}\widehat{\Phi}^\top$
\end{algorithmic}
\end{algorithm}

Next, we provide the condition on $n_s$, $n_c$, $r$, $\tau$, $T$, $N$, and $\eta$ to guarantee that $K \in \mathcal{S}^1_K$, namely, $K$ is a stabilizing controller for the original system \eqref{eq:LTI_sys mb}.

\section{Sample Complexity Analysis}\label{sec:theoretical_guarantees}

We now present our main results. We first establish the conditions under which the lifted controller $K = \theta_{j+1}\widehat{\Phi}^\top$ stabilizes \eqref{eq:LTI_sys mb}. We then quantify the sample complexity reduction achieved by our approach. To facilitate a clear presentation, we introduce the following key quantities.
\begin{align*}
\varepsilon_\tau:= \sqrt{\frac{J^1_\star}{\mu_{\text{PL}}\left(\du(\ell\log^2\ell)\right)}}, \bar{\lambda}_\theta:= \sqrt{1 - \frac{3(1-\xi)\sigma_{\min}(Q)}{2\bar{J}}}, \bar{J}:=\max\{2J^1_\star, J^{\gamma_0}(0)\}, 
\end{align*}
and $\underline{\alpha}:= {3\sigma_{\min}(Q )}/{\left(2\bar{J} - 3\sigma_{\min}(Q)\right)}$.

\begin{theorem}\label{theorem:main-result} Given $\delta_\tau \in (0,1)$, $\delta_\sigma \in (0,1)$, and $\zeta > 0$. Suppose that $n_s  = \mathcal{O}(\zeta^4\mu^4_\psi\log^6(\ell))\ell$, $n_c = \mathcal{O}(\log(1/\delta_\tau))$, $\tau = \mathcal{O}( \log(1/\varepsilon_\tau)+\tau_0)$, $r = \mathcal{O}(\sqrt{\varepsilon_\tau})$, and $T = \mathcal{O}\left(\log\left( \frac{\ell^{7}(\dx - \ell)\mu_0}{(1 - |\lambda_{\ell+1}|)\varepsilon_{\text{dist}} \delta^3_\sigma} \right)\right)$, with 
$$
\varepsilon_{\text{dist}}:= \min\left\{\frac{\left(1 - \max\{\bar{\lambda}_\theta, |\lambda_{\ell+1}|\}\right)^\ell, }{C_{\text{dist,1}}}, \sqrt{\frac{J^1_\star}{C_{\text{dist,2}}}}\right\}.
$$
In addition, suppose that the number of PG iterations and step-size satisfy
$$
N \geq \frac{\mu_{\text{PL}}}{\eta}\log\left(\frac{2 \bar{J}^2}{(1-\xi)\sigma_{\min}(Q)J^1_\star}\right),\text{ } \eta = \widetilde{\mathcal{O}}\left(1/\left(\du\ell \right)\right).
$$
Then, Algorithm \ref{alg:disc_LQR} returns $K\in \mathcal{S}^1_K$ with $\rho(A+BK) < \bar{\lambda}_\theta$, within $j = \log(1/\gamma_0)/\log(1+\xi\underline{\alpha})$  iterations, with probability $1 - \bar{\delta}$, where $\bar{\delta} := \delta_\sigma + j(\delta_\tau+\bar{c}_1N(\ell^{-\zeta}+n^{-\zeta}_s-n_se^{-\ell/8} - e^{-\bar{c}_2n_s}))$.  
\end{theorem}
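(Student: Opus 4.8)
The plan is to chain together the three guarantees established in Sections~\ref{sec: learning the left unstable reprensentation}--\ref{sec:LTS on the unstable subspace} — subspace recovery, inner-loop policy-gradient convergence, and the outer discount-annealing schedule — and then transfer stability from the learned low-dimensional dynamics back to the full system through the controller representation $K=\theta\widehat{\Phi}^\top$. I would begin by fixing the ``good'' subspace event: applying Theorem~\ref{theorem:learning the left unstable subspace mb} with target accuracy $\varepsilon_{\text{dist}}$ and confidence level $\delta_\sigma/4$, the choice of $T$ stated here yields $d(\widehat{\Phi},\Phi)\le\varepsilon_{\text{dist}}$ with probability $1-\delta_\sigma$. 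Everything downstream is then carried out on this event, so that the misspecification term $C_\Phi\, d(\widehat{\Phi},\Phi)$ appearing in Lemmas~\ref{lemma: error gradient misspecification}--\ref{lemma:zeroth-order estimation mb} and the hypothesis of Lemma~\ref{lemma:error cost} are simultaneously controlled.

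Next I would analyze one inner PG loop at a fixed discount factor $\gamma_j$. Combining the descent step $\bar{\theta}_{n+1}=\bar{\theta}_n-\eta\,\widehat{\nabla}J^{\gamma_j}(\bar{\theta}_n,\widehat{\Phi})$ with the $L_\theta$-smoothness of Lemma~\ref{lemma:uniform bounds and lipschitz mb}, the magnitude and correlation bounds of Lemma~\ref{lemma:zeroth-order estimation mb}, and the gradient-dominance inequality of Lemma~\ref{lemma:gradient dominance mb}, a one-step Lyapunov argument gives a recursion of the form
\begin{align*}
J^{\gamma_j}(\bar{\theta}_{n+1})-J^{\gamma_j}_\star \le \left(1-\eta\,\mu_{\text{PL}}\,C_{\text{est},2}\right)\left(J^{\gamma_j}(\bar{\theta}_n)-J^{\gamma_j}_\star\right)+\eta\,\mathcal{O}\!\left(C_\Phi^2\, d(\widehat{\Phi},\Phi)^2+\varepsilon_\tau^2\right),
\end{align*}
where the step size $\eta=\widetilde{\mathcal O}(1/(\du\ell))$ is chosen to balance the $\mathcal O(\du\ell\log^2\ell)$ second-moment factor $C_{\text{est},1}$ in the smoothness penalty against the correlation gain $C_{\text{est},2}=\mathcal O(1)$. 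Iterating this contraction with the stated $N$ drives the optimality gap below target; the definitions of $\varepsilon_\tau$ and of the branch $\sqrt{J^1_\star/C_{\text{dist},2}}$ of $\varepsilon_{\text{dist}}$ are calibrated precisely so that the additive bias floor sits strictly below the level needed to certify $J^{\gamma_j}(\theta_{j+1})\le\bar J$, i.e.\ so that $\theta_{j+1}\in\mathcal S^{\gamma_j}_\theta$.

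I would then handle the outer annealing loop. The update $\gamma_{j+1}=(1+\xi\alpha_j)\gamma_j$ with $\alpha_j$ from~\eqref{eq:alpha_update} is legitimate because, on the good event, the Bauer--Fike estimate quoted before~\eqref{eq:alpha_update} lets us replace $\sigma_{\min}(\Phi^\top Q\Phi+\theta^\top R\theta)$ by its hatted counterpart at cost $\mathcal O(\|Q\|\sqrt\ell\,\kappa\,d(\widehat{\Phi},\Phi))$, and since $J^{\gamma_j}(\theta_{j+1})\le\bar J$ uniformly we obtain $\alpha_j\ge\underline{\alpha}$. The multiplicative growth then forces $\gamma_j\ge1$ after $j=\log(1/\gamma_0)/\log(1+\xi\underline{\alpha})$ steps, and the decay factor $\xi\in(0,1)$ supplies the uniform margin $\sqrt{\gamma_{j+1}}\,\rho(A^{\gamma_{j+1}}_u+B^{\gamma_{j+1}}_u\theta_{j+1})\le\bar{\lambda}_\theta$ defined before the theorem.

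Finally I would transfer stability back to~\eqref{eq:LTI_sys mb}. At $\gamma=1$ the controller satisfies $\rho(A_u+B_u\theta)\le\bar{\lambda}_\theta$, and the triangular controller decomposition shows $A+B\theta\Phi^\top$ has spectral radius $\max\{\bar{\lambda}_\theta,|\lambda_{\ell+1}|\}$; writing $K=\theta\widehat{\Phi}^\top$ introduces the perturbation $B\theta(\widehat{\Phi}-\Phi)^\top$ of norm at most $\|B\|\nu_\theta\sqrt{2\ell}\,d(\widehat{\Phi},\Phi)$, and the first branch $(1-\max\{\bar{\lambda}_\theta,|\lambda_{\ell+1}|\})^\ell/C_{\text{dist},1}$ of $\varepsilon_{\text{dist}}$ is exactly the perturbation budget keeping $\rho(A+BK)<\bar{\lambda}_\theta$. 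The stated $\bar\delta$ then follows from a union bound over the single subspace-learning failure, the $j$ cost estimates (each failing with probability $\delta_\tau$), and the $jN$ gradient estimates of Lemma~\ref{lemma:zeroth-order estimation mb}. The main obstacle — and the reason for the delicate minimum defining $\varepsilon_{\text{dist}}$ — is that one estimate $\widehat{\Phi}$ must simultaneously meet four competing accuracy demands (gradient misspecification, the hypothesis of Lemma~\ref{lemma:error cost}, the Bauer--Fike step legitimizing $\alpha_j$, and the lift-back), holding uniformly over all $jN$ inner iterations without the bias accumulating. The lift-back is especially subtle because $A$ may be non-diagonalizable, so a spectral-norm perturbation of order $d(\widehat{\Phi},\Phi)$ can displace eigenvalues by only its $\ell$-th root; making this quantitative while keeping $T$ logarithmic in $\dx$ is the crux.
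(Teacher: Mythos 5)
Your overall architecture matches the paper's proof (Appendix~\ref{appendix: stabilizing the unstable modes}): condition on the subspace-recovery event from Theorem~\ref{theorem:learning the left unstable subspace mb}; combine smoothness (Lemma~\ref{lemma:uniform bounds and lipschitz mb}), the zeroth-order bounds (Lemma~\ref{lemma:zeroth-order estimation mb}), and gradient dominance (Lemma~\ref{lemma:gradient dominance mb}) into a contraction-plus-bias recursion that keeps $J^{\gamma_j}(\theta_{j+1})\le\bar J$; deduce $\alpha_j\ge\underline{\alpha}$ via Lemma~\ref{lemma:error cost} and the Bauer--Fike replacement of $\sigma_{\min}(\Phi^\top Q\Phi+\theta^\top R\theta)$ so that $\gamma_j$ reaches one in $\log(1/\gamma_0)/\log(1+\xi\underline{\alpha})$ steps with the $\xi$-induced margin $\bar\lambda_\theta$; and union-bound over the $j$ cost estimates and $jN$ gradient estimates. (Your contraction factor $1-\eta\mu_{\text{PL}}C_{\text{est},2}$ versus the paper's $1-\eta\mu_1/(8\mu_{\text{PL}})$ is only a convention discrepancy in how $\mu_{\text{PL}}$ is defined, and the per-iterate induction keeping each $\bar\theta_n$ inside $\mathcal{S}^{\gamma_j}_\theta$ — which the paper carries out explicitly — is implicit but recoverable in your sketch.)

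The genuine gap is the lift-back step. You perturb the full $\dx$-dimensional closed loop directly, $A+BK=(A+B\theta\Phi^\top)+B\theta(\widehat\Phi-\Phi)^\top$, and assert that the spectral radius moves by the $\ell$-th root of the perturbation norm. But the only eigenvalue-perturbation tool valid for a non-diagonalizable matrix here is the generalized Bauer--Fike theorem (Theorem~\ref{lemma:bauer-fike}), whose root exponent is the reciprocal of the \emph{dimension of the matrix being perturbed}; applied to $A+B\theta\Phi^\top\in\mathbb{R}^{\dx\times\dx}$ it yields a displacement of order $d(\widehat\Phi,\Phi)^{1/\dx}$, not $d(\widehat\Phi,\Phi)^{1/\ell}$. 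That forces a budget $\varepsilon_{\text{dist}}\sim\bigl(1-\max\{\bar\lambda_\theta,|\lambda_{\ell+1}|\}\bigr)^{\dx}$, making $T\sim\log(1/\varepsilon_{\text{dist}})$ scale \emph{linearly} in $\dx$, so the theorem as stated (with the $\ell$-th power in $\varepsilon_{\text{dist}}$ and $T$ logarithmic in $\dx$) is not recovered. The paper avoids this by never perturbing the full matrix at once: it writes $\Omega^\top(A+BK)\Omega$ as a $2\times2$ block matrix whose top-right block $B_u\theta\widehat\Phi^\top\Phi_\perp$ has norm $O(d(\widehat\Phi,\Phi))$, first applies the Mathias quadratic-residual bound (Lemma~\ref{lemma:block perturbation}) so the off-diagonal coupling costs only $C_{\text{gap}}\|B_u\theta\widehat\Phi^\top\Phi_\perp\|\,\|\Delta+B_s\theta\widehat\Phi^\top\Phi\|=O(d(\widehat\Phi,\Phi))$ — linear, with no root, even though the bottom-left block containing $\Delta$ is not small — and only then invokes generalized Bauer--Fike on the $\ell\times\ell$ diagonal block $A_u+B_u\theta$ with perturbation $B_u\theta(\widehat\Phi^\top\Phi-I)$, which is the sole legitimate source of the $1/\ell$ exponent. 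You yourself flag this as ``the crux,'' but your proposal supplies no mechanism for it, and the direct-perturbation route you sketch cannot deliver it.
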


Note that $\bar{c}_1$ and $\bar{c}_2$ above are positive constants and the quantities $C_{\text{dist,1}}$ and $ C_{\text{dist,2}}$ are polynomial in the problem parameters $\|A\|$, $\|B\|$, $\|Q\|$, $\nu_\theta$, $L_\theta$, $L_K$ $\phi$, $\ell$ and $\du$. 

Theorem \ref{theorem:main-result} characterizes the convergence of Algorithm \ref{alg:disc_LQR} to a stabilizing controller of \eqref{eq:LTI_sys mb}. In particular, when the learned unstable subspace representation $\widehat{\Phi}$ is sufficiently accurate and, the number of rollouts $n_s$, $n_c$, time horizon $\tau$ and number of iterations $N$ are set large enough, with $r$ and $\eta$ small enough, our algorithm produces a low-dimensional controller that stabilizes the system's unstable dynamics, i.e., $\theta_{j+1} \in \mathcal{S}^1_\theta$. When lifted through $\widehat{\Phi}$, this controller stabilizes \eqref{eq:LTI_sys mb}. Our results also highlight that learning to stabilize on the unstable subspace becomes more demanding (as it requires more data) when the least stable mode, $|\lambda_{\ell+1}|$, approaches marginal stability (i.e., $|\lambda_{\ell+1}| \approx 1$). It is also important to emphasize that, in contrast to \cite{werner2025system}, our work is the first to provide the non-asymptotic guarantees for learning to stabilize LTI systems via the unstable subspace representation with policy gradient. We present the proof of Theorem \ref{theorem:main-result} in Appendix \ref{appendix:sample complexity}. Next, we briefly discuss the idea of the proof. 

\vspace{0.15cm}
\noindent \textbf{Proof idea:}  The first step of the proof is to guarantee that $J^{\gamma_j}(\theta) \leq \bar{J}$ for every iteration. To do so, we use Lemmas \ref{lemma:uniform bounds and lipschitz mb}, \ref{lemma:gradient dominance mb}, and \ref{lemma:zeroth-order estimation mb}) to determine the number of PG iterations $N$ to ensure $J^{\gamma_j}(\theta) \leq \bar{J}$. A preliminary condition on $d(\widehat{\Phi},\Phi)$, and on the estimation parameters $n_s, n_c, \tau$, and $r$, comes from this step, where we set their corresponding error terms to scale as $\mathcal{O}(\bar{J} - J^\gamma_\star)$. We note that such a uniform bound on the cost implies that $\alpha_j$ is uniformly lower bounded as $\alpha_j \geq \underline{\alpha}$. Hence, $\gamma_j$ reaches one within ${\log(1/\gamma_0)}/{\log(1+\xi\underline{\alpha})}$ iterations. Moreover, since $\sqrt{(1+\alpha_j)\gamma_j}\rho(A_u + B_u\theta_{j+1}) < 1$, then it holds that  $\sqrt{(1+\xi\alpha_j)\gamma_j}\rho(A_u + B_u\theta_{j+1}) < \bar{\lambda}_\theta$. We emphasize that $\bar{\lambda}_\theta$ depends on $\xi$, which is set within $(0,1)$ to guarantee that the spectral radius of the closed-loop low-dimensional system is much smaller than one. It then follows from an induction step that $\rho(A_u + B_u \theta_{j+1}) < \bar{\lambda}_\theta$.

The final step of the proof is to demonstrate that $K = \theta_{j+1}\widehat{\Phi}^\top$ stabilizes \eqref{eq:LTI_sys mb}. For this, we note that $A + BK$ is equivalent to  
\begin{align*}
\Omega \left(\begin{bmatrix}
        A_u +B_u\theta_{j+1} \widehat{\Phi}^\top \Phi & B_u\theta_{j+1}\widehat{\Phi}^\top \Phi_\perp\\
        \Delta + B_s \theta_{j+1} \widehat{\Phi}^\top \Phi & A_s + B_s \theta_{j+1} \widehat{\Phi}^\top \Phi_\perp
    \end{bmatrix}\right)\Omega^\top,
\end{align*}
where its spectral radius can be controlled by using the block perturbation bound from \cite{mathias1998quadratic} and the generalized Bauer-Fike theorem \citep{golub2013matrix}. In particular, it is important to remark that the exponential dependence on $\ell$ showing up in $\varepsilon_{\text{dist}}$ follows from the generalized Bauer-Fike theorem, due to the fact that $A_u + B_u\theta_{j+1}$ and $A_s$ are non-diagonalizable.

We are now in place to characterize the sample complexity of Algorithm \ref{alg:disc_LQR}. To do so, we first quantify the sample complexity of the discounted LQR method (i.e., lines 4-8 of Algorithm \ref{alg:disc_LQR}) as the total number of system rollouts, denoted by $\mathcal{S}_c : = j(n_c + n_sN)$ as in \cite{zhao2024convergence}. 

\begin{corollary}\label{corollary:main-result} Let the arguments of Theorem \ref{theorem:main-result} hold. Then, Algorithm \ref{alg:disc_LQR} returns a stabilizing policy for system \eqref{eq:LTI_sys mb} within $\mathcal{S}_c = \log(\rho(A))\widetilde{\mathcal{O}}(\textcolor{blue}{\ell^2} \du)$ trajectories collected from \eqref{eq:LTI_sys mb}. 
\end{corollary}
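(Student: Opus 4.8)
The plan is to substitute the parameter schedule fixed in Theorem~\ref{theorem:main-result} directly into the definition $\mathcal{S}_c := j(n_c + n_s N)$ and to track only the dependence on $\ell$, $\du$, and $\rho(A)$, treating the problem-dependent quantities ($\mu_{\text{PL}}$, $\bar{J}$, $J^1_\star$, $\sigma_{\min}(Q)$, $\xi$, $\underline{\alpha}$, $\mu_\psi$, $\zeta$, $\delta_\tau$) and all logarithmic terms as constants absorbed into $\widetilde{\mathcal{O}}(\cdot)$. Since $\mathcal{S}_c$ counts only the rollouts drawn from \eqref{eq:LTI_sys mb} in lines 4--8, the cost $T$ of the adjoint subspace-learning phase (the only place $\dx$ enters, and only logarithmically) does not appear, which is why the final bound is free of $\dx$.

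First I would bound the three factors of $n_c + n_s N$ separately. From $n_s = \mathcal{O}(\zeta^4\mu_\psi^4\log^6(\ell))\ell$ we read off $n_s = \widetilde{\mathcal{O}}(\ell)$, and from $n_c = \mathcal{O}(\log(1/\delta_\tau))$ we get $n_c = \widetilde{\mathcal{O}}(1)$. For the inner PG loop, the iteration count $N = \frac{\mu_{\text{PL}}}{\eta}\log\!\big(\tfrac{2\bar{J}^2}{(1-\xi)\sigma_{\min}(Q)J^1_\star}\big)$ together with the step size $\eta = \widetilde{\mathcal{O}}(1/(\du\ell))$ gives $1/\eta = \widetilde{\mathcal{O}}(\du\ell)$, hence $N = \widetilde{\mathcal{O}}(\du\ell)$, because $\mu_{\text{PL}}$ and the logarithmic argument are dimension-free. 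Multiplying, the inner rollout budget is dominated by $n_s N = \widetilde{\mathcal{O}}(\ell)\cdot\widetilde{\mathcal{O}}(\du\ell) = \widetilde{\mathcal{O}}(\du\ell^2)$, so that $n_c + n_s N = \widetilde{\mathcal{O}}(\du\ell^2)$.

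Next I would control the outer factor $j = \log(1/\gamma_0)/\log(1+\xi\underline{\alpha})$. The initialization $\gamma_0 < 1/\bar{\lambda}_1^2$ with $\bar{\lambda}_1 \geq |\lambda_1| = \rho(A)$ yields a numerator $\log(1/\gamma_0) = 2\log\bar{\lambda}_1 = \mathcal{O}(\log\rho(A))$. The denominator is bounded away from zero by a dimension-free constant: Theorem~\ref{theorem:main-result} supplies the uniform lower bound $\alpha_j \geq \underline{\alpha} = 3\sigma_{\min}(Q)/(2\bar{J} - 3\sigma_{\min}(Q))$, which depends only on the artificial cost $Q$ and the uniform cost bound $\bar{J}$, and not on $\ell$, $\du$, or $\dx$. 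Therefore $j = \mathcal{O}(\log\rho(A))$, and combining the two estimates gives $\mathcal{S}_c = j(n_c + n_s N) = \mathcal{O}(\log\rho(A))\cdot\widetilde{\mathcal{O}}(\du\ell^2) = \log(\rho(A))\,\widetilde{\mathcal{O}}(\ell^2\du)$, as claimed.

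The computation itself is routine, so the main thing to verify carefully — rather than a genuine obstacle — is that the bound $\underline{\alpha}$ used to count $j$ is truly dimension-free and is in fact realized under the decay factor $\xi\in(0,1)$; this is precisely what guarantees that $\gamma_j$ reaches one in $\mathcal{O}(\log\rho(A))$ steps with no hidden $\ell$- or $\du$-dependence leaking into the iteration count. A secondary point is confirming that the logarithmic argument of $N$ carries no polynomial dimension dependence, so that $N$ scales only through $1/\eta$; I would lean on the fact that $\phi$, $\nu_\theta$, $L_\theta$, $L_K$, and $\mu_{\text{PL}}$ are defined in Lemmas~\ref{lemma:uniform bounds and lipschitz mb}--\ref{lemma:gradient dominance mb} as uniform constants over the stabilizing sublevel sets, so that the only dimensional growth is the explicit $\du\ell^2$ coming from $n_s N$.
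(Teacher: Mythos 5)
Your proposal is correct and takes essentially the same route as the paper: both substitute the parameter schedule of Theorem~\ref{theorem:main-result} into $\mathcal{S}_c = j(n_c + n_s N)$, obtaining $n_c + n_s N = \widetilde{\mathcal{O}}(\ell)\cdot\widetilde{\mathcal{O}}(\du \ell) = \widetilde{\mathcal{O}}(\ell^2 \du)$, and bound the outer iteration count by $\log(1/\gamma_0)/\log(1+\xi\underline{\alpha}) = \mathcal{O}(\log \rho(A))$ via the dimension-free lower bound $\underline{\alpha} = 3\sigma_{\min}(Q)/(2\bar{J}-3\sigma_{\min}(Q))$. The only difference is bookkeeping: the paper's appendix version additionally tallies the adjoint-phase samples ($T + \dx$) and the per-rollout horizon $\tau$, which, exactly as you observe, lie outside the corollary's count of trajectories from lines 4--8 (or are absorbed logarithmically).
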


This result demonstrates the sample complexity reduction achieved by learning to stabilize on the unstable subspace. In contrast to \cite{zhao2024convergence}, where it scales as $\log(\rho(A))\widetilde{\mathcal{O}}(\textcolor{red}{\dx^2} \du)$, our approach significantly improves scalability by requiring a number of rollouts that depends on the number of unstable modes, rather than the full state dimension. It is also important to highlight that Algorithm \ref{alg:disc_LQR} also collects samples from the adjoint system, which scales with $T$ and $\dx$, where the latter is due to the element-wise computations with the adjoint operator. However, we note that for a ``regular" system where the least unstable and stable modes are strictly away from marginal stability, and $\text{gm}(\lambda) = 1$ for the unstable modes, the order of $T$ is negligible. As a result, $\widetilde{\mathcal{O}}(\textcolor{blue}{\ell^2} \du) + \mathcal{O}(\dx)$ is much smaller than $\widetilde{\mathcal{O}}(\textcolor{red}{\dx^2} \du)$ when $\ell \ll \dx$ (i.e., our regime of interest).

\section{Numerical Validation} \label{sec:numerical_results}

We now present numerical experiments to validate and illustrate our theoretical guarantees\footnote{Code is available at \url{https://github.com/jd-anderson/LTS-unstable-representation}.}. Additional experimental results and implementation details are provided in Appendix~\ref{appendix:numerics}.

Consider the cartpole dynamics as our nominal system $(A_0, B_0)$ with four states and single input where $(A_0,B_0)$ are obtained by linearizing (around
the origin) and discretizing with Euler’s method the following equations: 
\begin{align}\label{carpole dynamics}
&(m_c + m_p) \ddot{x} + m_p\ell_p (\ddot{\zeta}\cos(\zeta) - \dot{\zeta}^2\sin(\zeta)) = u, \text{ and } m_p(\ddot{x}\cos(\zeta) + \ell_p\ddot{\zeta} - g\sin(\zeta)) = 0,
\end{align}
\begin{wrapfigure}{r}{0.55\textwidth}
    \centering
    \includegraphics[width=0.55\textwidth]{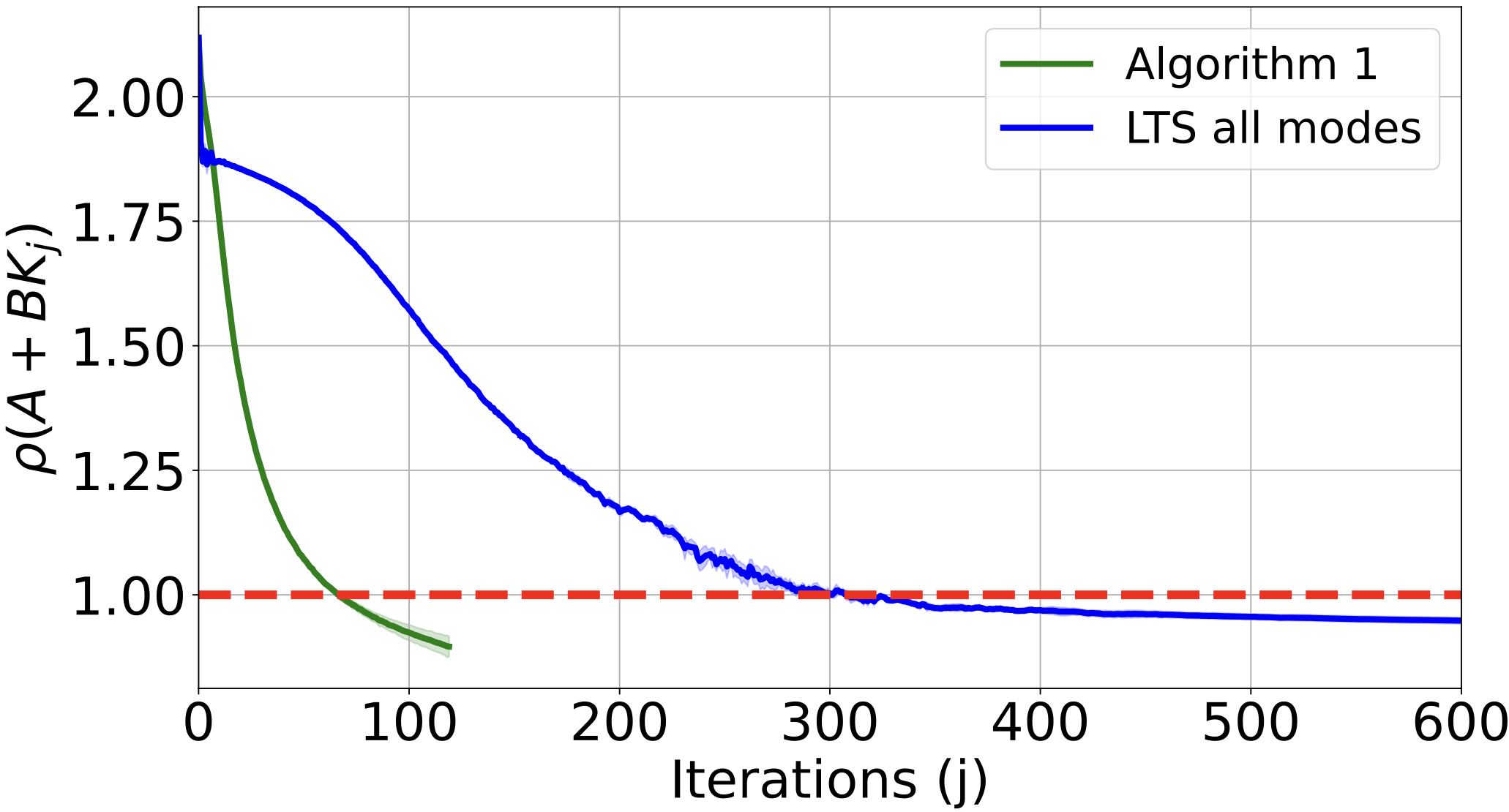}
    \caption{Closed-loop spectral radius w.r.t. iterations.} 
    \label{fig:spectral-radius}
\end{wrapfigure}
where $x$ is the position of the cart and $\zeta$ denotes the angle of the pendulum. In addition, $m_c = 1$, $m_p = 1$, and $\ell_p = 1$ denote the mass of the cart, the mass of the pole, and the length of the pole, respectively. We set the gravitational constant to $g = 10$ and the discretization step-size to $0.25$. The resulting discrete-time LTI dynamics $(A_0,B_0)$ has $\ell = 3$ unstable modes with $|\lambda_1| = |\lambda_2| = 1$ and $|\lambda_3| = 2.12$, where the geometric multiplicity of $\lambda_1$ is equal to one. This nominal system is then augmented by adding random stable modes, resulting in a higher-dimensional system with $\dx = 30$ states and single input, while preserving the original three unstable modes of $(A_0,B_0)$. We adopt $T = 40$ samples from the adjoint system to learn the left unstable subspace representation. Figures \ref{fig:spectral-radius} and \ref{fig:gamma} depict the mean and standard deviation across five runs.

\begin{wrapfigure}{r}{0.55\textwidth}
    \centering
    \includegraphics[width=0.51\textwidth]{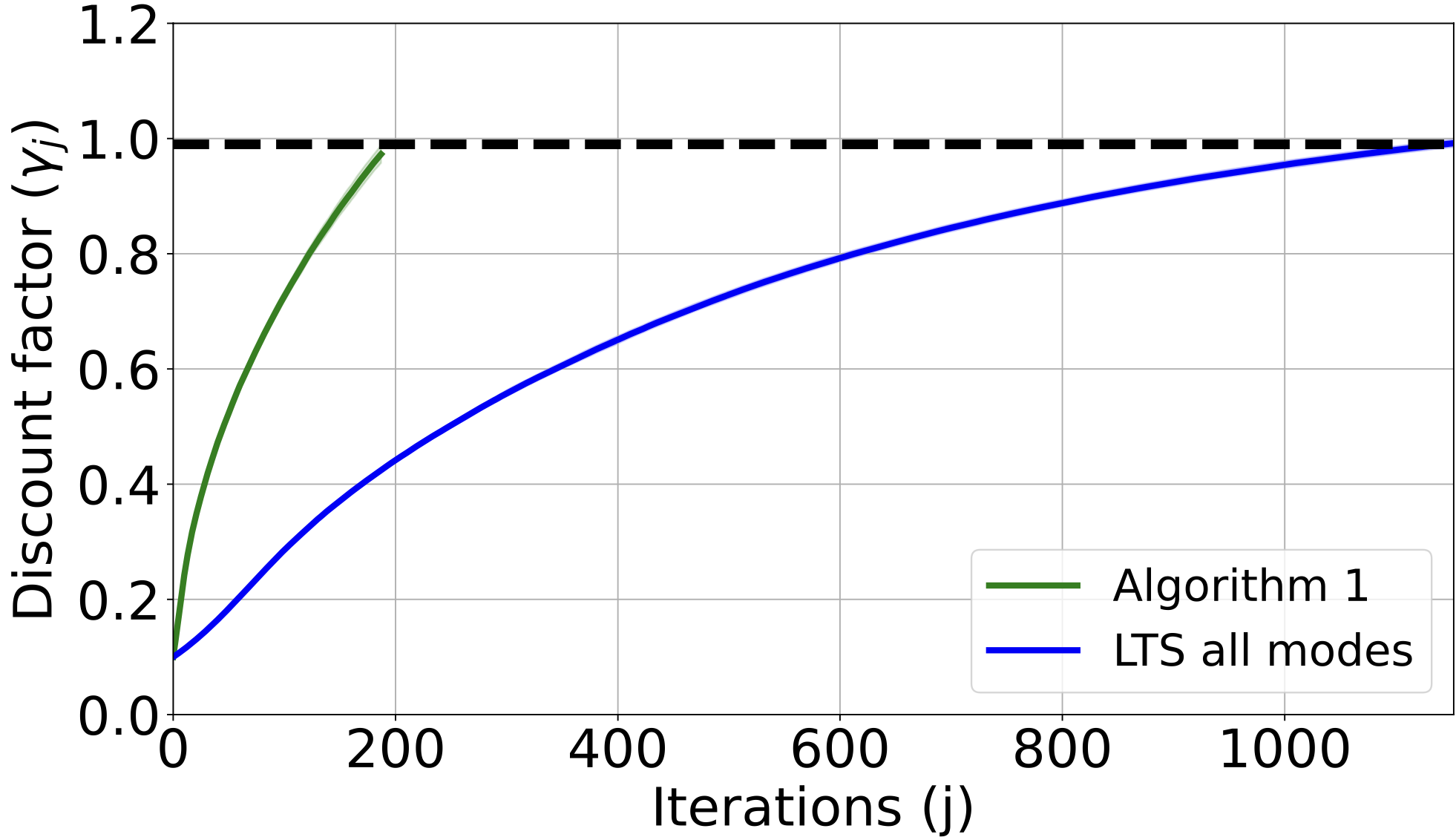}
    \caption{Discount factor w.r.t. iterations.} 
    \label{fig:gamma}
\end{wrapfigure}

Figure \ref{fig:spectral-radius} illustrates the closed-loop spectral radius $\rho(A+BK_{j})$ with respect to the iteration count $j$, for two cases: 1) (green curve) Algorithm~\ref{alg:disc_LQR}, where we learn an unstable subspace representation and perform discounted LQR method to stabilize only the unstable modes the high-dimensional system; 2) (blue curve) applying the discounted LQR method \citep{zhao2024convergence} to stabilize the full dynamics of the high-dimensional system. We note that, by stabilizing only the unstable dynamics while operating on the unstable subspace, Algorithm \ref{alg:disc_LQR} can significantly reduce the number of iterations and thus the amount of samples required to find a stabilizing policy.

This trend is even more pronounced in Figure~\ref{fig:gamma}, which shows the evolution of the discount factor $\gamma_j$ as a function of the iteration count $j$. We observe that Algorithm~\ref{alg:disc_LQR} reaches $\gamma_j = 1$ in approximately 200 iterations, whereas the approach that stabilizes all modes, as in \cite{zhao2024convergence}, requires around 1200 iterations. These results support our theoretical guarantees (i.e., Theorem~\ref{theorem:main-result} and Corollary~\ref{corollary:main-result}) which predict the sample complexity reduction achieved by restricting policy gradient updates to the left unstable subspace in the discounted LQR setting.

\section{Conclusions and Future Work} \label{sec:conclusions}

We studied the problem of learning to stabilize an LTI system. To solve this problem, we proposed a sample efficient method to learn the left unstable space of the system with finite-sample guarantees. We then applied a discount LQR method based on the learned left unstable subspace representation of the system. We proved that when the unstable subspace representation is accurately recovered, the discount method on the unstable subspace returns a stabilizing policy for the original system within a number of iterations that is much smaller than that of learning to stabilize all modes. Compared to existing works, our approach accommodates non-diagonalizable systems and reveal the sample complexity reduction of LTS on the unstable subspace. Future work includes studying the LTS problem for multiple systems with ``similar" unstable subspaces and learning the representation online where we continuously update the learned unstable subspace as more data becomes available.

\section{Acknowledgments} 
The authors thank Bruce D. Lee for instructive discussions at the initial stage of this work. Leonardo F. Toso is funded by the Center for AI and Responsible Financial Innovation (CAIRFI) Fellowship and by the Columbia Presidential Fellowship. James Anderson is partially funded by NSF grants ECCS 2144634 and 2231350. Lintao Ye is supported in part by NSFC grant 62203179.

\bibliographystyle{unsrtnat}
\bibliography{references}

\newpage 
\appendix
\addcontentsline{toc}{section}{Appendix}
\part{Appendix}
\parttoc 
\newpage

\section{Appendix Roadmap}

The appendix is organized as follows. Section~\ref{appendix:numerics} provides additional experiments and further details on the experimental setup used in Section~\ref{sec:numerical_results} to validate our theoretical guarantees. Next, in Section~\ref{appendix:auxiliary results}, we revisit several auxiliary results crucial for deriving the convergence guarantees of Algorithm~\ref{alg:disc_LQR}, including the Davis-Kahan theorem \citep{davis1970rotation} and the generalized Bauer-Fike theorem \citep{golub2013matrix}, which are used to control the subspace distance $d(\widehat{\Phi}, \Phi)$ and the closed-loop spectral radius $\rho(A + B \theta_{j+1} \widehat{\Phi}^\top)$, respectively. We then re-state the discounted LQR problem and the linear decomposition of the high-dimensional control gain $K$ in Sections~\ref{appendix:discounted LQR problem} and~\ref{appendix: linear decomposition of the control policy}, respectively, where we derive an upper bound on $\left\|\nabla J^\gamma(\theta, \Phi) - \nabla J^\gamma(\theta, \widehat{\Phi})\right\|$ in Section~\ref{appendix:gradient error misspecified rep}. The gradient and cost estimation guarantees are presented in Section~\ref{appendix:cost and gradient estimation}. 

Section~\ref{Appendix:learning the left unstable subspace} is dedicated to establishing finite-sample guarantees for learning the left unstable subspace representation, which are then leveraged in Section~\ref{appendix: stabilizing the unstable modes} to derive conditions on the problem parameters under which Algorithm~\ref{alg:disc_LQR} returns a stabilizing policy for system~\eqref{eq:LTI_sys mb}. 

\section{Additional Experiments} \label{appendix:numerics}

For the numerical experiments presented in Section \ref{sec:numerical_results}, we consider the cartpole dynamics \eqref{carpole dynamics} and obtain the following nominal system matrices:
\begin{align*}
    A_0 = \begin{bmatrix}
        1  &  0.25 &  0  &  0\\
        0  &  1  & -2.5 &  0\\
        0  &  0  &  1  &  0.25\\
        0  &  0  & 5  &  1
    \end{bmatrix}, B_0 = \begin{bmatrix}
        0  \\
        0.25\\
        0 \\
        -0.25
    \end{bmatrix}. 
\end{align*}

\noindent \textbf{Augmenting the nominal system:}  $A = \texttt{blkdiag}(A_0, \frac{1}{2}(\tilde{A}+\tilde{A}^\top)/\|\tilde{A}+\tilde{A}^\top\|)$ where $\tilde{A} \in \mathbb{R}^{\dx - 4\times \dx - 4}$ is a random matrix with entries drawn from a normal distribution. In addition, $B =[B_0^\top \;\ \frac{1}{2}\tilde{B}^\top/\|\tilde{B}\| ]^\top$ where $\tilde{B} \in \mathbb{R}^{(\dx - 4) \times \du}$ has also i.i.d. normal distributed entries. 
\vspace{0.2cm}

\noindent \textbf{Problem parameters:} We use $T = 40$ for the number of samples collected from the adjoint system to learn the left unstable subspace representation. The zeroth-order estimation and Algorithm \ref{alg:disc_LQR} parameters are set to $n_s = 20$, $n_c = 100$, $\tau = 50$, $r = 1\times 10^{-3}$, $\gamma_0 = 0.1$, and $\xi = 0.9$. We refer the reader to our code\footnote{Code is available at \url{https://github.com/jd-anderson/LTS-unstable-representation}.} for additional details. 

\vspace{0.2cm}

\noindent \textbf{Inverted Pendulum:} We also provide numerical results for the linearized (around the origin) and discretized (with Euler's method) inverted pendulum dynamics given by 

\begin{align*}
    A_0=\left[\begin{array}{cc}
1 & d t \\
\frac{g}{\ell} d_t & 1
\end{array}\right], B_0=\left[\begin{array}{c}
0 \\
\frac{d_t}{m \ell_p^2}
\end{array}\right],
\end{align*}
where $g = 10$, $m = 1$, $\ell_p = 1$, and $d_t = 0.25$. We augment this nominal system as discussed previously, where the ambient problem dimension is set to $\dx = 20$. The inverted pendulum has a single unstable mode $\lambda_1 = 1.79$ and it is easier to stabilize compared to the cartpole system \eqref{carpole dynamics}. The problem parameters are set as follows: $T = 40$, $n_s = 20$, $n_c = 100$, $\tau = 50$, $r = 1\times 10^{-3}$, $\gamma_0 = 0.1$, and $\xi = 0.9$. Figure \ref{fig:inverted_pendulum} shows the closed-loop spectral radius $\rho(A+BK_{j})$ (left) and the discount factor $\gamma_j$ (right) with respect to the iteration count $j$, for three different cases: 1) (green curve) Algorithm~\ref{alg:disc_LQR} to stabilize a system with $\dx = 20$ states; 2) (blue curve) discounted LQR method, as in \citep{zhao2024convergence}, to stabilize the full dynamics of a system with $\dx = 20$ states; and 3) (purple curve) also learning to stabilize all modes but with a system with $\dx = 10$ states. 

\begin{figure*}
  \centering
  \begin{minipage}{.49\textwidth}
    \centering
    \includegraphics[width=1\textwidth]{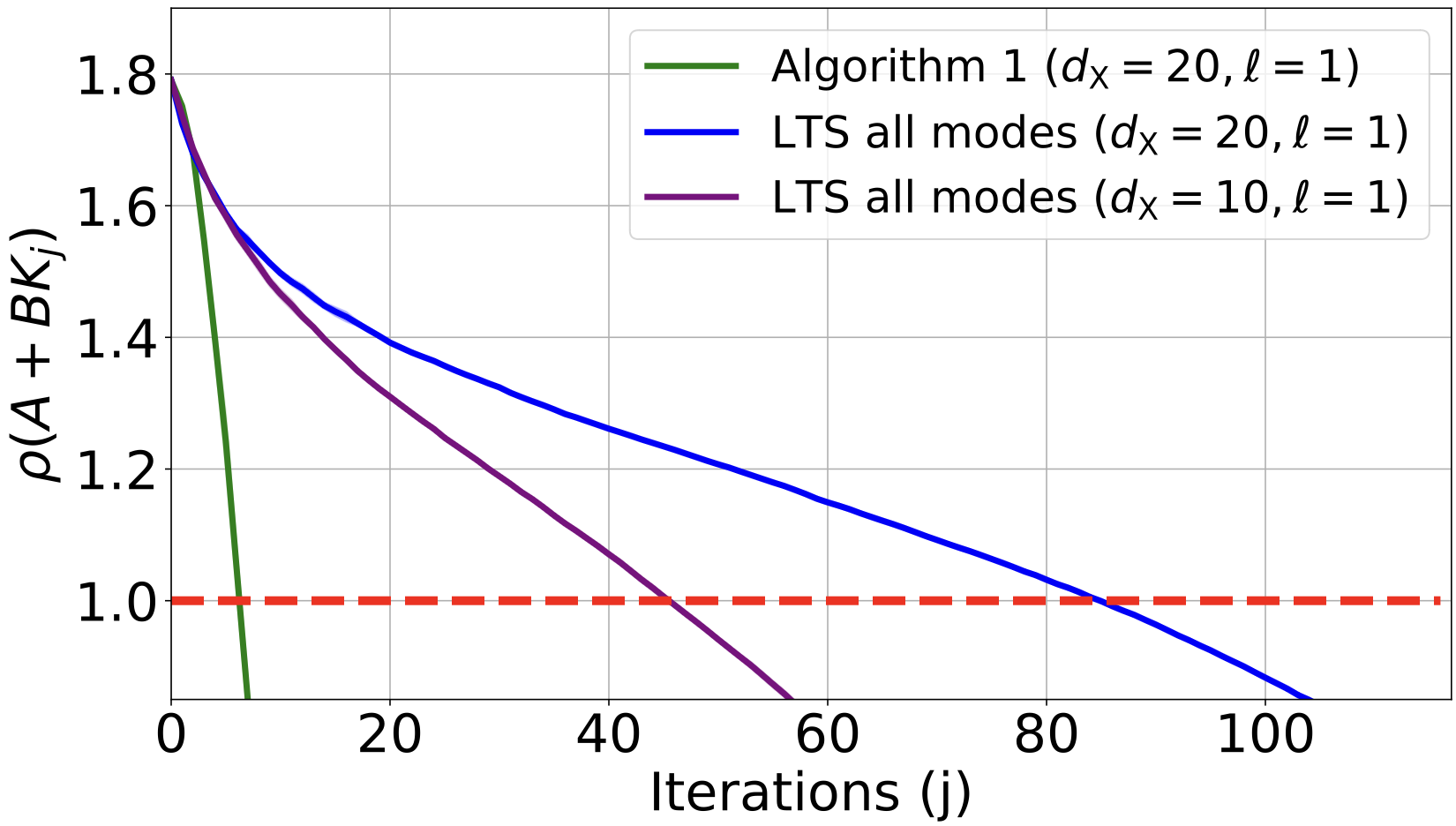}
  \end{minipage}
  \begin{minipage}{.49\textwidth}
    \centering
  \includegraphics[width=1\textwidth]{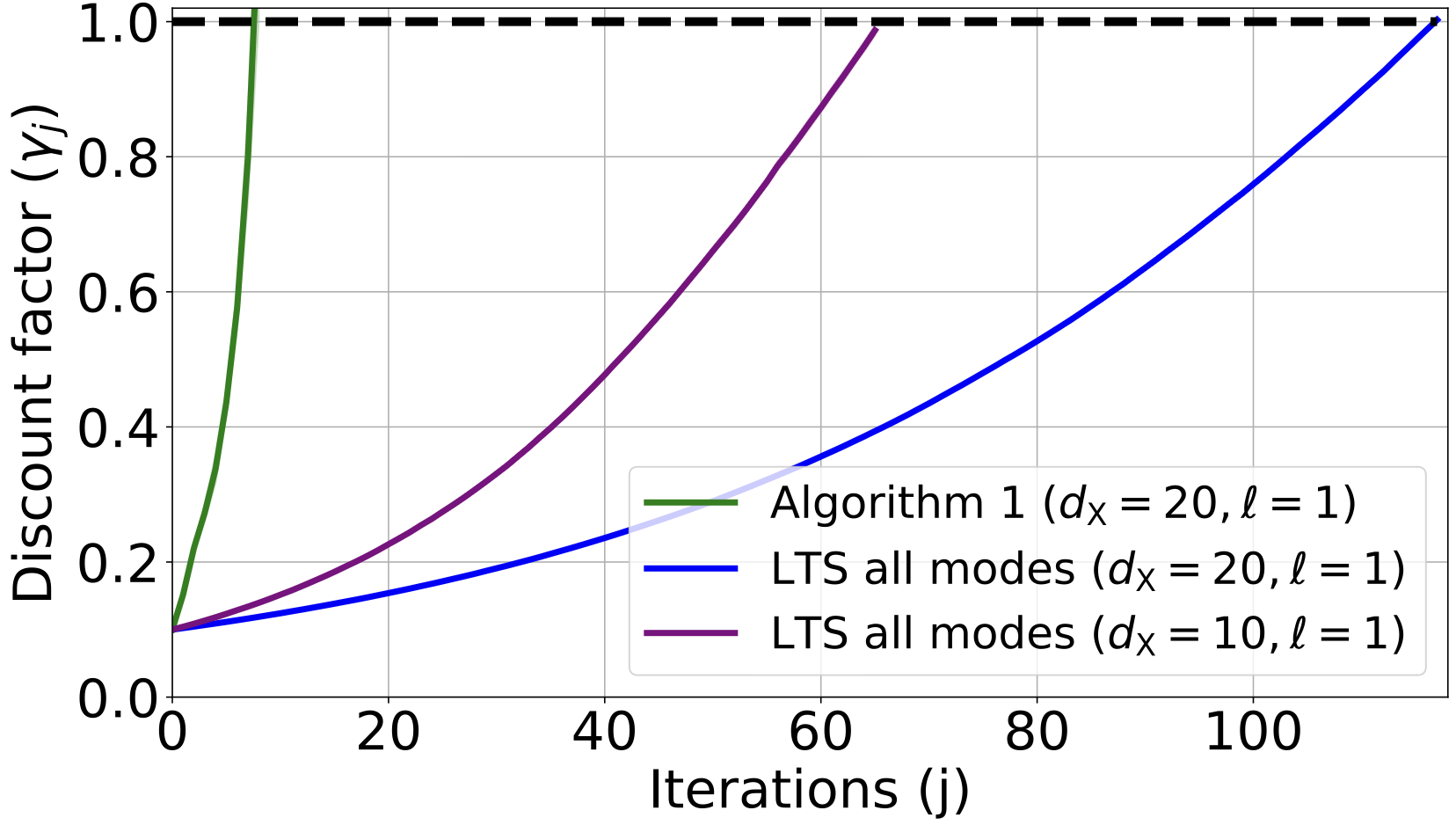}
  \end{minipage}
  \caption{Closed-loop spectral radius (left) and discount factor (right) w.r.t the iteration count.}
  \label{fig:inverted_pendulum}
\end{figure*}

As predicted by our theoretical guarantees (Theorem~\ref{theorem:main-result} and Corollary~\ref{corollary: sample complexity}), Algorithm~\ref{alg:disc_LQR} significantly reduces the number of iterations and thus the number of samples required to find a stabilizing controller. Remarkably, this reduction persists even when compared to LTS all modes of a system with only half the state dimension of the ambient system to which Algorithm~\ref{alg:disc_LQR} is applied.

\vspace{0.2cm}

\noindent \textbf{Random System with Multiple Inputs:} We also provide experimental results for the setting where the system has multiple inputs. In particular, we generate an open-loop unstable system $(A_0,B_0)$ with $A_0 = 2 (\tilde{A}+\tilde{A}^\top)/\|\tilde{A}+\tilde{A}^\top\|$ and $B_0 = \tilde{B}^\top/\|\tilde{B}\|$ with $\tilde{A}$ and $\tilde{B}$ having entries randomly drawn from a normal distribution. We note that $(A_0,B_0)$ is controllable with probability one. In particular, we randomly generate the following system matrices:
\begin{align*}
    A_0 = \begin{bmatrix}
0.68 & 0.68 & -0.16 & 0.49 & 0.45 \\
0.68 & 0.45 & -0.04 & -0.01 & 0.39 \\
-0.16 & -0.04 & 0.62 & 0.77 & 0.47 \\
0.49 & -0.01 & 0.77 & 1.41 & -0.34 \\
0.45 & 0.39 & 0.47 & -0.34 & -0.67
\end{bmatrix}, B_0 = \begin{bmatrix}
-0.20 & 0.21 & 0.19 \\
0.00 & -0.17 & 0.31 \\
0.04 & -0.23 & -0.25 \\
-0.01 & 0.11 & 0.29 \\
0.49 & -0.54 & 0.11
\end{bmatrix},
\end{align*}
which are augmented by following the same procedure discussed previously. 

\begin{figure*}
  \centering
  \begin{minipage}{.49\textwidth}
    \centering
    \includegraphics[width=1\textwidth]{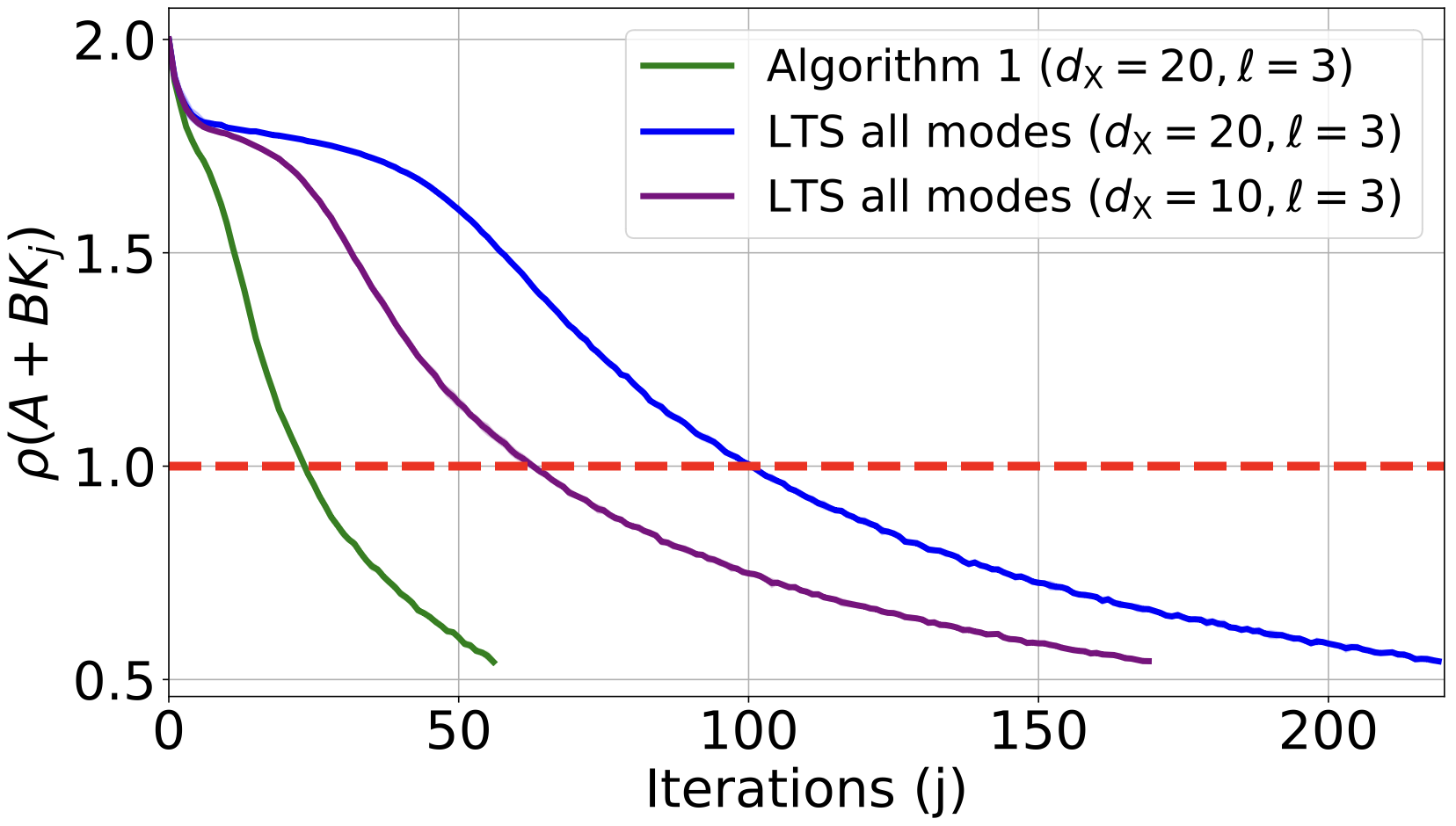}
  \end{minipage}
  \begin{minipage}{.49\textwidth}
    \centering
  \includegraphics[width=1\textwidth]{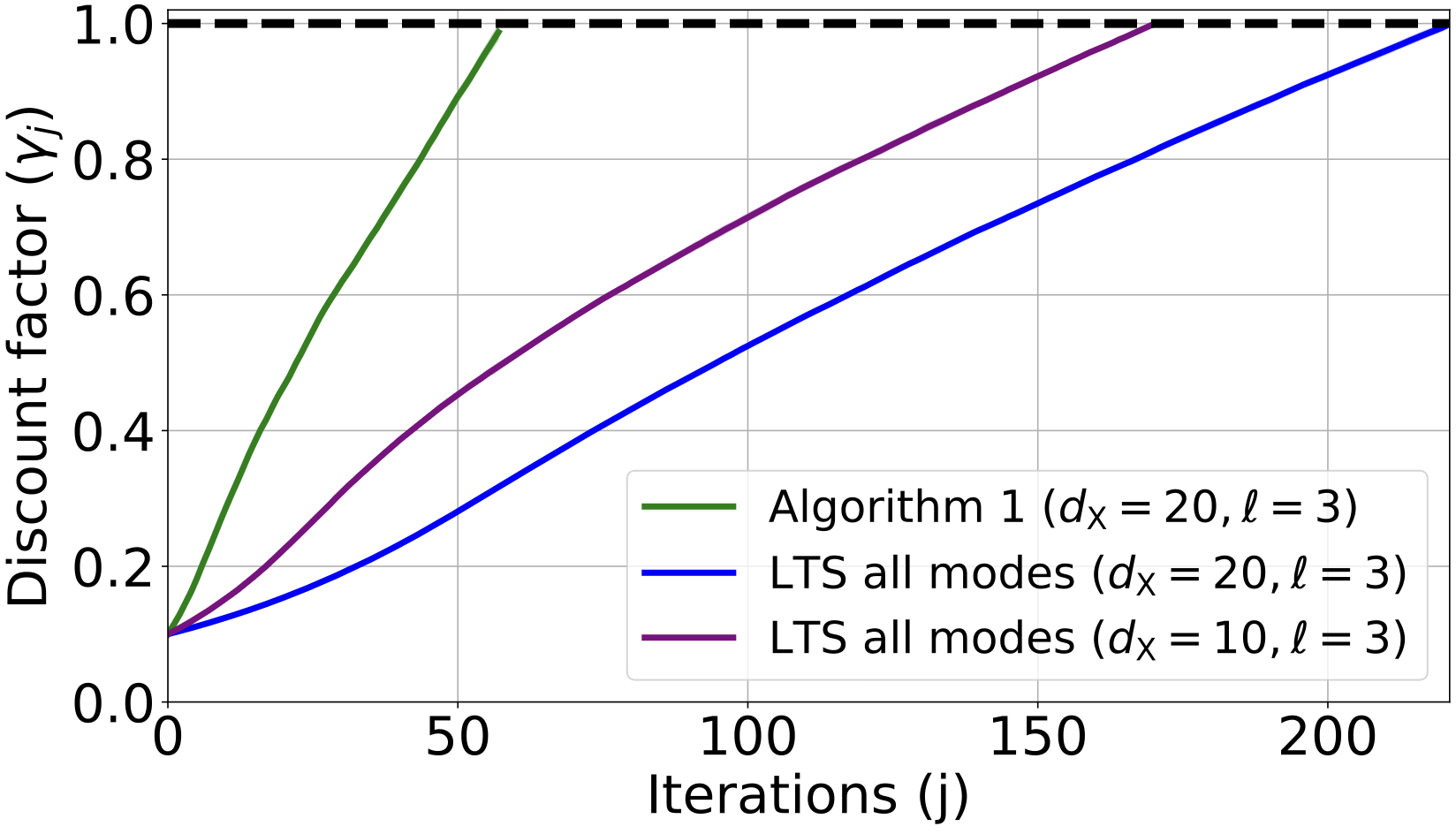}
  \end{minipage}
  \caption{Closed-loop spectral radius (left) and discount factor (right) w.r.t the iteration count.}
  \label{fig:random_matrices}
\end{figure*}

Similar to previous results, Figure~\ref{fig:random_matrices} illustrates the reduction in the number of iterations and overall sample complexity achieved by Algorithm~\ref{alg:disc_LQR}, compared to the approach that stabilizes all modes as in \cite{zhao2024convergence}. These results further validate our theoretical guarantees and highlight the efficiency of the proposed method for learning a stabilizing controller for LTI systems.

\section{Auxiliary Results}\label{appendix:auxiliary results}

\begin{lemma}[Young's inequality]  Given any two real-valued matrices $A, B \in \mathbb{R}^{n\times m}$. It holds that

\begin{align}\label{eq:youngs}
\|A+B\|_2^2 &\leq(1+\beta)\|A\|_2^2+\left(1+\frac{1}{\beta}\right)\|B\|_2^2 \leq (1+\beta)\|A\|_F^2+\left(1+\frac{1}{\beta}\right)\|B\|_F^2,
\end{align}
for any positive scalar $\beta>0$. In addition, we have
\begin{align}\label{eq:youngs_inner_product}
\langle A, B\rangle & \leq \frac{\beta}{2}\lVert A\rVert_2^2 +\frac{1}{2\beta}\lVert B \rVert_2^2  \leq  \frac{\beta}{2}\lVert A\rVert_F^2 +\frac{1}{2\beta}\lVert B \rVert_F^2.
\end{align}
\end{lemma}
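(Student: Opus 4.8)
The plan is to derive both inequalities from the single scalar fact $2ab \le \beta a^2 + \beta^{-1} b^2$, valid for all $a,b \ge 0$ and every $\beta>0$, which is itself nothing more than the expansion of the nonnegative square $(\sqrt{\beta}\,a - b/\sqrt{\beta})^2 \ge 0$. No machinery beyond this and the elementary norm domination $\|M\|_2 \le \|M\|_F$ is needed; the only points that require a little care are the direction of the norm-domination steps and keeping the perturbation parameter $\beta$ in the correct (symmetric) position so the stated coefficients come out exactly.

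For the norm inequality \eqref{eq:youngs}, I would first apply the triangle inequality for the spectral norm to get $\|A+B\|_2 \le \|A\|_2 + \|B\|_2$, then square to obtain $\|A+B\|_2^2 \le \|A\|_2^2 + 2\|A\|_2\|B\|_2 + \|B\|_2^2$. Invoking the scalar Young inequality on the cross term with $a=\|A\|_2$ and $b=\|B\|_2$ gives $2\|A\|_2\|B\|_2 \le \beta\|A\|_2^2 + \beta^{-1}\|B\|_2^2$, and collecting terms collapses the right-hand side to $(1+\beta)\|A\|_2^2 + (1+\beta^{-1})\|B\|_2^2$, which is the first inequality. The second inequality is then immediate from $\|M\|_2 \le \|M\|_F$ applied to $M=A$ and $M=B$, since the coefficients $1+\beta$ and $1+\beta^{-1}$ are positive. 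The identical argument carries through verbatim if one starts from the triangle inequality for $\|\cdot\|_F$, which is why the bound holds for either norm on the left.

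For the inner-product inequality \eqref{eq:youngs_inner_product}, the substantive and directly usable bound is the Frobenius one, which I would get by completing the square: expanding the nonnegative quantity $0 \le \|\sqrt{\beta}\,A - \beta^{-1/2}B\|_F^2 = \beta\|A\|_F^2 - 2\langle A,B\rangle + \beta^{-1}\|B\|_F^2$, where $\langle A,B\rangle = \operatorname{tr}(A^\top B)$ and $\|A\|_F^2 = \operatorname{tr}(A^\top A)$, and rearranging yields $\langle A,B\rangle \le \tfrac{\beta}{2}\|A\|_F^2 + \tfrac{1}{2\beta}\|B\|_F^2$; this is the form actually invoked downstream. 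The intermediate spectral-norm expression appearing in the statement is in turn dominated by this Frobenius expression through $\|M\|_2 \le \|M\|_F$, which secures the end of the displayed chain. Overall there is essentially no obstacle to flag here beyond this sign/placement bookkeeping, since every step is a one-line manipulation; the only genuinely delicate point is to recognize that the operative content for the inner-product case is the Frobenius bound rather than a direct spectral-norm comparison of $\langle A,B\rangle$.
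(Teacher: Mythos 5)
The paper states this lemma as a standard auxiliary fact and gives no proof at all, so there is no in-paper argument to compare against; your derivation of \eqref{eq:youngs} is correct and is the standard one (triangle inequality, scalar Young's inequality $2ab\le\beta a^2+\beta^{-1}b^2$ on the cross term, then $\|M\|_2\le\|M\|_F$ applied termwise, which is legitimate because the coefficients $1+\beta$ and $1+\beta^{-1}$ are positive).

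For \eqref{eq:youngs_inner_product}, your caution about the ``genuinely delicate point'' should be upgraded to an outright objection: the first inequality of the displayed chain, $\langle A,B\rangle \le \tfrac{\beta}{2}\|A\|_2^2+\tfrac{1}{2\beta}\|B\|_2^2$, is false for general matrices. Take $A=B=I_n$ with $n\ge 2$ and $\beta=1$: the left-hand side is $\operatorname{tr}(I_n)=n$ while the right-hand side equals $1$. The obstruction is structural: the trace inner product satisfies Cauchy--Schwarz with respect to the Frobenius norm, $\langle A,B\rangle\le\|A\|_F\|B\|_F$, but not with respect to the spectral norm, so no completion-of-squares argument can rescue the spectral-norm link. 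What you actually prove --- the outer bound $\langle A,B\rangle\le\tfrac{\beta}{2}\|A\|_F^2+\tfrac{1}{2\beta}\|B\|_F^2$ via expanding $\|\sqrt{\beta}\,A-\beta^{-1/2}B\|_F^2\ge 0$, together with the domination of the spectral expression by the Frobenius expression --- is the correct content, and it is the only form the paper invokes downstream: every application of \eqref{eq:youngs_inner_product} in Appendix~\ref{appendix:cost and gradient estimation} (e.g., the estimates leading to \eqref{eq: gradient error inner product 2}) pairs the inner product with Frobenius-norm bounds, so nothing in the paper breaks. In short, your proof is sound wherever a proof is possible, and you have effectively identified a misstatement in the lemma itself: the middle expression in \eqref{eq:youngs_inner_product} should be dropped, or the claim restated as the two separate valid inequalities you established.
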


\vspace{0.2cm}
\begin{theorem}[\cite{davis1970rotation}]\label{theorem:davis-kahan}
Let $\Sigma$ and $\Sigma + \Delta$ be two $n \times n$ symmetric matrices with spectral decomposition
    \begin{equation*}
        \Sigma = \sum_{j=1}^n \lambda_j u_j u_j^\top, \text{ and }  \Sigma + \Delta = \sum_{j = 1}^n \mu_j v_j v^\top_j,
    \end{equation*}
we also let $\Pi = \sum_{j = 1}^\ell u_j u_j^\top$ and $\Pi^\prime =  \sum_{j = 1}^\ell v_j v_j^\top $ denote the projectors onto the subspace spanned by the top $\ell$ eigenvectors of $\Sigma$ and $\Sigma+\Delta$, respectively. Then, it holds that
\begin{align*}
    \|\Pi  - \Pi^\prime\| \leq \frac{\sqrt{2\ell}\|\Delta
    \|}{\delta},
\end{align*}
where the eigengap  $\delta :=  \inf \left\{|\lambda_i-{\mu_j}|\right\}, \forall i \in \{1,\dots,\ell\}, j \in \{\ell+1,\dots,n\}$.
\end{theorem}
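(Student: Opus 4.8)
The plan is to establish this as the classical Davis--Kahan $\sin\Theta$ theorem, routed deliberately through the Frobenius norm so that the stated constant $\sqrt{2\ell}$ emerges cleanly. Write $U_1 = [u_1,\ldots,u_\ell]$ and $V_2 = [v_{\ell+1},\ldots,v_n]$, so that $\Pi = U_1 U_1^\top$ and $I-\Pi' = V_2 V_2^\top$, and set $S := V_2^\top U_1$; the singular values of $S$ are exactly the sines of the principal angles between $\col(U_1)$ and $\ran(\Pi')$. The core of the argument is to show that $S$ obeys a Sylvester equation whose invertibility is controlled by the eigengap $\delta$.

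First I would derive that Sylvester equation. Using $\Sigma U_1 = U_1 \Lambda_1$ with $\Lambda_1 = \mathrm{diag}(\lambda_1,\ldots,\lambda_\ell)$, $(\Sigma+\Delta)V_2 = V_2 M_2$ with $M_2 = \mathrm{diag}(\mu_{\ell+1},\ldots,\mu_n)$, and the symmetry of both matrices, I would compute $V_2^\top \Delta U_1$ two ways:
$$ V_2^\top \Delta U_1 = V_2^\top(\Sigma+\Delta)U_1 - V_2^\top \Sigma U_1 = M_2 S - S \Lambda_1 =: R. $$
Since $\Lambda_1$ and $M_2$ are diagonal with spectra separated by at least $\delta = \inf_{i\le\ell,\,j>\ell}|\lambda_i-\mu_j|$, solving entrywise gives $S_{jk} = R_{jk}/(\mu_{\ell+j}-\lambda_k)$ with every denominator at least $\delta$ in modulus, hence $\|S\|_F \le \|R\|_F/\delta$.

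Next I would convert back to the projector difference via the standard identity for equal-rank orthogonal projectors,
$$ \|\Pi-\Pi'\|_F^2 = 2\ell - 2\,\mathrm{tr}(\Pi\Pi') = 2\sum_i \sin^2\theta_i = 2\|S\|_F^2, $$
so $\|\Pi-\Pi'\|_F = \sqrt{2}\,\|S\|_F$. Then the rank bound closes the argument: because $R = V_2^\top \Delta U_1$ has at most $\ell$ columns, $\mathrm{rank}(R)\le\ell$, so $\|R\|_F \le \sqrt{\ell}\,\|R\| \le \sqrt{\ell}\,\|\Delta\|$ (using that $U_1,V_2$ have orthonormal columns, whence $\|V_2^\top\|=\|U_1\|=1$). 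Chaining everything,
$$ \|\Pi-\Pi'\| \le \|\Pi-\Pi'\|_F = \sqrt{2}\,\|S\|_F \le \frac{\sqrt{2}\,\|R\|_F}{\delta} \le \frac{\sqrt{2\ell}\,\|\Delta\|}{\delta}. $$

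The main care points are twofold. The first is the Sylvester step, which is precisely where the spectral-gap hypothesis is consumed; I must match the index ranges $i\le\ell<j$ exactly to the stated $\delta$ so that each denominator is genuinely bounded below by $\delta$ rather than by a smaller within-block gap. The second is the projector identity $\|\Pi-\Pi'\|_F = \sqrt{2}\|S\|_F$, which relies essentially on both subspaces having the \emph{same} dimension $\ell$ (otherwise $\Pi-\Pi'$ would carry $\pm1$ eigenvalues not captured by the principal angles). Keeping the $\sqrt{2}$ (from the Frobenius identity) and the $\sqrt{\ell}$ (from the rank-$\ell$ bound on $R$) as separate factors is exactly what reproduces the stated constant $\sqrt{2\ell}$, rather than the sharper operator-norm Davis--Kahan bound $\|\Delta\|/\delta$.
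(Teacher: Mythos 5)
Your proof is correct. The paper itself offers no argument for this statement---it is imported verbatim as an auxiliary result with a citation to Davis and Kahan (1970)---so there is no in-paper proof to compare against; what you have written is a valid, self-contained derivation of exactly the Frobenius-norm form of the $\sin\Theta$ theorem that the paper invokes. Each step checks out: the identity $V_2^\top \Delta U_1 = M_2 S - S\Lambda_1$ uses the symmetry of both matrices (so that $V_2^\top(\Sigma+\Delta) = M_2 V_2^\top$) together with the diagonality of $\Lambda_1$ and $M_2$; the entrywise solution $S_{jk} = R_{jk}/(\mu_{\ell+j}-\lambda_k)$ has every denominator of modulus at least the stated $\delta$ precisely because your index ranges $k \le \ell < \ell+j$ match the pairs defining $\delta$ (you implicitly assume $\delta > 0$, which is fine since the bound is vacuous otherwise); the equal-rank identity $\|\Pi - \Pi^\prime\|_F^2 = 2\ell - 2\,\mathrm{tr}(\Pi\Pi^\prime) = 2\|S\|_F^2$ is correct because both projectors have rank exactly $\ell$, and you rightly flag that this is where equal dimension is essential; and the chain $\|\Pi-\Pi^\prime\| \le \|\Pi-\Pi^\prime\|_F = \sqrt{2}\,\|S\|_F \le \sqrt{2}\,\|R\|_F/\delta \le \sqrt{2\ell}\,\|\Delta\|/\delta$ (using $\mathrm{rank}(R) \le \ell$ and $\|V_2^\top\| = \|U_1\| = 1$) reproduces the stated constant $\sqrt{2\ell}$ exactly, rather than the sharper operator-norm Davis--Kahan constant. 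Your accounting of where the two factors $\sqrt{2}$ and $\sqrt{\ell}$ come from is also consistent with how the paper later consumes this bound in its subspace-distance lemma, where the $\sqrt{2\ell}$ appears verbatim.
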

\vspace{0.3cm}
\begin{theorem}[Generalized Bauer-Fike \citep{golub2013matrix}]\label{lemma:bauer-fike} Let $Q^\top A Q=D+N$ be the Schur decomposition of $A \in \mathbb{R}^{d\times d}$, where $D$ is diagonal and $N$ upper triangular with zeros in the diagonal. Then, it holds that
$$
|\rho(A+\Delta) - \rho(A)| \leq \max \left\{\|\Delta\|C_{\text{bf}}, \left(\|\Delta\|C_{\text{bf}}\right)^{1 / d}\right\}, \text{where $C_{\text{bf}} = \sum_{i=0}^{d-1}\|N\|^i$.}
$$
\end{theorem}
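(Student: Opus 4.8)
The plan is to reduce the spectral-radius perturbation bound to a resolvent estimate and then exploit the nilpotency of the strictly-upper-triangular Schur factor $N$. First I would fix an arbitrary eigenvalue $\mu$ of $A+\Delta$; if $\mu$ already lies in $\mathrm{spec}(A)$ there is nothing to prove, so assume $\mu I - A$ is invertible. Since $\mu I - (A+\Delta)$ is singular and factors as $(\mu I - A)\bigl(I - (\mu I - A)^{-1}\Delta\bigr)$, the second factor must be singular, which forces $\|(\mu I - A)^{-1}\|\,\|\Delta\| \ge 1$. Using the Schur decomposition $A = Q(D+N)Q^\top$ and orthogonality of $Q$, I would rewrite $\|(\mu I - A)^{-1}\| = \|(\mu I - D - N)^{-1}\|$, converting the estimate into one about a diagonal-plus-nilpotent matrix.

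The key step is to factor $\mu I - D - N = (\mu I - D)\bigl(I - (\mu I - D)^{-1}N\bigr)$ and to observe that $(\mu I - D)^{-1}N$ is strictly upper triangular, hence nilpotent of index at most $d$, so the Neumann series terminates:
\begin{equation*}
(\mu I - D - N)^{-1} = \Bigl(\sum_{i=0}^{d-1}\bigl((\mu I - D)^{-1}N\bigr)^i\Bigr)(\mu I - D)^{-1}.
\end{equation*}
Writing $\delta := \min_i|\mu - \lambda_i| = \|(\mu I - D)^{-1}\|^{-1}$ for the distance from $\mu$ to $\mathrm{spec}(A)$ (valid since $(\mu I - D)^{-1}$ is diagonal), this yields $\|(\mu I - D - N)^{-1}\| \le \sum_{i=0}^{d-1}\|N\|^i/\delta^{i+1}$, and combining with $\|(\mu I - A)^{-1}\|\,\|\Delta\| \ge 1$ gives $1 \le \|\Delta\|\sum_{i=0}^{d-1}\|N\|^i/\delta^{i+1}$.

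Next I would split on whether $\delta \ge 1$ or $\delta < 1$, which is exactly what produces the two branches of the maximum. When $\delta \ge 1$ every power $\delta^{i+1}$ is at least $\delta$, so the bound collapses to $\delta \le \|\Delta\|\sum_i\|N\|^i = \|\Delta\|C_{\text{bf}}$; when $\delta < 1$ every $\delta^{i+1}$ is at least $\delta^d$, giving $\delta^d \le \|\Delta\|C_{\text{bf}}$, i.e. $\delta \le (\|\Delta\|C_{\text{bf}})^{1/d}$. Hence every eigenvalue of $A+\Delta$ lies within $\varepsilon := \max\{\|\Delta\|C_{\text{bf}}, (\|\Delta\|C_{\text{bf}})^{1/d}\}$ of $\mathrm{spec}(A)$, and applying this to the eigenvalue of $A+\Delta$ of largest modulus immediately gives $\rho(A+\Delta) \le \rho(A)+\varepsilon$, which is the direction actually invoked to control $\rho(A+B\theta_{j+1}\widehat{\Phi}^\top)$.

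I expect the main obstacle to be the two-sided statement: the resolvent argument only certifies that $\mathrm{spec}(A+\Delta)$ sits in the $\varepsilon$-neighborhood of $\mathrm{spec}(A)$, which bounds $\rho(A+\Delta)-\rho(A)$ from above but not the reverse difference. To close the gap I would either re-run the identical argument with the roles of $A$ and $A+\Delta$ interchanged (replacing $N$ and $C_{\text{bf}}$ by the corresponding quantities of the Schur form of $A+\Delta$), or invoke continuity of the eigenvalues along the homotopy $A+t\Delta$, $t\in[0,1]$, together with the uniform per-$t$ estimate $\varepsilon(t)\le\varepsilon$. For the paper's purposes, however, only the upper bound $\rho(A+\Delta)\le\rho(A)+\varepsilon$ is ultimately required.
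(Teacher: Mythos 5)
The paper never proves this statement: it is imported as a black box from Golub and Van Loan \citep{golub2013matrix} (their Schur-form perturbation theorem), so there is no in-paper proof to compare against. Your resolvent argument is precisely the standard textbook proof of that cited result, and it is correct: the factorization $\mu I - (A+\Delta) = (\mu I - A)\bigl(I - (\mu I - A)^{-1}\Delta\bigr)$, the reduction by unitary invariance to a diagonal-plus-nilpotent resolvent, the terminating Neumann series, the inequality $1 \le \|\Delta\|\sum_{i=0}^{d-1}\|N\|^i/\delta^{i+1}$, and the case split $\delta \ge 1$ versus $\delta < 1$ are exactly the steps of the classical argument. (One cosmetic point: for real $A$ with complex eigenvalues the real Schur form is only quasi-triangular, so the statement with diagonal $D$ implicitly uses the complex Schur form with unitary $Q$; your norm-invariance step goes through unchanged in that setting.)

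The only genuine discrepancy is the one you flag yourself: your argument yields the spectral inclusion $\lambda(A+\Delta) \subseteq \bigcup_i B(\lambda_i,\varepsilon)$ and hence only the one-sided bound $\rho(A+\Delta) - \rho(A) \le \varepsilon$, whereas the paper's transcription carries an absolute value. Be aware that neither of your proposed repairs establishes the two-sided form with the \emph{stated} constant: swapping the roles of $A$ and $A+\Delta$ produces the constant $\sum_{i=0}^{d-1}\|N'\|^i$ built from the Schur form of $A+\Delta$, not $C_{\text{bf}}$; and the homotopy argument has a gap when the balls $B(\lambda_i,\varepsilon)$ overlap into a chain, since the eigenvalue of $A+t\Delta$ confined to the connected component containing $\lambda_{\max}$ may lie in a ball centered at a much smaller eigenvalue of $A$, so its modulus need not stay within $\varepsilon$ of $\rho(A)$. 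However, this gap is inherited from the paper's loose transcription of the citation rather than from your proof: the Golub--Van Loan theorem is itself the one-sided inclusion, and the paper only ever invokes the upper-bound direction, namely to control $\rho(A_u + B_u\theta\widehat{\Phi}^\top\Phi)$ and $\rho(A_s + B_s\theta\widehat{\Phi}^\top\Phi_\perp)$ in Appendix \ref{appendix: stabilizing the unstable modes}, which your argument fully establishes.
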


\vspace{0.2cm}
\begin{lemma}[Block perturbation bound] \label{lemma:block perturbation}  For any 2-by-2 block matrices $M$ and $N$ in the form
$$
M=\left[\begin{array}{cc}
M_1 &  \\
 & M_2
\end{array}\right], N=\left[\begin{array}{cc}
 & N_{1} \\
N_{2} & 
\end{array}\right],
$$
it holds that $\left|\rho(M+N)-\rho(M)\right| \leq C_{\text{gap}}\left\|N_{1}\right\|\left\|N_{2}\right\|$, where $C_{\text{gap}} = \frac{\kappa(M)\kappa(M+N)}{\min_i\{\text{gap}_i(M)\}}$.
\end{lemma}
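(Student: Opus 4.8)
The plan is to exploit the fact that $N$ is purely off-diagonal while $M$ is block diagonal, so that the spectral radius of $M+N$ departs from that of $M$ only at \emph{second order} in the off-diagonal blocks, i.e. through the product $\|N_1\|\|N_2\|$ rather than through $\|N\|$ itself. The motivating observation is first-order perturbation theory: an eigenvalue $\lambda$ of $M$ belonging to, say, $M_1$ has right and left eigenvectors supported on the first block, $v=(v_1,0)^\top$ and $w=(w_1,0)^\top$, and its first-order shift under $N$ is $w^\ast N v = w_1^\ast\cdot 0 = 0$, which vanishes precisely because $N$ swaps the two blocks. Hence the leading nonzero correction is quadratic, exactly as the lemma asserts. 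To make this rigorous I would avoid a perturbation series and instead use a block-decoupling similarity transform.

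Concretely, I would write $M+N=\bigl[\begin{smallmatrix} M_1 & N_1 \\ N_2 & M_2\end{smallmatrix}\bigr]$ and seek $P$ so that $T_P=\bigl[\begin{smallmatrix} I & P \\ 0 & I\end{smallmatrix}\bigr]$ block-triangularizes it:
$$
T_P^{-1}(M+N)T_P=\begin{bmatrix} M_1 - PN_2 & 0 \\ N_2 & M_2 + N_2 P\end{bmatrix},
$$
which holds exactly when $P$ solves the algebraic Riccati equation $M_1P-PM_2+N_1-PN_2P=0$. The key analytic step is to establish a solution $P$ with $\|P\|\lesssim \|N_1\|/\mathrm{sep}(M_1,M_2)$, where the separation is bounded below by $\min_i\{\text{gap}_i(M)\}$ up to the eigenvector conditioning $\kappa(M)$; this is the standard Riccati/Sylvester solvability result and needs the quadratic smallness condition $\|N_1\|\|N_2\|\ll(\min_i\text{gap}_i(M))^2$.

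Given such a $P$, the spectral radius is similarity-invariant, so $\rho(M+N)=\max\{\rho(M_1-PN_2),\rho(M_2+N_2P)\}$ while $\rho(M)=\max\{\rho(M_1),\rho(M_2)\}$. Each diagonal block is a perturbation of $M_i$ by a term of norm $\|PN_2\|\le\|P\|\|N_2\|\lesssim\|N_1\|\|N_2\|/\min_i\text{gap}_i(M)$, where the decisive product $\|N_1\|\|N_2\|$ finally appears. I would then invoke the Bauer--Fike bound (Theorem~\ref{lemma:bauer-fike}) on each block to pass from the block perturbations to the perturbations of their spectral radii, which introduces the eigenvector condition numbers $\kappa(M)$ and $\kappa(M+N)$. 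Collecting the constants yields $|\rho(M+N)-\rho(M)|\le C_{\text{gap}}\|N_1\|\|N_2\|$ with $C_{\text{gap}}=\kappa(M)\kappa(M+N)/\min_i\{\text{gap}_i(M)\}$.

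I expect the main obstacle to be the second step: proving existence of the Riccati solution $P$ together with a clean norm bound in terms of $\min_i\{\text{gap}_i(M)\}$ and the condition numbers. This is where the spectral gap is indispensable---without separation between the spectra of $M_1$ and $M_2$ the decoupling fails---and where the quadratic-versus-linear improvement is genuinely earned. A contraction argument on the fixed-point map $P\mapsto\mathcal{S}^{-1}(PN_2P-N_1)$, with $\mathcal{S}(X)=M_1X-XM_2$ the Sylvester operator, is the natural device, its contraction radius being dictated precisely by the smallness hypothesis on $\|N_1\|\|N_2\|$.
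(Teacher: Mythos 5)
Your decoupling strategy is the classical Stewart-style route to quadratic eigenvalue perturbation bounds, and its algebra is sound: $T_P$ block-triangularizes $M+N$ exactly when $P$ solves the Riccati equation you wrote, the first-order term vanishes because $N$ is block-anti-diagonal, and the diagonal blocks then move by at most $\|P\|\|N_2\|\lesssim \|N_1\|\|N_2\|/\mathrm{sep}(M_1,M_2)$. Note first that the paper itself offers no self-contained argument -- its entire proof is a citation to the quadratic residual bound for non-symmetric matrices of Mathias (Theorem 5 of the cited work) -- so you are competing against that result's technique, which is of residual/resolvent type rather than decoupling type. And this is where the genuine gap lies: your Riccati/contraction step only produces $P$ under a smallness hypothesis of the form $\|N_1\|\|N_2\|<\mathrm{sep}(M_1,M_2)^2/4$ (you acknowledge this), whereas the lemma is stated unconditionally, ``for any'' $2$-by-$2$ block matrices. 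The unconditional range is precisely what the factor $\kappa(M+N)$ in $C_{\text{gap}}$ buys. For instance, one direction follows for every eigenvalue $\mu$ of $M+N$ from singularity of $I-(\mu I-M)^{-1}N$: the anti-diagonal structure gives $\mathrm{dist}(\mu,\sigma(M_1))\,\mathrm{dist}(\mu,\sigma(M_2))\leq \kappa(M)^2\|N_1\|\|N_2\|$, and the gap turns this product bound into a linear bound on $\mathrm{dist}(\mu,\sigma(M))$; the other direction follows by testing $M+N$ against the corrected vector $\bigl(v_1,\,-(M_2-\lambda I)^{-1}N_2v_1\bigr)$ for $\lambda\in\sigma(M_1)$, whose residual is at most $\kappa(M)\|N_1\|\|N_2\|/\min_i\{\text{gap}_i(M)\}$, and then applying Bauer--Fike to the diagonalizable matrix $M+N$ -- this is exactly where $\kappa(M+N)$ enters. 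Neither step needs $\|N_1\|\|N_2\|$ to be small. As proposed, your argument therefore proves only a conditional version of the lemma; to close it you must either treat the large-perturbation regime separately or switch to the residual argument.

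A secondary but real issue is which Bauer--Fike you invoke. Theorem~\ref{lemma:bauer-fike} in the paper is the generalized, Schur-based version whose bound carries the exponent $1/d$; applied to the decoupled blocks it yields terms of order $\left(\|N_1\|\|N_2\|/\min_i\{\text{gap}_i(M)\}\right)^{1/d}$, not the claimed linear-in-product bound. You need the classical Bauer--Fike for diagonalizable matrices, which is consistent with the lemma's implicit assumption that $M$ and $M+N$ are diagonalizable (the paper defines $\kappa(\cdot)$ as the condition number of the eigenvector matrix). Moreover, after decoupling, the relevant condition numbers are those of $M_1-PN_2$ and $M_2+N_2P$, which must be related back to $\kappa(M+N)$ through the similarity $T_P$; this costs a factor of roughly $(1+\|P\|)^2$ and is again controlled only under the same smallness hypothesis, so even the conditional version of your argument should make this bookkeeping explicit (and will likely return a constant closer to $\kappa(M)^2\kappa(M+N)/\min_i\{\text{gap}_i(M)\}$ than to the stated $C_{\text{gap}}$).
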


In the lemma above, $\kappa(M)$ and $\kappa(M+N)$ denote the condition number of $M$ and $M+N$, respectively. In addition, $\text{gap}_i$ is the (bipartite) spectral gap around $\lambda_i$ with respect to $M$, i.e.,
\begin{align*}
    \operatorname{gap}_i(M):= \begin{cases}\min _{\lambda_j \in \lambda\left(M_2\right)}\left|\lambda_i-\lambda_j\right| & \lambda_i \in \lambda\left(M_1\right) \\ \min _{\lambda_j \in \lambda\left(M_1\right)}\left|\lambda_i-\lambda_j\right| & \lambda_i \in \lambda\left(M_2\right)\end{cases}
\end{align*}
with $\lambda(M_j)$ being the set of eigenvalues of $M_j$ for $j \in \{1,2\}$.

\begin{proof} The proof follows directly from the quadratic residual bounds of non-symmetric matrices from \cite[Theorem 5]{mathias1998quadratic}.
\end{proof}

\section{Discounted LQR Problem} \label{appendix:discounted LQR problem}

We recall that the the discounted LQR problem is defined as follows:

\begin{align}\label{eq:discounted_LQR}
    \text{minimize}_{K \in \mathcal{K}}\left\{J^\gamma (K) := \E\left[ \sum_{t=0}^{\infty} \gamma^t x^\top_t \left(Q  + K^\top RK\right)x_t \right]\right\},  \text{ subject to } \eqref{eq:LTI_sys mb} \text{ with } u_t = Kx_t,
\end{align}
where the expectation is taken with respect to the randomness of the initial state. Moreover, the above discounted LQR problem is equivalent to solve 
\begin{align}\label{eq:LQR_with_damped_sys}
    \text{minimize}_{K \in \mathcal{K}^\gamma} \left\{J^{\gamma}(K) :=  \E\left[ \sum_{t=0}^{\infty} \tilde{x}^\top_t\left(Q + K^\top RK\right)\tilde{x}_t \right]\right\},  \text{ subject to } \tilde{x}_{t+1} = (A^{\gamma}+ B^{\gamma}K)\tilde{x}_t,
\end{align}
where $\tilde{x}_t := \gamma^{t/2}x_t$, $A^{\gamma} := \sqrt{\gamma}A$, $B^{\gamma} := \sqrt{\gamma}B$.
\vspace{0.2cm}
\begin{definition}[Set of stabilizing controllers]Given a discount factor $\gamma \in (0,1]$, the set of stabilizing controllers of the damped system $(A^{\gamma}, B^{\gamma})$ is $\mathcal{K}^\gamma := \{K \mid \sqrt{\gamma}\rho(A + B K) < 1\}$.   
\end{definition}
\vspace{0.2cm}

Given a discount factor $\gamma \in (0,1]$ and stabilizing controller $K \in \mathcal{K}^\gamma$ the discounted LQR cost and its gradient are given by
\begin{align}\label{eq:cost and gradient}
    J^\gamma(K) := \trace{\left(\Sigma^\gamma_K(Q+K^\top R K)\right)},\hspace{0.1cm} \nabla J^\gamma(K) := 2E^\gamma_K \Sigma^\gamma_K,\hspace{0.1cm} \Sigma^\gamma_K := \E\left[\sum_{t=0}^{\infty}x_tx^\top_t\right],
\end{align}
with  $E^\gamma_K := (R+B^{\gamma \top} P^\gamma_K B^{\gamma})K + B^{\gamma \top} P^\gamma_K A^{\gamma} $, where $P^\gamma_K$ is the solution of the closed-loop Lyapunov equation $P^\gamma_K = Q+K^\top R K+ (A^{\gamma}+B^{\gamma} K)^\top P^\gamma_K(A^{\gamma}+B^{\gamma} K).$ Note that the discounted LQR cost can also be written as $J^\gamma (K) = \trace{(P^\gamma_K)}$. 

\section{Linear Decomposition of the Control Policy}\label{appendix: linear decomposition of the control policy}
We consider the linear decomposition of the controller as $K = \theta \Phi^\top$, where $\theta \in \mathbb{R}^{\du\times \ell}$ is a low-dimensional control gain and $\Phi \in \mathbb{R}^{\dx \times \ell}$ is the so-called representation. In addition, $\Phi$ has orthonormal columns. In particular, the columns of $\Phi$ form a basis for the left eigenspace of $A$ corresponding to its unstable modes. Let $z_t \in \mathbb{R}^{\ell}$ be a low-dimensional state that represents $x_t$ in the subspace spanned by the columns of $\Phi$, i.e., $x_t = \Phi z_t$. Therefore, we write
\begin{align}\label{eq:low-dimensional-LTI}
    z_{t+1} = A_uz_t + B_uu_t, \text{ where } A_u = \Phi^\top A \Phi,\text{ } B_u = \Phi^\top B,\text{ and } u_t = \theta z_t,
\end{align}
and state the low-dimensional LQR problem as follows:
\begin{align}\label{eq:cost low dimensional}
\text{minimize}_{\theta \in \Theta} \left\{J^{\gamma}(\theta, \Phi) :=  \E\left[ \sum_{t=0}^{\infty} \gamma^tz^\top_t\left( \Phi^\top Q \Phi + \theta^\top R\theta \right) z_t \right]\right\},  \text{ subject to } z_{t+1} = (A_u + B_u\theta)z_t,
\end{align}
or equivalently 
\begin{align}\label{eq:cost low dimensional 2}
\text{minimize}_{\theta \in \Theta^\gamma} \left\{J^{\gamma}(\theta, \Phi) :=  \E\left[ \sum_{t=0}^{\infty} \tilde{z}^\top_t\left( \Phi^\top Q \Phi + \theta^\top R\theta \right) \tilde{z}_t \right]\right\},  \text{ subject to } \tilde{z}_{t+1} = (A^\gamma_u + B^\gamma_u\theta)\tilde{z}_t,
\end{align}
with $\tilde{z}_t = \gamma^{t/2}z_t$, $A^\gamma_u = \Phi^\top A^\gamma \Phi,\text{ and } B^\gamma_u = \Phi^\top B^\gamma$. In addition, the sets of low-dimensional stabilizing controllers are defined as $\Theta := \{\theta \mid \rho(A_u + B_u \theta) < 1\}$, and $\Theta^\gamma := \{\theta \mid \sqrt{\gamma}\rho(A_u + B_u \theta) < 1\}$. 

Let $\nabla J^\gamma (\theta, \Phi)$ denote the gradient with respect to $\theta$. Therefore, we can write 

\begin{align*}
    \nabla J^\gamma (\theta, \Phi) &= \nabla J^\gamma(\theta \Phi^\top) \Phi = 2\left((R+B^{\gamma \top} P^\gamma_K B^{\gamma})K + B^{\gamma\top} P^\gamma_K A^{\gamma} \right) \E\left[\sum_{t=0}^\infty x_tx^\top_t\right]\Phi\\
    &= 2 \left((R+B^{\gamma \top} P^\gamma_K B^{\gamma})\theta + B^{\gamma\top} P^\gamma_K A^{\gamma}\Phi \right) \E\left[\sum_{t=0}^\infty z_tz^\top_t\right],
\end{align*}
with Lyapunov equation satisfying
\begin{align*}
    \Phi^\top P^\gamma_K \Phi = \Phi^\top Q \Phi + \theta^\top R \theta + \Phi^\top (A^\gamma + B^\gamma \theta \Phi^\top)^\top  P^\gamma_K(A^\gamma + B^\gamma \theta \Phi^\top) \Phi,
\end{align*}
where $P^\gamma_K = \Phi P^\gamma_\theta \Phi^\top$, and thus we have
\begin{align*}
    \nabla J^\gamma (\theta, \Phi) = 2 \left((R+B^{\gamma \top}_u P^\gamma_\theta B^{\gamma}_u)\theta + B^{\gamma\top}_u P^\gamma_\theta A^{\gamma}_u \right) \E\left[\sum_{t=0}^\infty z_tz^\top_t\right],
\end{align*}
with $P^\gamma_\theta = \Phi^\top Q \Phi + \theta^\top R \theta + (A^\gamma_u + B^\gamma_u \theta )^\top  P^\gamma_\theta (A^\gamma_u + B^\gamma_u \theta)$. 

\vspace{0.2cm}

\subsection{Gradient Error Due to Misspecified Representation} \label{appendix:gradient error misspecified rep}

We proceed to control $\left\|\nabla J^\gamma (\theta, \Phi) - \nabla J^\gamma (\theta, \widehat{\Phi})\right\|$, where $\widehat{\Phi}$ is an estimation of $\Phi$. To do so, we have

\begin{align*}
    \left\|\nabla J^\gamma (\theta, \Phi) - \nabla J^\gamma (\theta, \widehat{\Phi})\right\| &= \left\| \nabla J(\theta \widehat{\Phi}^\top)\widehat{\Pi}\widehat{\Phi} - \nabla J(\theta \widehat{\Phi}^\top){\Pi}\widehat{\Phi} + \nabla J(\theta \widehat{\Phi}^\top){\Pi}\widehat{\Phi} - 
    \nabla J(\theta {\Phi}^\top)\Pi \Phi \right\|\\
    &\leq \left\|\nabla J(\theta \widehat{\Phi}^\top)\widehat{\Pi}\widehat{\Phi} - \nabla J(\theta \widehat{\Phi}^\top){\Pi}\widehat{\Phi}   \right\| + \left\|\nabla J(\theta \widehat{\Phi}^\top){\Pi}\widehat{\Phi} - 
    \nabla J(\theta {\Phi}^\top)\Pi \Phi  \right\|\\
    &\leq  \|\nabla J(\theta \widehat{\Phi}^\top)\| \|\widehat{\Pi} - {\Pi}\| + \left\|\nabla J(\theta \widehat{\Phi}^\top){\Pi}\widehat{\Phi} - 
    \nabla J(\theta {\Phi}^\top)\Pi \Phi  \right\|\\
    & \stackrel{(i)}{\leq} \phi d(\widehat{\Phi}, {\Phi}) + \left\|\nabla J(\theta \widehat{\Phi}^\top){\Pi}\widehat{\Phi} - 
    \nabla J(\theta {\Phi}^\top)\Pi \Phi  \right\| \\
    & \leq \phi d(\widehat{\Phi}, {\Phi}) + \left\|\nabla J(\theta \widehat{\Phi}^\top){\Pi}\widehat{\Phi} - \nabla J(\theta {\Phi}^\top){\Pi}\widehat{\Phi}\right\|\\
    &+ \left\| \nabla J(\theta {\Phi}^\top){\Pi}\widehat{\Phi} - 
    \nabla J(\theta {\Phi}^\top)\Pi \Phi  \right\|\\
    &  \stackrel{(ii)}{\leq} \phi d(\widehat{\Phi}, {\Phi}) + L_K\nu_\theta\|\widehat{\Phi} - \Phi\| + \phi\|\widehat{\Phi} - \Phi\|,
\end{align*}
where $(i)$ follows from Lemma \ref{lemma:uniform bounds and lipschitz mb} and Definition \ref{def:subspace_distance}. Moreover, $(ii)$ also follows from Lemma \ref{lemma:uniform bounds and lipschitz mb}. By leveraging \cite[Corollary 5.3]{hu2022sample} we have that $\|\widehat{\Phi} - \Phi\| \leq \sqrt{2\ell}d(\widehat{\Phi},\Phi)$, which implies
\begin{align}\label{eq:error_gradient}
    \left\|\nabla J^\gamma (\theta, \Phi) - \nabla J^\gamma (\theta, \widehat{\Phi})\right\| \leq \left( (L_K\nu_\theta+\phi) \sqrt{2\ell} + \phi\right) d(\widehat{\Phi}, {\Phi}),
\end{align}
or in the Frobenius norm, we have the following:
\begin{align}\label{eq:error_gradient_fro}
    \left\|\nabla J^\gamma (\theta, \Phi) - \nabla J^\gamma (\theta, \widehat{\Phi})\right\|_F \leq \sqrt{\ell}\left( (L_K\nu_\theta+\phi) \sqrt{2\ell} + \phi\right) d(\widehat{\Phi}, {\Phi}).
\end{align}

\section{Gradient and Cost Estimation} \label{appendix:cost and gradient estimation} 

Recall that we operate in model-free, namely, we do not have access to the system matrices $(A,B)$, and thus we cannot directly compute the gradient. Hence, we need to estimate $\nabla J^\gamma (\theta, \widehat{\Phi})$. We proceed by defining the two-point zeroth-order estimation and presenting its guarantees. 

\begin{align*}
    \widehat{\nabla} J^\gamma (\theta, \widehat{\Phi}) := \frac{1}{2rn_s}\sum_{i=1}^{n_s} \left(V^{\gamma,\tau}(\theta_{1,i},z^i_0) - V^{\gamma,\tau}(\theta_{2,i},z^i_0)
  \right)U_i,
\end{align*}
where $U_i$ is drawn from a uniform distribution on the sphere $\sqrt{\ell \du}\mathbb{S}^{\ell\du-1}$. In addition, $\theta_{1,i} = \theta + rU_i$, $\theta_{2,i} = \theta - r U_i$. Note that the initial condition of the low-dimensional system, $z^i_0$, is also distributed according to a zero-mean isotropic distribution, since $\Phi$ has orthonormal columns. Let $\tau > 0$ denote the time horizon. The finite-horizon value function $V^{\gamma, \tau}(\theta, z_0)$ is defined as follows: 
\begin{align*}
    V^{\gamma,\tau}(\theta,z_0) := \sum_{t=0}^{\tau-1} \gamma^t z^\top_t\left(\widehat{\Phi}^\top Q\widehat{\Phi} + \theta^\top R \theta \right)z_t, 
\end{align*}
where $\{z_t\}^{\tau-1}_{t=0} = \{\widehat{\Phi}^\top x_t\}_{t=0}^{\tau-1}$ and $\{x_t\}^{\tau-1}_{t=0}$ is the trajectory data of \eqref{eq:LTI_sys mb} with $u_t = \theta \widehat{\Phi}^\top x_t$. Moreover, let $\widetilde{\nabla}J^\gamma(\theta,\widehat{\Phi}) := \frac{1}{n_s}\sum_{i=1}^{n_s} \left\langle \nabla V^\gamma(\theta,z^i_0), U_i  \right\rangle U_i$ be the unbiased estimate of $\nabla J^\gamma (\theta,\widehat{\Phi})$ where the infinite horizon cost is given by $V^{\gamma}(\theta,z_0) := \sum_{t=0}^{\infty} \gamma^t z^\top_t\left(\widehat{\Phi}^\top Q\widehat{\Phi} + \theta^\top R \theta \right)z_t$. Therefore, it is evident that $\E[\widetilde{\nabla}J^{\gamma}(\theta,\widehat{\Phi}) ] = {\nabla}J^{\gamma}(\theta,\widehat{\Phi})$ \citep[Section IV]{mohammadi2020linear}.
\vspace{0.2cm}
\begin{lemma}[Zeroth-order Estimation Bias \citep{mohammadi2020linear}] \label{lemma: zeroth-order estimation bias} Suppose that $\tau = \mathcal{O}( \log(1/\varepsilon))$ and $r \leq \mathcal{O}(\sqrt{\varepsilon})$. Then, it holds that  $\|\widetilde{\nabla} J^\gamma(\theta,\widehat{\Phi}) 
 - \widehat{\nabla}J^\gamma(\theta,\widehat{\Phi})\|_F \leq  \varepsilon$. 
\end{lemma}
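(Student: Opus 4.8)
The plan is to follow the argument of \cite{mohammadi2020linear}, adapting it to our discounted, subspace-projected setting. The key idea is to insert the \emph{infinite-horizon} two-point estimator $\widehat{\nabla}_{\infty} J^\gamma(\theta,\widehat{\Phi}) := \frac{1}{2rn_s}\sum_{i=1}^{n_s}\big(V^{\gamma}(\theta_{1,i},z^i_0) - V^{\gamma}(\theta_{2,i},z^i_0)\big)U_i$ as an intermediate quantity and to split the target via the triangle inequality:
\begin{align*}
\big\|\widetilde{\nabla}J^\gamma(\theta,\widehat{\Phi}) - \widehat{\nabla}J^\gamma(\theta,\widehat{\Phi})\big\|_F \leq \underbrace{\big\|\widetilde{\nabla}J^\gamma(\theta,\widehat{\Phi}) - \widehat{\nabla}_{\infty} J^\gamma(\theta,\widehat{\Phi})\big\|_F}_{\text{(smoothing bias)}} + \underbrace{\big\|\widehat{\nabla}_{\infty} J^\gamma(\theta,\widehat{\Phi}) - \widehat{\nabla}J^\gamma(\theta,\widehat{\Phi})\big\|_F}_{\text{(truncation bias)}}.
\end{align*}
I would bound each piece separately, driving the first below $\varepsilon/2$ by taking the smoothing radius $r$ small and the second below $\varepsilon/2$ by taking the horizon $\tau$ large, then combine.

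For the \textbf{smoothing bias}, I would use that for each fixed initial state $z^i_0$ the per-trajectory infinite-horizon value $V^{\gamma}(\cdot,z^i_0) = z^{i\top}_0 P^\gamma_\theta z^i_0$ is three-times continuously differentiable in $\theta$ on the stabilizing sublevel set $\mathcal{S}^\gamma_\theta$, with a third derivative that is uniformly bounded there. This follows from the analyticity of the solution $P^\gamma_\theta$ of the closed-loop Lyapunov equation on $\Theta^\gamma$ (it admits the absolutely convergent series $P^\gamma_\theta = \sum_{k\geq 0}(L^\top)^k M L^k$ with $L = A^\gamma_u + B^\gamma_u\theta$) together with the uniform bounds of Lemma~\ref{lemma:uniform bounds and lipschitz mb}. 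The \emph{symmetric} structure of the two-point rule is essential: a third-order Taylor expansion of $V^{\gamma}(\theta\pm rU_i,z^i_0)$ about $\theta$ shows the zeroth- and second-order (Hessian) contributions cancel while the first-order terms combine to $2r\langle\nabla V^\gamma,U_i\rangle$, leaving
\begin{align*}
\frac{V^{\gamma}(\theta + rU_i,z^i_0) - V^{\gamma}(\theta - rU_i, z^i_0)}{2r} - \langle \nabla V^\gamma(\theta,z^i_0), U_i\rangle = \mathcal{O}\big(r^2\|U_i\|^3\big).
\end{align*}
Since $\|U_i\| = \sqrt{\ell\du}$, multiplying by $U_i$ and averaging over $i$ yields a bound of order $r^2(\ell\du)^2$; choosing $r = \mathcal{O}(\sqrt{\varepsilon})$ (with the dimension factors absorbed into the constant, and small enough that each $\theta_{j,i}$ stays in $\mathcal{S}^\gamma_\theta$) makes the smoothing bias at most $\varepsilon/2$.

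For the \textbf{truncation bias}, I would bound, for each perturbed controller $\theta_{j,i}\in\{\theta_{1,i},\theta_{2,i}\}$, the tail $|V^{\gamma}(\theta_{j,i},z^i_0) - V^{\gamma,\tau}(\theta_{j,i},z^i_0)| = \big|\sum_{t=\tau}^\infty \gamma^t z_t^\top(\widehat{\Phi}^\top Q\widehat{\Phi} + \theta_{j,i}^\top R\theta_{j,i})z_t\big|$. After the rescaling $\tilde{z}_t = \gamma^{t/2}z_t$, the trajectory evolves under $\sqrt{\gamma}(A_u + B_u\theta_{j,i})$, whose spectral radius is bounded away from one uniformly over $\mathcal{S}^\gamma_\theta$; hence the summand decays geometrically and the tail is at most $C\rho^{2\tau}$ for some $\rho\in(0,1)$ and a constant $C$ polynomial in the problem data and in $\mu_0$ (using $\|z^i_0\|\leq\mu_0$). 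Dividing by $2r$ and multiplying by $\|U_i\|=\sqrt{\ell\du}$ only inflates this by polynomial factors in $1/r$ and the dimensions, so taking $\tau = \mathcal{O}(\log(1/\varepsilon))$—which absorbs the $\log(1/r)$ and $\log(\ell\du)$ corrections—drives the truncation bias below $\varepsilon/2$. Combining the two bounds gives the claim.

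I expect the \textbf{main obstacle} to be establishing the two \emph{uniformity} facts that must hold simultaneously for every perturbed controller $\theta_{j,i}$ and every sampled direction $U_i$: (i) a third-derivative bound on $\theta\mapsto P^\gamma_\theta$ uniform over the sublevel set and over $\gamma\in(0,1)$, and (ii) a uniform contraction rate $\rho<1$ for the damped closed loop over the same set. Both follow from the local smoothness and boundedness of the discounted LQR cost established in \cite{fazel2018global, gravell2020learning, mohammadi2020linear} and recorded in Lemma~\ref{lemma:uniform bounds and lipschitz mb}, but care is needed to verify that the random perturbations $\theta\pm rU_i$ remain inside $\mathcal{S}^\gamma_\theta$ for the chosen $r$; this admissibility requirement is precisely what fixes the scale $r = \mathcal{O}(\sqrt{\varepsilon})$.
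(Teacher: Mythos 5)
Your proposal is correct and follows essentially the same route as the paper, which does not prove this lemma itself but imports it verbatim from \cite{mohammadi2020linear}; the argument in that reference is exactly your decomposition through the infinite-horizon two-point estimator, with the smoothing bias controlled by a third-order Taylor expansion exploiting the symmetric cancellation of the two-point rule, and the truncation bias controlled by the geometric decay of the damped closed loop, uniformly over the stabilizing sublevel set. The uniformity caveats you flag (third-derivative bounds on $\theta \mapsto P^\gamma_\theta$, a decay rate bounded away from one, and admissibility of the perturbed controllers $\theta \pm rU_i$, which fixes $r = \mathcal{O}(\sqrt{\varepsilon})$) are precisely the technical points handled in that reference, so nothing essential is missing.
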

\vspace{0.2cm}

\begin{lemma}[Propositions 3 and 4 of \citep{mohammadi2020linear}]\label{lemma: events} Let $\mu_1$ and $\mu_2$ be two positive scalars, and $\calE_1$ and $\calE_2$ be the following events:
\begin{align*}
    \mathcal{E}_1 &:= \left\{ \left\langle \nabla J^\gamma(\theta,\widehat{\Phi}), \widetilde{\nabla}J^\gamma (\theta,\widehat{\Phi})  \right\rangle   \geq \mu_1 \left\|\nabla J^\gamma (\theta,\widehat{\Phi})\right\|^2_F\right\},
    \mathcal{E}_2 := \left\{ \left\|\widetilde{\nabla} J^\gamma(\theta,\widehat{\Phi})\right\|^2_F   \leq \mu_2 \left\|\nabla J^\gamma (\theta,\widehat{\Phi})\right\|^2_F\right\}.
\end{align*}
Suppose that $n_s =  \mathcal{O}(\zeta^4\mu_\psi^4\log^6(\ell)\ell)$ for some positive scalar $\zeta$. Then, the events $\mathcal{E}_1$ and $\mathcal{E}_2$ hold with probability $1 - c_1(\ell^{-\zeta}+n^{-\zeta}_s-n_se^{-\ell/8} - e^{-c_2n_s})$.
\end{lemma}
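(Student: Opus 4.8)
The plan is to treat $\widetilde{\nabla}J^\gamma(\theta,\widehat{\Phi})$ as an empirical mean of $n_s$ i.i.d. terms and reduce both events to a single relative deviation bound. Write $g := \nabla J^\gamma(\theta,\widehat{\Phi})$ and $G_i := \langle \nabla V^\gamma(\theta,z^i_0), U_i\rangle U_i$, so that $\widetilde{\nabla}J^\gamma(\theta,\widehat{\Phi}) = \frac{1}{n_s}\sum_{i=1}^{n_s} G_i$. As already noted above, the estimator is unbiased, $\E[G_i] = g$, which combines $\E_{z_0}[\nabla V^\gamma(\theta,z_0)] = g$ with the isotropy of $U_i$ on $\sqrt{\ell\du}\,\mathbb{S}^{\ell\du-1}$ (so that $\E[\langle g, U_i\rangle U_i] = g$). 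The target is then to show that, with the stated probability,
\begin{align*}
\left\|\widetilde{\nabla}J^\gamma(\theta,\widehat{\Phi}) - g\right\|_F \leq \epsilon\,\|g\|_F
\end{align*}
for a suitable $\epsilon \in (0,1)$ tied to $\mu_1$ and $\mu_2$.

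Given this deviation bound, both events follow by elementary manipulations. For $\mathcal{E}_1$, I would write $\langle g, \widetilde{\nabla}J^\gamma\rangle = \|g\|_F^2 + \langle g, \widetilde{\nabla}J^\gamma - g\rangle \geq \|g\|_F^2 - \|g\|_F\,\|\widetilde{\nabla}J^\gamma - g\|_F \geq (1-\epsilon)\|g\|_F^2$ by Cauchy--Schwarz, which certifies $\mathcal{E}_1$ for any $\mu_1 \leq 1-\epsilon$. For $\mathcal{E}_2$, the triangle inequality gives $\|\widetilde{\nabla}J^\gamma\|_F \leq (1+\epsilon)\|g\|_F$, certifying $\mathcal{E}_2$ for any $\mu_2 \geq (1+\epsilon)^2$. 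Thus everything reduces to the concentration of the i.i.d. sum around its mean.

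The technical core, and the main obstacle, is exactly this relative deviation bound, which is the content of Propositions~3 and~4 of \cite{mohammadi2020linear}. The difficulty is that each $G_i$ is \emph{heavy-tailed}: the value-function gradient $\nabla V^\gamma(\theta,z_0)$ is a quadratic form in the sub-Gaussian initial state $z_0$ (hence sub-exponential, scaling with $\|z_0\|^2$), and it is multiplied by a sphere direction $U_i$ of deterministic norm $\sqrt{\ell\du}$, so $G_i$ is neither bounded nor sub-Gaussian. The route is to (i) bound the per-sample sub-exponential norm of $G_i$ in terms of the problem constants and $\mu_\psi$, using that $\theta\in\mathcal{S}^\gamma_\theta$ keeps the closed-loop operator uniformly contractive on the sublevel set; (ii) apply a Bernstein-type inequality for sums of independent sub-exponential vectors to concentrate $\frac{1}{n_s}\sum_i G_i$ around $g$; and (iii) separately control the scalar projections $\langle g, U_i\rangle$ via sub-Gaussian concentration of linear functionals on the sphere. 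These three mechanisms produce the three tail contributions in the stated probability: the polynomial terms $\ell^{-\zeta}$ and $n_s^{-\zeta}$ from $\zeta$-th moment (truncation/Markov) bounds, the $n_s e^{-\ell/8}$ term from sphere concentration over the $n_s$ directions, and $e^{-c_2 n_s}$ from the Bernstein tail over the rollouts; a union bound over the failure events then yields $1 - c_1(\ell^{-\zeta}+n_s^{-\zeta}-n_se^{-\ell/8}-e^{-c_2n_s})$, with the requirement $n_s = \mathcal{O}(\zeta^4\mu_\psi^4\log^6(\ell)\ell)$ forcing the Bernstein term below the polynomial ones.

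Because the statement is a verbatim restatement of the cited propositions with $\dx$ replaced by $\ell$ (the estimator now lives in $\mathbb{R}^{\du\times\ell}$ with effective cost matrices $\widehat{\Phi}^\top Q\widehat{\Phi}$ and $\theta^\top R\theta$), the cleanest execution is to present the reduction of the first two paragraphs explicitly, then verify that the hypotheses of \cite{mohammadi2020linear} hold here---uniform stabilizability of the damped low-dimensional pair $(A^\gamma_u,B^\gamma_u)$ over $\mathcal{S}^\gamma_\theta$, positive definiteness of the effective cost, and the sub-Gaussian/isotropy assumptions on $z_0=\widehat{\Phi}^\top x_0$---and invoke their concentration argument directly rather than reproducing the heavy-tailed Bernstein analysis.
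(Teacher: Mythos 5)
Your closing suggestion --- verify that the low-dimensional damped problem satisfies the hypotheses of \cite{mohammadi2020linear} (isotropic sub-Gaussian $z_0$, $\theta$ in a stabilizing sublevel set, positive definite effective cost) and invoke their Propositions 3 and 4 with the state dimension replaced by $\ell$ --- is exactly what the paper does; the paper offers no independent proof of this lemma. The problem is the proof machinery you propose in its place. Reducing both events to the single relative-deviation bound $\|\widetilde{\nabla}J^\gamma(\theta,\widehat{\Phi}) - g\|_F \le \epsilon\|g\|_F$ with $\epsilon \in (0,1)$ proves a strictly \emph{stronger} statement than the lemma, and that stronger statement fails at the stated sample size. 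Even in the idealized case $\nabla V^\gamma(\theta,z_0^i) \equiv g$ (no randomness from $z_0$), a direct computation with $U_i$ uniform on $\sqrt{\ell\du}\,\mathbb{S}^{\ell\du-1}$ gives
\begin{align*}
\mathbb{E}\left\|\frac{1}{n_s}\sum_{i=1}^{n_s}\langle g, U_i\rangle U_i - g\right\|_F^2 = \frac{(\ell\du-1)\,\|g\|_F^2}{n_s},
\end{align*}
so a relative error $\epsilon < 1$ forces $n_s \gtrsim \ell\du/\epsilon^2$, i.e., the number of rollouts must scale with the \emph{full} dimension of the gradient space. The lemma asserts $n_s = \mathcal{O}(\zeta^4\mu_\psi^4\log^6(\ell)\,\ell)$, with no $\du$ dependence; whenever $\du \gg \zeta^4\mu_\psi^4\log^6(\ell)$, the mean-squared deviation already exceeds $\|g\|_F^2$, and your intermediate bound fails with constant probability, so no concentration argument can rescue it.

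This is not a technicality: avoiding full-vector concentration is the entire point of the cited propositions. The two events are established separately and asymmetrically. $\mathcal{E}_1$ holds with $\mu_1 = \mathcal{O}(1)$ because it only requires concentration of scalar quantities along the fixed direction of $g$ (essentially $\frac{1}{n_s}\sum_i\langle g/\|g\|_F, U_i\rangle^2$, plus a heavy-tailed correction from the $z_0$-randomness), which needs no dimension-scaling in $n_s$. By contrast, $\mathcal{E}_2$ is proved with $\mu_2$ allowed to \emph{grow with the dimension}: consistent with the computation above, the estimator's norm genuinely inflates by a factor of order $\ell\du/n_s$, and the downstream analysis absorbs this through the step-size. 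You can see this in how the paper consumes the lemma: in Lemma \ref{lemma:zeroth-order estimation mb}, $C_{\text{est},2} = \mathcal{O}(1)$ (playing the role of $\mu_1$) while $C_{\text{est},1} = \mathcal{O}(\du\ell\log^2\ell)$ (playing the role of $4\mu_2$). Your Cauchy--Schwarz/triangle-inequality reduction forces both $\mu_1 \le 1-\epsilon$ and $\mu_2 \ge (1+\epsilon)^2$ to be $\mathcal{O}(1)$ constants, which is incompatible with the lemma's sample complexity and with the way its constants are used later. The fix is to drop the reduction entirely and carry over (or simply cite, as the paper does) the two separate arguments of \cite{mohammadi2020linear}.
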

\vspace{0.2cm}

In the lemma above, $c_1$ and $c_2$ are positive constants, and the initial condition satisfies $\|z_0\|_{\psi_2} \leq \mu_\psi$. Therefore, by combining Lemmas~\ref{lemma: zeroth-order estimation bias} and \ref{lemma: events}, we obtain the following: 

\begin{align*}
    \|\widetilde{\nabla}J^\gamma (\theta,\widehat{\Phi}) \|^2_F &\leq \mu_2 \|\nabla J^\gamma (\theta,\widehat{\Phi}) - \nabla J^\gamma (\theta,{\Phi}) + \nabla J^\gamma (\theta,{\Phi})\|^2_F\\
    &\stackrel{(i)}{\leq}  2\mu_2 \|\nabla J^\gamma (\theta,\widehat{\Phi}) - \nabla J^\gamma (\theta,{\Phi})\|^2_F + 2\mu_2 \|\nabla J^\gamma(\theta,\Phi) \|^2_F\\
    &\stackrel{(ii)}{\leq} 2\mu_2 \ell\left( (L\nu_\theta+\phi) \sqrt{2\ell} + \phi\right)^2 d(\widehat{\Phi}, {\Phi})^2 + 2\mu_2 \|\nabla J^\gamma(\theta,\Phi) \|^2_F,
\end{align*}
where $(i)$ follows from Young's inequality \eqref{eq:youngs} with $\beta = 1$ and $(ii)$ from \eqref{eq:error_gradient_fro}. We also use \eqref{eq:youngs} to write $\|\widetilde{\nabla}J^\gamma (\theta,\widehat{\Phi})\| \geq -\|\widetilde{\nabla}J^\gamma (\theta,\widehat{\Phi}) - \widehat{\nabla}J^\gamma (\theta,\widehat{\Phi})\|^2_F + \frac{1}{2}\|\widehat{\nabla} J(\theta,\widehat{\Phi})\|^2_F$, which implies that 
\begin{align}\label{eq: bound estimated mispecified grad - norm}
    \|\widehat{\nabla} J^\gamma(\theta,\widehat{\Phi})\|^2_F &\leq 4\mu_2 \|\nabla J^\gamma(\theta,\Phi) \|^2_F +2\|\widetilde{\nabla}J^\gamma (\theta,\widehat{\Phi}) - \widehat{\nabla}J^\gamma (\theta,\widehat{\Phi})\|^2_F\notag \\
    &+ 4\mu_2 \ell\left( (L_K\nu_\theta+\phi) \sqrt{2\ell} + \phi\right)^2 d(\widehat{\Phi}, {\Phi})^2 \notag \\
    &\stackrel{(i)}{\leq} 4\mu_2 \|\nabla J^\gamma(\theta,\Phi) \|^2_F +2\varepsilon^2 + 4\mu_2 \ell\left( (L_K\nu_\theta+\phi) \sqrt{2\ell} + \phi\right)^2 d(\widehat{\Phi}, {\Phi})^2,
\end{align}
where $(i)$ is due to Lemma \ref{lemma: zeroth-order estimation bias}. Similarly, we can write
\begin{align}\label{eq: gradient error inner product 1}
    \langle \nabla J^\gamma(\theta,\widehat{\Phi}), \widetilde{\nabla}J^\gamma(\theta,\widehat{\Phi})  \rangle   &\geq \mu_1 \|\nabla J^\gamma (\theta,\widehat{\Phi})\|^2_F\notag\\
    &\geq  \frac{\mu_1}{2} \left\|\nabla J^\gamma (\theta,{\Phi})\right\|^2_F - \|\nabla J^\gamma (\theta,\widehat{\Phi}) - \nabla J^\gamma (\theta,{\Phi})\|^2_F\notag\\
    &\geq  \frac{\mu_1}{2} \left\|\nabla J^\gamma (\theta,{\Phi})\right\|^2_F-\ell\left( (L\nu_\theta+\phi) \sqrt{2\ell} + \phi\right)^2 d(\widehat{\Phi}, {\Phi})^2,
\end{align}
along with $ T_{\text{grad}}:= \langle \nabla J^\gamma(\theta,\widehat{\Phi}), \widetilde{\nabla}J^\gamma (\theta,\widehat{\Phi})  \rangle $,
\begin{align}\label{eq: gradient error inner product 2}
    T_{\text{grad}} &=  \langle \nabla J^\gamma(\theta,{\Phi}), \widehat{\nabla}J(\theta,\widehat{\Phi})  \rangle + \langle \nabla J^\gamma(\theta,\widehat{\Phi}) - \nabla J^\gamma(\theta,{\Phi}), \widehat{\nabla}J^\gamma (\theta,\widehat{\Phi})  \rangle \notag\\
    &+   \langle \nabla J^\gamma(\theta,\widehat{\Phi}),  \widetilde{\nabla} J^\gamma(\theta,\widehat{\Phi}) - \widehat{\nabla}J^\gamma (\theta,\widehat{\Phi})  \rangle\notag\\
    &\leq \langle \nabla J^\gamma(\theta,{\Phi}), \widehat{\nabla}J^\gamma(\theta,\widehat{\Phi})  \rangle + \frac{\beta}{2}\|\widehat{\nabla} J^\gamma(\theta,\widehat{\Phi})\|^2_F + \frac{1}{2\beta}\|\nabla J^\gamma(\theta,\widehat{\Phi}) - \nabla J^\gamma(\theta,{\Phi})\|^2_F\notag \\
    &+ \frac{\beta}{2}\|\nabla J^\gamma(\theta,\widehat{\Phi}) \|^2_F + \frac{1}{2\beta}\|\widetilde{\nabla} J^\gamma(\theta,\widehat{\Phi}) - \widehat{\nabla}J^\gamma (\theta,\widehat{\Phi})\|^2_F\notag\\
    &\leq  \langle \nabla J^\gamma(\theta,{\Phi}), \widehat{\nabla}J^\gamma(\theta,\widehat{\Phi})  \rangle + \frac{\beta}{2}\|\widehat{\nabla} J^\gamma(\theta,\widehat{\Phi})\|^2_F + \frac{1}{2\beta}\|\nabla J^\gamma(\theta,\widehat{\Phi}) - \nabla J^\gamma(\theta,{\Phi})\|^2_F  \notag \\
    &+ \beta\|\nabla J^\gamma(\theta,{\Phi}) \|^2_F + \beta \|\nabla J^\gamma(\theta,{\Phi}) - \nabla J^\gamma(\theta,\widehat{\Phi})  \|^2_F +  \frac{1}{2\beta}\|\widetilde{\nabla} J^\gamma(\theta,\widehat{\Phi}) - \widehat{\nabla}J^\gamma (\theta,\widehat{\Phi})\|^2_F\notag\\
    & \leq \langle \nabla J^\gamma(\theta,{\Phi}), \widehat{\nabla}J^\gamma(\theta,\widehat{\Phi})  \rangle + \frac{\beta}{2}\|\widehat{\nabla} J^\gamma(\theta,\widehat{\Phi})\|^2_F + \frac{\ell}{2\beta}\left( (L_K\nu_\theta+\phi) \sqrt{2\ell} + \phi\right)^2 d(\widehat{\Phi}, {\Phi})^2  \notag \\
    &+ \beta\|\nabla J^\gamma(\theta,{\Phi}) \|^2_F + \beta \ell\left( (L_K\nu_\theta+\phi) \sqrt{2\ell} + \phi\right)^2 d(\widehat{\Phi}, {\Phi})^2 +  \frac{\varepsilon^2}{2\beta}\notag\\
    &\stackrel{(i)}{\leq} \langle \nabla J^\gamma(\theta,{\Phi}), \widehat{\nabla}J^\gamma(\theta,\widehat{\Phi})  \rangle + \beta(2\mu_2+1) \|\nabla J^\gamma(\theta,\Phi) \|^2_F +\varepsilon^2\left(1+\frac{1}{2\beta}\right)\notag\\
    &+ \left(2\mu_2\beta + \beta + \frac{1}{2\beta}\right) \ell\left( (L_K\nu_\theta+\phi) \sqrt{2\ell} + \phi\right)^2 d(\widehat{\Phi}, {\Phi})^2,
\end{align}
where $(i)$ is due to \eqref{eq: bound estimated mispecified grad - norm}. Hence, with $\beta = \frac{\mu_1}{4(2\mu_2+1)}$ and applying \eqref{eq: gradient error inner product 2} in \eqref{eq: gradient error inner product 1}, we have Lemma \ref{lemma: bounds gradient estimation}.

\vspace{0.2cm}
\begin{lemma} \label{lemma: bounds gradient estimation} Given positive scalars $\mu_1$, $\mu_2$ and $\zeta$. Suppose that we have $n_s =\mathcal{O}( \zeta^4\mu^4_\psi\log^6(\ell)\ell)$, $\tau = \mathcal{O}(\log(1/\varepsilon))$ and $r = \mathcal{O}(\sqrt{\varepsilon})$. Then, it holds that 
 \begin{align*}
     &\|\widehat{\nabla} J(\theta,\widehat{\Phi})\|^2_F \leq 4\mu_2 \|\nabla J^\gamma(\theta,\Phi) \|^2_F  + 4\mu_2 \ell\left( (L_K\nu_\theta+\phi) \sqrt{2\ell} + \phi\right)^2 d(\widehat{\Phi},{\Phi})^2  +2\varepsilon^2,\\
     &\langle \nabla J^\gamma(\theta,{\Phi}), \widehat{\nabla}J^\gamma(\theta,\widehat{\Phi})  \rangle   \geq  \frac{\mu_1}{4} \left\|\nabla J^\gamma (\theta,{\Phi})\right\|^2_F-c_4 \ell\left( (L_K\nu_\theta+\phi) \sqrt{2\ell} + \phi\right)^2 d(\widehat{\Phi}, {\Phi})^2 - c_5 \varepsilon^2,
\end{align*}
 with probability $1 - c_2(\ell^{-\zeta}+n^{-\zeta}_s-n_se^{-\ell/8} - e^{-c_3n_s})$, where $$c_4 = 1 + \frac{2(2\mu_2 + 1)}{\mu_1}, \text{ and } c_5 = \frac{2\mu_2\mu_1}{4(2\mu_2+1)} + \frac{2(2\mu_2+1)}{\mu_1}+ \frac{\mu_1}{4(2\mu_2 + 1)}.$$
\end{lemma}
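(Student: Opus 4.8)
The plan is to relate the only computable object---the two-point estimate $\widehat{\nabla}J^\gamma(\theta,\widehat{\Phi})$---back to the true gradient on the exact representation $\nabla J^\gamma(\theta,\Phi)$ by chaining three approximations. First, on the high-probability events $\mathcal{E}_1,\mathcal{E}_2$ of Lemma~\ref{lemma: events}, the unbiased estimate $\widetilde{\nabla}J^\gamma(\theta,\widehat{\Phi})$ is aligned with and controlled by $\nabla J^\gamma(\theta,\widehat{\Phi})$. Second, Lemma~\ref{lemma: zeroth-order estimation bias} controls the finite-horizon/smoothing bias $\|\widetilde{\nabla}J^\gamma(\theta,\widehat{\Phi})-\widehat{\nabla}J^\gamma(\theta,\widehat{\Phi})\|_F\le\varepsilon$. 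Third, the misspecification bound \eqref{eq:error_gradient_fro} converts a gradient evaluated at $\widehat{\Phi}$ into one evaluated at $\Phi$, at the cost of a term proportional to $d(\widehat{\Phi},\Phi)$. I would condition on $\mathcal{E}_1\cap\mathcal{E}_2$ throughout, which yields the stated probability, and repeatedly invoke Young's inequality \eqref{eq:youngs}--\eqref{eq:youngs_inner_product} to split every cross term that appears.

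For the norm bound, I would start from $\mathcal{E}_2$, i.e.\ $\|\widetilde{\nabla}J^\gamma(\theta,\widehat{\Phi})\|_F^2\le\mu_2\|\nabla J^\gamma(\theta,\widehat{\Phi})\|_F^2$, and insert $\nabla J^\gamma(\theta,\Phi)$ by writing $\nabla J^\gamma(\theta,\widehat{\Phi})=\big(\nabla J^\gamma(\theta,\widehat{\Phi})-\nabla J^\gamma(\theta,\Phi)\big)+\nabla J^\gamma(\theta,\Phi)$ and applying \eqref{eq:youngs} with $\beta=1$; the mismatch term is then bounded via \eqref{eq:error_gradient_fro}. Finally I would pass from $\widetilde{\nabla}J^\gamma$ to the computed $\widehat{\nabla}J^\gamma$ by the same $\beta=1$ split together with the bias bound $\varepsilon$ of Lemma~\ref{lemma: zeroth-order estimation bias}. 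Collecting the two factors of two produces the leading constant $4\mu_2$ in front of $\|\nabla J^\gamma(\theta,\Phi)\|_F^2$ and of the misspecification term, plus the additive $2\varepsilon^2$.

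For the inner-product bound, I would begin from $\mathcal{E}_1$, namely $\langle\nabla J^\gamma(\theta,\widehat{\Phi}),\widetilde{\nabla}J^\gamma(\theta,\widehat{\Phi})\rangle\ge\mu_1\|\nabla J^\gamma(\theta,\widehat{\Phi})\|_F^2$, and lower-bound the right-hand side by $\tfrac{\mu_1}{2}\|\nabla J^\gamma(\theta,\Phi)\|_F^2$ minus a misspecification term, again through \eqref{eq:youngs}. The real work is on the left-hand side: I would expand $\langle\nabla J^\gamma(\theta,\widehat{\Phi}),\widetilde{\nabla}J^\gamma(\theta,\widehat{\Phi})\rangle$ as the target quantity $\langle\nabla J^\gamma(\theta,\Phi),\widehat{\nabla}J^\gamma(\theta,\widehat{\Phi})\rangle$ plus two cross terms---one from the representation mismatch $\nabla J^\gamma(\theta,\widehat{\Phi})-\nabla J^\gamma(\theta,\Phi)$ paired with $\widehat{\nabla}J^\gamma$, and one from the estimation bias $\widetilde{\nabla}J^\gamma-\widehat{\nabla}J^\gamma$. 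I would bound both with the inner-product Young inequality \eqref{eq:youngs_inner_product} carrying a free parameter $\beta$, and wherever $\|\widehat{\nabla}J^\gamma\|_F^2$ surfaces I would substitute the norm bound just established.

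The main obstacle is selecting $\beta$ so that everything closes cleanly. After the substitutions, the coefficient multiplying $\|\nabla J^\gamma(\theta,\Phi)\|_F^2$ on the upper-bounding side is proportional to $\beta(2\mu_2+1)$; choosing $\beta=\tfrac{\mu_1}{4(2\mu_2+1)}$ makes $\beta(2\mu_2+1)=\tfrac{\mu_1}{4}$, which subtracts from the $\tfrac{\mu_1}{2}$ of the lower bound to leave exactly the advertised $\tfrac{\mu_1}{4}\|\nabla J^\gamma(\theta,\Phi)\|_F^2$. The remainder is pure bookkeeping: the accumulated misspecification and $\varepsilon^2$ contributions, with their $\mu_1,\mu_2,\beta$-dependent prefactors, assemble into the constants $c_4$ and $c_5$. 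This constant-tracking is the most error-prone step, but it is entirely routine once $\beta$ is fixed.
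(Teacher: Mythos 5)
Your proposal is correct and follows essentially the same route as the paper's own proof: condition on the events $\mathcal{E}_1,\mathcal{E}_2$ of Lemma~\ref{lemma: events}, chain the bias bound of Lemma~\ref{lemma: zeroth-order estimation bias} with the misspecification bound \eqref{eq:error_gradient_fro} via Young's inequality with $\beta=1$ to get the norm bound with constants $4\mu_2$ and $2\varepsilon^2$, then expand the inner product into the target term plus the two cross terms and close with $\beta=\frac{\mu_1}{4(2\mu_2+1)}$ so that $\frac{\mu_1}{2}-\beta(2\mu_2+1)=\frac{\mu_1}{4}$. This is exactly the paper's decomposition, choice of parameter, and constant-tracking, so nothing further is needed.
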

\vspace{0.2cm}

\subsection{Cost Estimation Error} We conclude this section by revisiting Lemma 5 from \cite{zhao2024convergence}, i.e., the one that controls the error in the cost estimation, and we adapt it to our setting of performing PG on the unstable subspace. Let $\widehat{J}^{\gamma,\tau}(\theta,\widehat{\Phi}) = \frac{1}{n_c}\sum_{i=1}^{n_c} V^{\gamma,\tau}(\theta,z^i_0)$ be the estimated cost with $n_c$ samples and $z^i_0$ denoting a random draw of the low-dimensional initial state. 
\vspace{0.2cm}
\begin{lemma}\label{lemma: cost estimation} Given $\theta \in \mathcal{S}^\gamma_\theta$ and $\delta_\tau \in (0,1)$. Suppose that the time horizon $\tau$, number of rollouts $n_c$, and subspace distance $d(\widehat{\Phi},\Phi)$ satisfy 
\begin{align*}
    \tau \geq \tau_0 := \frac{J^\gamma(\theta,\widehat{\Phi})}{\sigma_{\min}( Q)}\log\left(\frac{8(J^{\gamma}(\theta,\widehat{\Phi}))^2\mu^2_0}{\sigma_{\min}(Q)J^\gamma(\theta)}\right), n_c \geq 8\mu^2_0\log\left(2/\delta_\tau\right), \text{ and } d(\widehat{\Phi},\Phi) \leq J^\gamma(\theta)/(4\sqrt{\ell}C_{\text{cost}}),
\end{align*}
then, it holds that $|\widehat{J}^{\gamma,\tau}(\theta,\widehat{\Phi}) - J^\gamma(\theta)|\leq \frac{1}{2}J^\gamma(\theta)$, with probability $1-\delta_\tau$, where $C_{\text{cost}}$ is polynomial in the problem parameters $\|A\|$, $\|B\|$, $\|Q\|$, $\|R\|$ and $\nu_\theta$.
\end{lemma}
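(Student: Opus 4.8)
The plan is to control the total error through the triangle inequality by inserting the idealized reduced-order cost $J^\gamma(\theta,\widehat{\Phi})$ — the cost one would obtain if the projected state $z_t=\widehat{\Phi}^\top x_t$ evolved autonomously under $z_{t+1}=(\widehat{\Phi}^\top A^\gamma\widehat{\Phi}+\widehat{\Phi}^\top B^\gamma\theta)z_t$ with isotropic $z_0$ — together with the expected finite-horizon cost $\E[V^{\gamma,\tau}(\theta,z_0)]$, and to split
\begin{align*}
\left|\widehat{J}^{\gamma,\tau}(\theta,\widehat{\Phi})-J^\gamma(\theta)\right|
&\le \underbrace{\left|\widehat{J}^{\gamma,\tau}(\theta,\widehat{\Phi})-\E\big[V^{\gamma,\tau}(\theta,z_0)\big]\right|}_{(\mathrm{I})\ \text{sampling}}
+\underbrace{\left|\E\big[V^{\gamma,\tau}(\theta,z_0)\big]-J^\gamma(\theta,\widehat{\Phi})\right|}_{(\mathrm{II})\ \text{truncation}}\\
&\quad+\underbrace{\left|J^\gamma(\theta,\widehat{\Phi})-J^\gamma(\theta,\Phi)\right|}_{(\mathrm{III})\ \text{representation}} .
\end{align*}
I would then target $(\mathrm{III})\le\tfrac14 J^\gamma(\theta)$ and $(\mathrm{I}),(\mathrm{II})\le\tfrac18 J^\gamma(\theta)$, so that the three contributions sum to $\tfrac12 J^\gamma(\theta)$.

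For $(\mathrm{II})$ I would exploit the reduced-order Lyapunov structure of the damped reduced closed loop $M:=\widehat{\Phi}^\top A^\gamma\widehat{\Phi}+\widehat{\Phi}^\top B^\gamma\theta$, which is a genuine contraction ($\rho(M)<1$) because $\theta\in\mathcal{S}^\gamma_\theta\subseteq\Theta^\gamma$. The reduced value matrix $\widehat{P}=\widehat{\Phi}^\top Q\widehat{\Phi}+\theta^\top R\theta+M^\top\widehat{P}M\succeq\sigma_{\min}(Q)I_\ell$ (orthonormal columns give $\widehat{\Phi}^\top Q\widehat{\Phi}\succeq\sigma_{\min}(Q)I_\ell$) together with $\|\widehat{P}\|\le\trace(\widehat{P})=J^\gamma(\theta,\widehat{\Phi})$ yields the one-step contraction $M^\top\widehat{P}M\preceq\big(1-\sigma_{\min}(Q)/J^\gamma(\theta,\widehat{\Phi})\big)\widehat{P}$, and the cost-to-go identity writes the tail as $\E[z_\tau^\top\widehat{P}z_\tau]$. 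Iterating $\tau$ times, bounding the initial term by $\|z_0\|^2\le\mu_0^2$, and using $(1-x)^\tau\le e^{-x\tau}$ collapses the tail to $\tfrac18 J^\gamma(\theta)$ precisely when $\tau=\tau_0$; this is the calculation that forces the stated timescale $J^\gamma(\theta,\widehat{\Phi})/\sigma_{\min}(Q)$ and the factor $\mu_0^2$ inside the logarithm. The positive definiteness $\widehat{\Phi}^\top Q\widehat{\Phi}\succeq\sigma_{\min}(Q)I_\ell$ in the $\ell$-dimensional space is what makes this rate available, and it is why the pivot is taken through the reduced-order cost rather than the full closed loop, whose driving cost is only positive semidefinite.

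For $(\mathrm{III})$ I would compare the two idealized reduced problems induced by $\widehat{\Phi}$ and $\Phi$. Writing $K=\theta\Phi^\top$ and $\widehat{K}=\theta\widehat{\Phi}^\top$, the uniform bound $\|\theta\|\le\nu_\theta$ of Lemma~\ref{lemma:uniform bounds and lipschitz mb} together with $\|\widehat{\Phi}-\Phi\|\le\sqrt{2\ell}\,d(\widehat{\Phi},\Phi)$ \citep[Corollary 5.3]{hu2022sample} gives $\|\widehat{K}-K\|\le\sqrt{2\ell}\,\nu_\theta\,d(\widehat{\Phi},\Phi)$; feeding this into an explicit LQR cost-perturbation bound in the spirit of \cite{fazel2018global,gravell2020learning} yields $(\mathrm{III})\le C_{\text{cost}}\sqrt{\ell}\,d(\widehat{\Phi},\Phi)$ with $C_{\text{cost}}$ polynomial in $\|A\|,\|B\|,\|Q\|,\|R\|,\nu_\theta$, and simultaneously certifies that $\widehat{K}$ is damped-stabilizing so the cost stays finite. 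The hypothesis $d(\widehat{\Phi},\Phi)\le J^\gamma(\theta)/(4\sqrt{\ell}\,C_{\text{cost}})$ then delivers $(\mathrm{III})\le\tfrac14 J^\gamma(\theta)$ and, as a byproduct, $J^\gamma(\theta,\widehat{\Phi})\le\tfrac54 J^\gamma(\theta)$, which lets the $J^\gamma(\theta,\widehat{\Phi})$-dependent bounds above be phrased in terms of the target $J^\gamma(\theta)$. Finally, for $(\mathrm{I})$ each per-rollout cost $V^{\gamma,\tau}(\theta,z_0^i)$ is a nonnegative i.i.d.\ variable bounded by $J^\gamma(\theta,\widehat{\Phi})\mu_0^2$, so a Hoeffding-type concentration (the step inherited from \cite[Lemma 5]{zhao2024convergence}, now applied to the low-dimensional state $z_t=\widehat{\Phi}^\top x_t$) puts $\widehat{J}^{\gamma,\tau}$ within $\tfrac18 J^\gamma(\theta)$ of its mean with probability $1-\delta_\tau$ once $n_c\gtrsim\mu_0^2\log(2/\delta_\tau)$, matching the stated condition.

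The main obstacle I anticipate is the reduction mismatch hidden inside $(\mathrm{II})$: the empirical estimator is built from \emph{full-system} rollouts projected through $\widehat{\Phi}$, whereas the clean geometric truncation rate above is only available for the \emph{idealized} autonomous reduced system. For the exact left-unstable representation the block $\Phi^\top A\Phi_\perp=0$ makes $z_t=\Phi^\top x_t$ evolve autonomously as $z_{t+1}=(A_u+B_u\theta)z_t$, so the actual projected cost and the idealized reduced cost coincide; for the learned $\widehat{\Phi}$ the analogous block $\widehat{\Phi}^\top A\widehat{\Phi}_\perp$ is only $O(d(\widehat{\Phi},\Phi))$, so the true projected trajectory drifts from the idealized one. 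The crux of the proof is therefore to show that this non-invariance contributes an error that is again $O(d(\widehat{\Phi},\Phi))$, absorbed by the same subspace-distance hypothesis, rather than corrupting the truncation rate — this is the one place where the argument genuinely departs from \cite[Lemma 5]{zhao2024convergence} and relies on having chosen the \emph{left} unstable subspace.
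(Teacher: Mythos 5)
Your decomposition coincides with the paper's: its proof pivots on the same idealized reduced cost $J^\gamma(\theta,\widehat{\Phi})$, splitting the error into $|\widehat{J}^{\gamma,\tau}(\theta,\widehat{\Phi})-J^\gamma(\theta,\widehat{\Phi})|$ (your (I)$+$(II)) and $|J^\gamma(\theta,\widehat{\Phi})-J^\gamma(\theta)|$ (your (III)), and budgets $\tfrac14 J^\gamma(\theta)$ to each. The execution differs in two places. First, the paper disposes of (I)$+$(II) in one stroke by invoking \cite[Lemma 5]{zhao2024convergence}; your Lyapunov-contraction derivation of the truncation timescale $J^\gamma(\theta,\widehat{\Phi})/\sigma_{\min}(Q)$ and the Hoeffding step for the sampling error are essentially that lemma unpacked, so nothing is lost or gained there. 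Second, for (III) the paper does not pass through lifted controllers: it writes $|J^\gamma(\theta,\widehat{\Phi})-J^\gamma(\theta)|\le \ell\,\|\widehat{P}^\gamma_\theta-P^\gamma_\theta\|$ and applies a perturbation bound for the reduced Lyapunov equation (citing \cite[Proof of Lemma 4]{toso2024meta}) in terms of $\|\widehat{A}^\gamma_u-A^\gamma_u\|$, $\|\widehat{B}^\gamma_u-B^\gamma_u\|$, and $\|\widehat{\Phi}^\top Q\widehat{\Phi}-\Phi^\top Q\Phi\|$, each $O(\sqrt{\ell}\,d(\widehat{\Phi},\Phi))$ by \cite[Corollary 5.3]{hu2022sample}. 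Your route --- a Fazel/Gravell-type Lipschitz bound in $\|\widehat{K}-K\|$ --- has a wrinkle you should repair: the full-state costs $J^\gamma(\theta\Phi^\top)$ and $J^\gamma(\theta\widehat{\Phi}^\top)$ that such a bound compares are \emph{not} the reduced costs appearing in (III), since they additionally contain the stable subspace's contribution to $\trace{(P^\gamma_K)}$; you would need to show those extra contributions also differ by $O(d(\widehat{\Phi},\Phi))$, or simply perturb the two reduced problems directly as the paper does. (Relatedly, the paper is internally inconsistent on the $\ell$-factor: its proof concludes with $d(\widehat{\Phi},\Phi)\le J^\gamma(\theta)/(4\ell\sqrt{\ell}C_{\text{cost}})$, matching the main-text statement, while the appendix statement you were given writes $4\sqrt{\ell}C_{\text{cost}}$.)

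The obstacle you flag at the end --- that the estimator is built from full-system rollouts projected through $\widehat{\Phi}$, whereas both the truncation rate and \cite[Lemma 5]{zhao2024convergence} pertain to trajectories of the idealized reduced system $z_{t+1}=(\widehat{A}^\gamma_u+\widehat{B}^\gamma_u\theta)z_t$ --- is genuine: the two coincide only when $\widehat{\Phi}^\top A\widehat{\Phi}_\perp=0$, i.e., for the exact left-invariant $\Phi$. The paper's own proof never addresses this mismatch; it is silently absorbed into the citation. So this is not a gap of your proposal relative to the paper --- your plan is the more candid of the two --- but to call either argument complete one must carry out the estimate you defer: the leakage term $\widehat{\Phi}^\top A\widehat{\Phi}_\perp(\widehat{\Phi}_\perp^\top x_t)$ has coefficient $O(d(\widehat{\Phi},\Phi))$ (since $\Phi^\top A\Phi_\perp=0$ and $\|\Phi^\top\widehat{\Phi}_\perp\|=d(\widehat{\Phi},\Phi)$), and it must be propagated through the damped finite-horizon cost to show its contribution is also $O(d(\widehat{\Phi},\Phi))$, absorbable into the same subspace-distance budget.
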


\begin{proof}  The proof follows from first writing
\begin{align*}
    \left|\widehat{J}^{\gamma,\tau}(\theta,\widehat{\Phi}) - J^\gamma(\theta)\right| &= \left|\widehat{J}^{\gamma,\tau}(\theta,\widehat{\Phi}) - J^\gamma(\theta,\widehat{\Phi}) + J^\gamma(\theta,\widehat{\Phi}) - J^\gamma(\theta)\right|\\
    &\leq \left|\widehat{J}^{\gamma,\tau}(\theta,\widehat{\Phi}) - J^\gamma(\theta,\widehat{\Phi})\right| + \left|J^\gamma(\theta,\widehat{\Phi}) - J^\gamma(\theta)\right|,
\end{align*}
where we use \cite[Lemma 5]{zhao2024convergence} to control the first term. Namely, if $\tau$ and $n_c$ are set according to the conditions of Lemma \ref{lemma: cost estimation}, we have that $|\widehat{J}^{\gamma,\tau}(\theta,\widehat{\Phi}) - J^\gamma(\theta,\widehat{\Phi})| \leq \frac{J^\gamma(\theta)}{4}$, with probability $1-\delta_\tau$. On the other hand, for the second term, we have $|J^\gamma(\theta,\widehat{\Phi}) - J^\gamma(\theta)| \leq \ell \|\widehat{P}^\gamma_\theta - P^\gamma_\theta\|$. We then use perturbation bound of the Lyapunov equation, presented in \cite[Proof of Lemma 4]{toso2024meta} to obtain the following:
\begin{align*}
    \|\widehat{P}^\gamma_\theta - P^\gamma_\theta\| \leq C_{\text{cost},1}(\|\widehat{A}^\gamma_u - {A}^\gamma_u\|+\|\widehat{B}^\gamma_u - B^\gamma_u\|) + C_{\text{cost,2}}\|\widehat{\Phi}^\top Q \widehat{\Phi} - {\Phi}^\top Q {\Phi}\|,
\end{align*}
where $\widehat{A}^\gamma_u = \widehat{\Phi}^\top A^\gamma \widehat{\Phi}$ and $\widehat{B}^\gamma_u = \widehat{\Phi}^\top B^\gamma$. In addition, $C_{\text{cost},1}$ and $C_{\text{cost},2}$ are polynomials the problem parameters $\|A\|, \|B\|, \|Q\|, \|R\|, \nu_\theta$. By using \cite[Corollary 5.3]{hu2022sample}, we can write $|J^\gamma(\theta,\widehat{\Phi}) - J^\gamma(\theta)| \leq C_{\text{cost}}\ell \sqrt{\ell}d(\widehat{\Phi},\Phi)$. The proof is completed  by setting $d(\widehat{\Phi},\Phi)\leq \frac{J^\gamma(\theta)}{4\ell\sqrt{l}C_{\text{cost}}}$.

\end{proof}

\section{Learning the Left Unstable Subspace Representation}\label{Appendix:learning the left unstable subspace}

With the data of the adjoint system collected and stored in $D = \left[x_1, x_2, \ldots, x_T   \right] \in \mathbb{R}^{\dx\times T}$, we proceed by taking its singular value decomposition $D = U\Sigma V^\top$, where $U = \left[u_1,u_2,\ldots,u_{\dx}\right] \in \mathbb{R}^{\dx \times \dx}$, $V = \left[v_1,v_2,\ldots,v_{\dx}\right] \in \mathbb{R}^{T \times \dx}$, and $\Sigma = \texttt{diag}(\hat{\sigma}_1,\ldots,\hat{\sigma}_{\dx}) \in \mathbb{R}^{\dx \times \dx}$. The orthonormal basis for the left unstable subspace is constructed with the first $\ell$ columns of $U$, i.e., $\widehat{\Phi} = \left[u_1,\ldots,u_\ell\right]$. Let $\widehat{\Pi} = \widehat{\Phi}\widehat{\Phi}^\top$ and $\Pi = \Phi\Phi^\top$ denote the projectors onto the column spaces of $\widehat{\Phi}$ and $\Phi$, respectively. 

\vspace{0.2cm}
\noindent \textbf{Goal:} Prove that $d(\widehat{\Phi},\Phi) = \|\widehat{\Pi} - \Pi\|$ is sufficiently small when $T$ is sufficiently large. 
\vspace{0.2cm}

To do so, we follow a similar derivation as presented in \cite[Theorem 5.1]{zhang2024learning}, where the main differences in our setting is that we accommodate for non-diagonalizable system matrices $A$, as well as we construct the basis for the left unstable subspace of $A$ rather than the right unstable subspace as in \cite{zhang2024learning}.  Since $A$ is assumed to be real-valued (with potential complex conjugate eigenvalues and eigenvectors) there always exist real basis matrices $\widetilde{\Phi} \in \mathbb{R}^{\dx \times \ell}$ and $\widetilde{\Psi} \in \mathbb{R}^{\dx \times \dx - \ell}$, for the left unstable and stable eigenspaces of $A$, respectively. Hence, we have

\begin{align*}
    A^\top \widetilde{P} = \widetilde{P} \begin{bmatrix} \widetilde{T}_u & 0 \\
                                         0       & \widetilde{T}_s   
                        \end{bmatrix}, \text{ with }\widetilde{P} = \left[\widetilde\Phi \;\   \widetilde\Psi \right] \in \mathbb{R}^{\dx \times \dx}, \widetilde{T}_u \in \mathbb{R}^{\ell \times \ell}, \text{ and }  \widetilde{T}_s \in  \mathbb{R}^{\dx-\ell \times \dx - \ell},
\end{align*}
where $\widetilde{T}_u$ has the same spectrum of the Jordan blocks corresponding to the unstable eigenvalues of $A$, whereas $\widetilde{T}_s$ has the spectrum of the stable counterpart. By orthonormalizing the basis matrices $\widetilde\Phi$ and $\widetilde\Psi$ with a thin QR decomposition we obtain the following:

\begin{align*}
    A^\top \left[\Phi\;\ \Psi\right]  &= \left[\Phi\;\ \Psi\right]\begin{bmatrix} R_{\Phi} & 0 \\
    0 & R_{\Psi}\end{bmatrix} \begin{bmatrix} \widetilde{T}_u & 0 \\
                                         0       & \widetilde{T}_s   
                        \end{bmatrix}\begin{bmatrix} R^{-1}_{\Phi} & 0 \\
    0 & R^{-1}_{\Psi}\end{bmatrix}= \left[\Phi\;\ \Psi\right] \begin{bmatrix} T_u & 0 \\
                                         0       & T_s   
                        \end{bmatrix},
\end{align*}
with $R_{\Phi}$ and $R_{\Psi}$ being the upper triangular matrices for the QR decomposition of $\widetilde\Phi$ and $\widetilde\Psi$, respectively. Their inverses exist due to the fact that $\widetilde\Phi$ and $\widetilde \Psi$ have full column rank. Moreover, we note that $\widetilde{T}_u$ and ${T}_u$ have identical spectrum as well as $\widetilde{T}_s$ and ${T}_s$. Let $\Xi =  \left[\Phi \;\   \Psi \right]$ be composed by the orthonormal basis of the left unstable and stable subspaces of $A$, and denote its inverse by $S = [S_1^\top\;\ S^\top_2 ]^\top := \Xi ^{-1}$. Therefore, we can write
\begin{align*}
    D = \Xi SD = \left[\Phi \;\   \Psi \right]\begin{bmatrix}
        S_1D\\ S_2D
    \end{bmatrix}  = \Phi D_1 + \Psi D_2 = D_u + D_s,
\end{align*}
where $D_1 = S_1D$ and $D_2 = S_2D$. Note that $D$ is composed of $D_u = \Phi D_1$ that comes from the left unstable subspace of $A$, whereas $D_s = \Psi D_2$ depends on the left stable subspace of $A$. The main idea is to collect enough data such that the unstable part dominates the stable one, i.e., the data sufficiently represents the unstable dynamics of \eqref{eq:LTI_sys mb}. 

We proceed our analysis by first considering $D_u$ and writing the singular value decomposition of $D_1$, namely, $D_u = \Phi D_1 = \Phi U_1\Sigma_1 V^\top_1$, with $U_1 \in \mathbb{R}^{\ell \times \ell}$, $\Sigma_1 \in \mathbb{R}^{\ell \times \ell}$, and $V_1 \in \mathbb{R}^{T \times \dx}$. Note that $\widehat{\Pi}$ is the projector onto the subspace spanned by the first $\ell$ columns of $U$, whereas $\Pi$ is the projector onto the columns of $\Phi U_1$. In order to leverage Davis-Kahan theorem (i.e., Theorem \ref{theorem:davis-kahan}) to control the subspace distance, we first write the following symmetric matrices: 
\begin{align*}
    \mathcal{D}_u = \begin{bmatrix}
        0 & D_u^\top\\
        D_u & 0
    \end{bmatrix} = \begin{bmatrix}
        0 & V_1 \Sigma_1 U^\top_1 \Phi^\top\\
        \Phi U_1\Sigma_1 V^\top_1 & 0
    \end{bmatrix}, \mathcal{D}_s = \begin{bmatrix}
        0 & D^\top_s \\
        D_s & 0
    \end{bmatrix}, \mathcal{D} = \mathcal{D}_u + \mathcal{D}_s = \begin{bmatrix}
        0 & D^\top \\
        D & 0
    \end{bmatrix},
\end{align*}
and observe that the eigenvalues and eigenvectors of $\mathcal{D}$ are $\hat{\lambda}_i = \pm \hat{\sigma}_i$ and $[v^\top_i\;\ \pm u^\top_i]^\top$ $\forall i \in [\dx]$. Let $\{\sigma_j\}_{j=1}^{\ell}$ denote the top $\ell$ eigenvalues of $\mathcal{D}_u$ which are the singular values of $D_u$. Therefore, we use Theorem \ref{theorem:davis-kahan} to write 

\begin{align}\label{eq:subpsace_dist}
    d(\widehat{\Phi},\Phi) = \|\widehat{\Pi} - \Pi\| \leq \frac{\sqrt{2\ell}\|D_s\|}{\sigma_l - \hat{\sigma}_{\ell+1}} = \frac{\sqrt{2\ell}\|\Psi D_2\|}{\sigma_l - \hat{\sigma}_{\ell+1}} \leq \frac{\sqrt{2\ell}\|D_2\|}{\sigma_l - \hat{\sigma}_{\ell+1}},
\end{align}
where we control $\|D_2\|$ as follows:
\begin{align}\label{eq:bound_D2}
    \|D_2\| \leq \sqrt{T}\|D_2\|_1 \leq \sqrt{T}\sum_{i = \ell+1}^{\dx} \sum_{t=1}^{T}|\lambda_i|^t\|x_0\| \leq \sqrt{T}\sum_{i = \ell+1}^{\dx} \sum_{t=1}^{\infty}|\lambda_i|^t\mu_0 \stackrel{(i)}{\leq}  \frac{\sqrt{T}(\dx - \ell)\mu_0}{1 - |\lambda_{\ell+1}|},
\end{align}
where $(i)$ is due to the fact that $\{\lambda_i\}_{i=\ell+1}^{\dx}$ are the stable modes of $A$ with $|\lambda_{\ell+1}|\geq \ldots \geq |\lambda_{\dx}|$. Note that the second inequality follows from the fact that $\|D_2\|$ captures the stable dynamics in $\|D\|$, which is in the order of $|\lambda_{i}|^{t}\|x_0\|$ for any stable mode $i \in \{\ell+1,\ldots,\dx\}$ of $A$.  

By combining \eqref{eq:subpsace_dist} and \eqref{eq:bound_D2}, we obtain 
\begin{align}\label{eq:subpsace distance with bound D2}
    d(\widehat{\Phi},\Phi) \leq  \frac{\sqrt{2\ell}\sqrt{T}(\dx - \ell)\mu_0}{(\sigma_\ell - \hat{\sigma}_{\ell+1})(1 - |\lambda_{\ell+1}|)},  
\end{align}
where we now proceed to control $\sigma_l$ and $\hat{\sigma}_{\ell+1}$. First recall that $\sigma_\ell$ is the $\ell$-th top singular value of $D_1$. The following lemma provides a high probability lower bound on $\sigma_{\ell}$.

\begin{lemma}\label{lemma:lower bound on sigmal} Suppose that $T = \mathcal{O}\left(\log(\ell^7/\delta^3_\sigma)/\log(|\lambda_\ell|
)\right)$ for some $\delta_\sigma \in (0,1)$. Then, it holds that 
\begin{align*}
    \sigma_\ell \geq \frac{\sqrt{C_{\sigma}}|\lambda_\ell|^{T}\delta_{\sigma}}{2\sqrt{2}C_{\psi}\ell^{5/2}T^{3/2}}, \text{ with probability $1-4\delta_\sigma$}, \hspace{-0.05cm}\text{ where $C_{\sigma} \hspace{-0.05cm}= \hspace{-0.05cm}\mathcal{O}(1)$.}
\end{align*}
\begin{proof} To prove this lemma, we first define the following quantities:

\begin{align*}
    \phi(A_u,T) := \sqrt{\inf_{v \in S_\ell(1)} \sigma_{\min}\left(\sum_{t = 0}^{T} \Lambda^{-t+1}_u vv^\top\Lambda^{-t+1,\top}_u\right)},
\end{align*}
where $S_\ell(1) := \left\{v \in \mathbb{R}^\ell\mid \min _{1 \leq i \leq \ell} |v_i| \geq 1\right\}$ is the outbox set \citep[Definition 3]{sarkar2019near}, and $A_u = P^{-1}\Lambda_uP$ is the Jordan decomposition of $A_u$, with $P = [P_1\;\ P_2\;\ \ldots \;\ P_\ell]^\top$.
\begin{align*}
    \psi(A_u, T) := \frac{1}{2\ell \sup_{1\leq i \leq \ell} C_{|P^\top_{i}x_0|}},
\end{align*}
where $C_{|P^\top_{i}x_0|}$ is the essential supremum of the pdf of $|P^\top_{i}x_0|$.

\begin{lemma}[\citep{sarkar2019near}] Given $\delta_{\sigma} \in (0,1)$ and suppose that $T$ satisfy 

\begin{align}\label{eq:condition on T}
    4T^2 \sigma_{\max}\left(A^{-(T+1)\varepsilon_T}_u\right)\trace{\left(\Gamma_T(A^{-1}_u)\right)} + \frac{T \trace{\left(A^{-T-1}_u \Gamma_T(A^{-1}_u) (A^{-T-1}_u)^\top\right)}}{\delta_{\sigma}} \leq \frac{\phi^2(A_u,T)\psi^2(A_u,T)\delta^2_{\sigma}}{2},
\end{align}
where we pick $\varepsilon_T$ such that $\varepsilon_T(T+1) = \left\lfloor\frac{T+1}{2}\right\rfloor$, and $\Gamma_T(A_u) = \sum_{t=0}^T A^t_u (A^t_u)^\top$. Then, it holds that $
\sigma_{\ell} \geq \frac{\phi(A_u,T)\psi(A_u,T)\delta_{\sigma}|\lambda_\ell|^{T}}{\sqrt{2}},$ with probability $1-4\delta_{\sigma}$.
\end{lemma}
\vspace{0.2cm}

\noindent $\bullet$ \textbf{Lower bounding $\phi(A_u,T)$:} Let $H(v) = [v \;\ \Lambda_u^{-1} v \;\  \Lambda_u^{-2} v, \ldots, \Lambda_u^{-T+1} v]  = \widetilde{H}\widetilde{V}$, where we define $\widetilde{H} = [I \;\ \Lambda_u^{-1} \;\  \Lambda_u^{-2}, \ldots, \Lambda_u^{-T+1}]$ with $\widetilde{V}$ being an $\ell T\times T$ matrix with $v \in S_{\ell}(1)$ placed accordingly. Therefore, we can write
\begin{align*}
    \phi(A_u,T) = \sqrt{\inf_{v \in S_\ell(1)} \sigma_{\min}\left(H(v)H(v)^\top\right)}, 
\end{align*}
which implies the following:
\begin{align*}
    \phi(A_u,T) = \sqrt{\inf_{v \in S_\ell(1)} \sigma_{\min}\left(H(v)H(v)^\top\right)} = \inf_{v \in S_\ell(1)} \frac{1}{\|H^\dagger\|} \geq \frac{1}{\|\widetilde{H}^\dagger\|(\ell T)^{3/2}},
\end{align*}
with $\|\widetilde{H}^\dagger\| = 1/\sqrt{\sigma_{\min}(\widetilde{H}\widetilde{H}^\top)}$ and $\sigma_{\min}(\widetilde{H}\widetilde{H}^\top) = \sigma_{\min}\left(\sum_{t=0}^{T-1}\Lambda^{-t}_u(\Lambda^{-t}_u)^\top\right) \geq \sum_{t=0}^{T-1}\lambda_{\min}\left(\Lambda^{-t}_u(\Lambda^{-t}_u)^\top  \right)$. 
\begin{align*}
\sigma_{\min}(\widetilde{H}\widetilde{H}^\top) \geq \sum_{t = 0}^{T-1} \frac{1}{\sigma_{\max}(\Lambda_u)^{2t}} \geq \sum_{t = 0}^{T-1}\left(\frac{1}{|\lambda_1|+1}\right)^{2t} := C_\sigma = \mathcal{O}(1),
\end{align*}
and thus $\|\widetilde{H}^\dagger\| \leq 1/\sqrt{C_{\sigma}}$, which yields  $\phi(A_u,T) \geq \frac{\sqrt{C_{\sigma}}}{(\ell T)^{3/2}}$.

\vspace{0.2cm}
Note that to lower bound $\psi(A_u,T)$, we simply need to upper bound $C_{|P^\top_{i}x_0|}$. We recall that $x_0$ is distributed according to a zero-mean and isotropic distribution (e.g., sub-Gaussian distribution), which implies that $C_{|P^\top_{i}x_0|} \leq C_{\psi}$ for some sufficiently large constant $C_{\psi}$. Therefore, we have that $\psi(A_u,T) \geq \frac{1}{2\ell C_{\psi}}$, which can be used to obtain 
$$
\sigma_{\ell} \geq \frac{\phi(A_u,T)\psi(A_u,T)\delta_{\sigma}|\lambda_\ell|^{T}}{\sqrt{2}} \geq \frac{\sqrt{C_{\sigma}}|\lambda_\ell|^{T}}{2\sqrt{2}C_{\psi}\ell^{5/2}T^{3/2}}.
$$

We complete the proof by analyzing the conditions on $T$ to satisfy \eqref{eq:condition on T}. Let us first write
$$
4T^2 \sigma_{\max}\left(A^{-(T+1)\varepsilon_T}_u\right)\trace{\left(\Gamma_T(A^{-1}_u)\right)} + \frac{T \trace{\left(A^{-T-1}_u \Gamma_T(A^{-1}_u) (A^{-T-1}_u)^\top\right)}}{\delta_{\sigma}} \leq \frac{\phi^2(A_u,T)\psi^2(A_u,T)\delta^2_{\sigma}}{2},
$$
which implies that 
$$
4T^3 \ell |\lambda_{\ell}|^{-2(T+1)\varepsilon_T} + \frac{T^2\ell  \sum_{i=1}^{\ell}|\lambda_{i}|^{-2(T+1)}}{\delta_{\sigma}} \leq \frac{\phi^2(A_u,T)\psi^2(A_u,T)\delta^2_{\sigma}}{2},
$$
then we have $4T^3 \ell |\lambda_{\ell}|^{-2(T+1)\varepsilon_T} \leq \frac{\phi^2(A_u,T)\psi^2(A_u,T)\delta^2_{\sigma}}{4}$ and $T^2\ell  \sum_{i=1}^{\ell}|\lambda_{i}|^{-2(T+1)} \leq \frac{\phi^2(A_u,T)\psi^2(A_u,T)\delta^3_{\sigma}}{4}$, which yields 
$T \geq -\frac{\log\left(\frac{\phi^2(A_u,T)\psi^2(A_u,T)\delta^3_{\sigma}}{4\ell^2}\right)}{ \log(|\lambda_{\ell}|)} \geq \frac{\log\left({16 \ell^7 C^2_\psi}/{(C_{\sigma}\delta^3_{\sigma})}\right)}{ \log(|\lambda_{\ell}|)}$ and completes the proof.

\end{proof}
\end{lemma}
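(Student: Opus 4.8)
The plan is to reduce the bound on $\sigma_\ell$, the $\ell$-th singular value of the unstable data block $D_u = \Phi D_1$, to a lower bound on the minimum singular value of a data matrix generated purely by the unstable modes. Since $\Phi$ has orthonormal columns, $\sigma_i(D_u) = \sigma_i(D_1)$, and because the adjoint trajectory satisfies $x_t = (A^\top)^t x_0$, the columns of $D_1 = S_1 D$ are governed by the unstable Jordan part $\Lambda_u$ through the decomposition $A_u = P^{-1}\Lambda_u P$. I would therefore express $\sigma_\ell$ in terms of a controllability-type Gramian of the unstable dynamics and invoke the anti-concentration machinery for explosive linear systems from \cite{sarkar2019near}. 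Concretely, I would introduce the two problem-dependent quantities $\phi(A_u, T)$, a deterministic persistence-of-excitation constant defined through the outbox set $S_\ell(1)$, and $\psi(A_u, T)$, an anti-concentration constant controlled by the essential supremum of the density of $|P_i^\top x_0|$, and use the key result of \cite{sarkar2019near} that, under a condition of the form \eqref{eq:condition on T} on $T$, guarantees $\sigma_\ell \geq \phi(A_u, T)\psi(A_u, T)\delta_\sigma |\lambda_\ell|^T/\sqrt{2}$ with probability at least $1 - 4\delta_\sigma$.

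The core of the argument is then to lower bound $\phi$ and $\psi$ explicitly. For $\phi$, I would factor the matrix $H(v) = [v,\ \Lambda_u^{-1}v,\ \ldots,\ \Lambda_u^{-(T-1)}v]$ as $\widetilde{H}\widetilde{V}$, reducing the infimum over $v \in S_\ell(1)$ to $\sigma_{\min}(\widetilde{H}\widetilde{H}^\top) = \sigma_{\min}\!\left(\sum_{t=0}^{T-1}\Lambda_u^{-t}(\Lambda_u^{-t})^\top\right)$. The decisive step, and the one that removes the spectral-gap dependence that plagues \cite{zhang2024learning}, is to lower bound this Gramian by $\sum_{t=0}^{T-1}\sigma_{\max}(\Lambda_u)^{-2t} \geq \sum_{t=0}^{T-1}(|\lambda_1|+1)^{-2t} =: C_\sigma = \mathcal{O}(1)$, which depends only on the largest unstable modulus and \emph{not} on gaps between eigenvalues; this is precisely what makes the bound robust to non-diagonalizable $A$, where eigenvalue gaps vanish. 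This yields $\phi(A_u, T) \geq \sqrt{C_\sigma}/(\ell T)^{3/2}$. For $\psi$, I would use that $x_0$ is zero-mean and isotropic, hence sub-Gaussian, so the density of each scalar projection $|P_i^\top x_0|$ has essential supremum bounded by a constant $C_\psi = \mathcal{O}(1)$, giving $\psi(A_u, T) \geq 1/(2\ell C_\psi)$.

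Substituting both bounds into the probabilistic guarantee produces the stated inequality
\begin{align*}
\sigma_\ell \geq \frac{\phi(A_u,T)\psi(A_u,T)\delta_\sigma |\lambda_\ell|^T}{\sqrt{2}} \geq \frac{\sqrt{C_\sigma}\,|\lambda_\ell|^T \delta_\sigma}{2\sqrt{2}\,C_\psi\, \ell^{5/2} T^{3/2}}.
\end{align*}
It then remains to verify that the prescribed order $T = \mathcal{O}(\log(\ell^7/\delta_\sigma^3)/\log|\lambda_\ell|)$ is large enough to satisfy the sufficient condition \eqref{eq:condition on T}. I would split \eqref{eq:condition on T} into its two additive terms, bound the traces of $\Gamma_T(A_u^{-1})$ and the associated $\sigma_{\max}$ factors by geometric series in $|\lambda_\ell|^{-1}$ under the choice $\varepsilon_T(T+1) = \lfloor (T+1)/2\rfloor$, and then solve the resulting exponential-in-$T$ inequalities; plugging in the explicit $\phi,\psi$ bounds shows the dominant requirement is $T \geq \log\!\left(16\ell^7 C_\psi^2/(C_\sigma\delta_\sigma^3)\right)/\log|\lambda_\ell|$, matching the claim.

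The main obstacle I anticipate is the anti-concentration step: establishing the $\psi$ bound rigorously for the non-diagonalizable Jordan form and confirming that the essential-supremum constant $C_\psi$ genuinely stays $\mathcal{O}(1)$, independent of $T$ and of the block structure. A related delicate point is checking that \eqref{eq:condition on T} holds with only logarithmic $T$, which forces careful control of the competing polynomial-in-$T$ prefactors against the exponentially decaying $|\lambda_\ell|^{-2(T+1)\varepsilon_T}$ terms; any loose handling of the nilpotent part of $\Lambda_u$, which inflates $\sigma_{\max}(\Lambda_u)$ and deflates $C_\sigma$ when geometric multiplicities exceed one, could silently change the sample-complexity order.
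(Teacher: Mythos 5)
Your proposal follows essentially the same route as the paper's proof: the same quantities $\phi(A_u,T)$ and $\psi(A_u,T)$, the same invocation of the \cite{sarkar2019near} lower bound under condition \eqref{eq:condition on T}, the same factorization $H(v)=\widetilde{H}\widetilde{V}$ leading to the geometric-series constant $C_\sigma$ (which indeed is what removes the spectral-gap dependence), the same essential-supremum bound $C_\psi$ for the anti-concentration step, and the same final verification that logarithmic $T$ suffices, arriving at the identical threshold $T \geq \log\left(16\ell^7 C_\psi^2/(C_\sigma\delta_\sigma^3)\right)/\log|\lambda_\ell|$. The proposal is correct and matches the paper's argument in both structure and detail.
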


Recall that $\hat{\sigma}_{\ell+1}$ denotes the $(\ell+1)$-th singular value of $D$, which corresponds to its stable component $D_s = \Psi D_2$. Consequently, $\hat{\sigma}_{\ell+1}$ is upper bounded by the largest singular value of $D_s$, which in turn is bounded by the largest singular value of $D_2$. This leads to the following:
\begin{align}\label{eq:bound sigma l+1}
    \hat{\sigma}_{\ell+1} \leq \|D_2\| \stackrel{}{\leq} \frac{\sqrt{T}(\dx - \ell)\mu_0}{1 - |\lambda_{\ell+1}|},
\end{align}
where the second inequality follows from \eqref{eq:bound_D2}. By combining Lemma \ref{lemma:lower bound on sigmal} and \eqref{eq:bound sigma l+1} in \eqref{eq:subpsace distance with bound D2}, we have

\begin{align*}
    d(\widehat{\Phi},\Phi) &\leq  \frac{\sqrt{2\ell}\sqrt{T}(\dx - \ell)\mu_0/(1 - |\lambda_{\ell+1}|)}{\frac{\sqrt{C_{\sigma}}|\lambda_\ell|^{T}\delta_{\sigma}}{2\sqrt{2}C_{\psi}\ell^{5/2}T^{3/2}} - \sqrt{T}(\dx - \ell)\mu_0/(1 - |\lambda_{\ell+1}|)} = \frac{4\ell^3{T}^2(\dx - \ell)\mu_0}{(1-|\lambda_{\ell+1}|)\sqrt{C_\sigma}|\lambda_\ell|^{T}\delta_\sigma - 2\sqrt{2}{T}^2\ell^{5/2}(\dx - \ell)\mu_0}\\
    &\stackrel{(i)}{\leq} \frac{8\ell^{3}{T}^2(\dx - \ell)\mu_0}{(1-|\lambda_{\ell+1}|)\sqrt{C_\sigma}|\lambda_l|^{T}\delta_{\sigma} },
\end{align*}
where $(i)$ is due to the selection of $T$ according to $T \geq {\log\left( \frac{4\sqrt{2}\ell^{5/2}(\dx - \ell)\mu_0}{(1 - |\lambda_{\ell+1}|)\sqrt{C_\sigma}\delta_{\sigma}}  \right)}/{\log |\lambda_\ell|}$. We conclude by determining the condition of $T$ that guarantees $d(\widehat{\Phi},\Phi) \leq \varepsilon$, for some small accuracy $\varepsilon$. Namely, 
\begin{align*}
    \frac{8\ell^{3}{T}^2(\dx - \ell)\mu_0}{(1-|\lambda_{\ell+1}|)\sqrt{C_\sigma}|\lambda_l|^{T}\delta_{\sigma} } \leq \varepsilon, \text{ which implies } T \geq {\log\left( \frac{8\ell^{3}(\dx - \ell)\mu_0}{(1 - |\lambda_{\ell+1}|)\sqrt{C_\sigma}\delta_\sigma \varepsilon}   \right) }/{\log |\lambda_{\ell}|},
\end{align*}
with probability $1 - 4\delta_{\sigma}$.

\section{Stabilizing Only the Unstable Modes}\label{appendix: stabilizing the unstable modes}

Given an estimation of the left unstable subspace representation, $\widehat{\Phi}$, we now turn our attention to design a low-dimensional control gain $\theta$ that stabilizes the low-dimensional unstable dynamics $(A_u,B_u)$. To do so, we leverage the explicit discount LQR method from \cite{zhao2024convergence}. Our goal is to guarantee that for every iteration $j$ of the Algorithm \ref{alg:disc_LQR} the cost remains uniformly upper bounded, i.e., $J^{\gamma_j}(\theta_{j}) := J^{\gamma_j}(\theta_{j},\Phi) \leq \bar{J}$, for some positive scalar $\bar{J}$. In addition, the updated discount factor needs to ensure that $\sqrt{\gamma_{j+1}}\rho(A_u+B_u\theta_{j+1}) < 1$ while $\gamma_{j+1} > \gamma_j$.   

\vspace{0.2cm}
\begin{lemma}\label{eq:update disc factor} Given a discount factor $\gamma \in (0,1]$, a decay factor $\xi \in (0,1)$, and a low-dimensional controller $\theta$, such that $\sqrt{\gamma}\rho(A_u+B_u\theta) < 1$. In addition, suppose that $\tau$ and $n_c$ satisfy the conditions of Lemma \ref{lemma: cost estimation}, and suppose  $\gamma_+ = (1+\xi\alpha)\gamma$, with 
\begin{align*}
\alpha = \frac{3\sigma_{\min}\left( \widehat{\Phi}^\top Q \widehat{\Phi} + \theta^\top R\theta \right)}{\frac{4}{3}\widehat{J}^{\gamma,\tau}(\theta,\widehat{\Phi}) - 3\sigma_{\min}\left(\widehat{\Phi}^\top Q \widehat{\Phi} + \theta^\top R \theta \right)},
\end{align*}
then, it holds that $\sqrt{\gamma_+}\rho(A_u+B_u\theta) < 1$. 
\end{lemma}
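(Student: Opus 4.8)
The plan is to certify stability of the $\gamma_+$-damped closed loop by transporting the Lyapunov certificate of the $\gamma$-damped problem. Write $F := A_u + B_u\theta$, $C := \Phi^\top Q \Phi + \theta^\top R\theta$, and $P := P^\gamma_\theta \succ 0$, so that the Lyapunov equation reads $P = C + \gamma F^\top P F$. Taking $V(z) = z^\top P z$ as a candidate for the new dynamics $z_{t+1} = \sqrt{\gamma_+}\,F z_t$, I would compute
\begin{align*}
\Delta V = z^\top\left[\gamma_+ F^\top P F - P\right] z = z^\top\left[\tfrac{\gamma_+}{\gamma}\big(P - C\big) - P\right]z,
\end{align*}
where the second equality substitutes $\gamma F^\top P F = P - C$ from the Lyapunov equation. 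This reproduces the expression for $\Delta V$ stated in Section~\ref{sec:LTS on the unstable subspace}, now written for the true representation $\Phi$.

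Next I would reduce the matrix condition $\tfrac{\gamma_+}{\gamma}(P-C) - P \prec 0$ to a scalar one. Rearranging gives $\big(1 - \tfrac{\gamma}{\gamma_+}\big)P \prec C$; since $\gamma_+ = (1+\xi\alpha)\gamma > \gamma$ the scalar $1 - \gamma/\gamma_+$ lies in $(0,1)$, so using $P \preceq \lambda_{\max}(P)\,I \preceq \trace(P)\,I$ and $C \succeq \sigma_{\min}(C)\,I$ it suffices to verify
\begin{align*}
1 - \frac{\gamma}{\gamma_+} \;\le\; \frac{\sigma_{\min}(C)}{\trace(P)} \;=\; \frac{\sigma_{\min}\big(\Phi^\top Q\Phi + \theta^\top R\theta\big)}{J^\gamma(\theta)},
\end{align*}
where I used $\trace(P^\gamma_\theta) = J^\gamma(\theta)$. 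Once $\Delta V < 0$ for $z \neq 0$ with $V \succ 0$, discrete-time Lyapunov theory yields $\rho(\sqrt{\gamma_+}F) < 1$, i.e.\ $\sqrt{\gamma_+}\,\rho(A_u + B_u\theta) < 1$, which is the claim.

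It remains to show that the update with the stated $\alpha$ enforces this inequality even though $P$, $\Phi$, and $J^\gamma(\theta)$ are inaccessible in the model-free setting. Substituting $\gamma_+ = (1+\xi\alpha)\gamma$ gives $1 - \gamma/\gamma_+ = \xi\alpha/(1+\xi\alpha)$, so I would replace the two inaccessible quantities in the target ratio by computable surrogates. Lemma~\ref{lemma: cost estimation} controls $J^\gamma(\theta)$ through $\widehat{J}^{\gamma,\tau}(\theta,\widehat\Phi)$ up to a factor $\tfrac32$, and the generalized Bauer--Fike bound (Theorem~\ref{lemma:bauer-fike}) combined with the subspace-distance condition $d(\widehat\Phi,\Phi) \le \sigma_{\min}(\widehat\Phi^\top Q\widehat\Phi + \theta^\top R\theta)/\big(4\|Q\|\sqrt{2\ell}\,\kappa(\widehat\Phi^\top Q\widehat\Phi + \theta^\top R\theta)\big)$ and the bound $\|\widehat\Phi - \Phi\| \le \sqrt{2\ell}\,d(\widehat\Phi,\Phi)$ passes from $\sigma_{\min}(\Phi^\top Q\Phi + \theta^\top R\theta)$ to $\sigma_{\min}(\widehat\Phi^\top Q\widehat\Phi + \theta^\top R\theta)$, again within a factor $\tfrac32$. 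I would first record that the denominator $\tfrac43\widehat{J}^{\gamma,\tau}(\theta,\widehat\Phi) - 3\sigma_{\min}(\widehat\Phi^\top Q\widehat\Phi + \theta^\top R\theta)$ is positive (hence $\alpha > 0$ and $\gamma_+ > \gamma$), which follows from $P \succeq C$ so that $J^\gamma(\theta) = \trace(P) \ge \trace(C)$ dominates $\sigma_{\min}(C)$, together with the cost estimate.

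The main obstacle is the constant bookkeeping in this last step. The numerical coefficients $\tfrac43$ and $3$ in the definition of $\alpha$, together with the decay factor $\xi \in (0,1)$, must be shown to leave enough margin that, after absorbing both $\tfrac32$ estimation factors, the computable quantity $\xi\alpha/(1+\xi\alpha)$ still does not exceed the true threshold $\sigma_{\min}(C)/\trace(P)$. I would handle this by (i) deriving a guaranteed lower bound on $\sigma_{\min}(C)/\trace(P)$ purely in terms of $\widehat{J}^{\gamma,\tau}(\theta,\widehat\Phi)$ and $\sigma_{\min}(\widehat\Phi^\top Q\widehat\Phi + \theta^\top R\theta)$, and (ii) an elementary algebraic check that the stated $\alpha$ makes $\xi\alpha/(1+\xi\alpha)$ fall below that bound; substituting the closed form of $\alpha$ reduces (ii) to a single scalar inequality, after which the conclusion $\sqrt{\gamma_+}\,\rho(A_u + B_u\theta) < 1$ follows from the Lyapunov argument above.
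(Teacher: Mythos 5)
Your proposal retraces the paper's own argument step for step: the same Lyapunov candidate $V(z)=z^\top P^\gamma_\theta z$ for the $\gamma_+$-damped dynamics, the same substitution of the Lyapunov equation to obtain $\Delta V$, the same scalar relaxation ($P^\gamma_\theta \preceq J^\gamma(\theta)\,I$ and $\Phi^\top Q\Phi+\theta^\top R\theta \succeq \sigma_{\min}(\Phi^\top Q\Phi+\theta^\top R\theta)\,I$) giving the sufficient condition $1-\gamma/\gamma_+ \leq \sigma_{\min}(\Phi^\top Q\Phi+\theta^\top R\theta)/J^\gamma(\theta)$, and the same two surrogation tools (Lemma~\ref{lemma: cost estimation}, and Bauer--Fike under the stated bound on $d(\widehat\Phi,\Phi)$); your extra remark that positivity of the denominator of $\alpha$ must be checked is a point the paper omits. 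The problem is your step (ii), which you defer as an ``elementary algebraic check'': that check is exactly where the proof must close, and with the stated constants it does not close for all $\xi\in(0,1)$. Write $\hat s := \sigma_{\min}(\widehat\Phi^\top Q\widehat\Phi+\theta^\top R\theta)$, $s := \sigma_{\min}(\Phi^\top Q\Phi+\theta^\top R\theta)$, $\widehat J := \widehat J^{\gamma,\tau}(\theta,\widehat\Phi)$. Your tools give the two-sided bounds $\hat s/2 \leq s \leq 3\hat s/2$ and $\frac{2}{3}\widehat J \leq J^\gamma(\theta) \leq 2\widehat J$, so the computable lower bound on the true threshold they yield is $s/J^\gamma(\theta) \geq \hat s/(4\widehat J)$. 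Meanwhile the stated $\alpha$ satisfies $\alpha/(1+\alpha) = 3\hat s/(\frac43\widehat J) = 9\hat s/(4\widehat J)$, and since $\xi\alpha/(1+\xi\alpha) \geq \xi\,\alpha/(1+\alpha)$, your required inequality $\xi\alpha/(1+\xi\alpha)\leq \hat s/(4\widehat J)$ forces $\xi \leq 1/9$. Nor is this an artifact of worst-case estimation error: with exact surrogates ($\widehat\Phi=\Phi$, $\widehat J = J^\gamma(\theta)$) the condition $\xi\alpha/(1+\xi\alpha) \leq s/J^\gamma(\theta)$ rearranges to $\left(3\xi - \frac43\right)J^\gamma(\theta) \leq -3(1-\xi)s$, which is violated whenever $\xi > 4/9$. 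So step (ii), carried out honestly, fails for $\xi$ near one.

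For comparison, the paper's proof never performs this check: it applies Bauer--Fike and Lemma~\ref{lemma: cost estimation} in the opposite direction ($s \leq \frac32\hat s$ and $J^\gamma(\theta) \geq \frac23\widehat J$) to conclude that the exact admissible increment is at most $\alpha$, i.e.\ ``$\gamma_+ \leq (1+\alpha)\gamma$'', and then appeals to $\xi<1$ for margin. That chain bounds the true threshold from \emph{above} by $\alpha$, which is the wrong direction for sufficiency, so your formulation of the remaining obligation is the logically correct one --- and it exposes that neither your completed plan nor the paper's proof establishes the lemma for arbitrary $\xi\in(0,1)$. (The statement itself may still be true, since the relaxation $P^\gamma_\theta \preceq J^\gamma(\theta)\,I$ is conservative, but via this route one needs either $\xi$ bounded roughly by $4/9$ --- by $1/9$ under worst-case estimation error --- or smaller numerical constants in the definition of $\alpha$.)
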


\begin{proof}  Consider the quadratic Lyapunov function $V(z_t) = z^\top_t P^\gamma_\theta z_t$ for the corresponding low-dimensional damped system $z_{t+1} = \sqrt{\gamma_+}(A_u + B_u\theta)z_t$. Therefore, we can write
\begin{align*}
    V(z_{t+1}) - V(z_t) &= \gamma_+z^\top_t(A_u+B_u\theta)^\top P^\gamma_\theta (A_u+B_u\theta)z_t - z^\top_tP^\gamma_\theta z_t\\
    &\stackrel{(i)}{=} z^\top_t\left(\frac{\gamma_+}{\gamma}\left(P^\gamma_\theta - {\Phi}^\top Q {\Phi} - \theta^\top R \theta\right) - P^\gamma_\theta   \right)z_t,
\end{align*}
where $(i)$ follows from the definition of $P^\gamma_\theta$. Hence, $\frac{\gamma_+}{\gamma}\left(P^\gamma_\theta - \Phi^\top Q \Phi - \theta^\top R \theta\right) - P^\gamma_\theta \prec 0$ ensures that $\sqrt{\gamma_+}\rho(A_u + B_u\theta) < 1$. By applying the trace function on both sides, we have 
$$1 - \frac{\gamma}{\gamma_+} \leq \sigma_{\min}({\Phi}^\top Q {\Phi} + \theta^\top R \theta)/\trace{(P^\gamma_\theta)} \leq \frac{3}{2}\sigma_{\min}(\widehat{\Phi}^\top Q \widehat{\Phi} + \theta^\top R \theta)/\trace{(P^\gamma_\theta)},$$
where the last inequality follows from applying Bauer-Fike theorem \citep{bauer1960norms} along with setting $T$ accordingly to guarantee $d(\widehat{\Phi},\Phi)\leq \frac{\sigma_{\min}(\widehat{\Phi}^\top Q \widehat{\Phi}+\theta^\top R \theta)}{4\|Q\|\sqrt{2\ell}\kappa(\widehat{\Phi}^\top Q \widehat{\Phi}+\theta^\top R \theta)}$. In particular, since the distribution of the initial state is isotropic, we have 
$J^\gamma(\theta,{\Phi}) = \trace{(P^\gamma_\theta)}$, which implies
\begin{align}\label{eq: condition on alpha}
    \gamma_+ &\leq \left(1+ \frac{\frac{3}{2}\sigma_{\min}(\widehat{\Phi}^\top Q \widehat{\Phi} + \theta^\top R \theta)}{J^\gamma(\theta,{\Phi}) - \frac{3}{2}\sigma_{\min}(\widehat{\Phi}^\top Q \widehat{\Phi} + \theta^\top R \theta)}\right)\gamma\notag \\
    &\stackrel{(i)}{\leq} \left(1+ \frac{3\sigma_{\min}(\widehat{\Phi}^\top Q \widehat{\Phi} + \theta^\top R \theta)}{\frac{4}{3}\widehat{J}^{\gamma,\tau}(\theta,\widehat{\Phi}) - 3\sigma_{\min}(\widehat{\Phi}^\top Q \widehat{\Phi} + \theta^\top R \theta)}\right)\gamma = (1 + \alpha)\gamma,
\end{align}
where $(i)$ is due to Lemma \ref{lemma: cost estimation}. As discussed in \cite[Section III]{zhao2024convergence}, the decay factor $\xi \in (0,1)$
is necessary to guarantee that $\sqrt{\gamma_+}\rho(A_u + B_u\theta)$ is strictly away from one. 
\end{proof}
\vspace{0.2cm}

We now proceed to show that for a sufficiently large amount of PG iterations $N$, the LQR cost is uniformly bounded according to $J^{\gamma_j}(\theta_{j+1}) \leq \bar{J}$. Given $\theta_j = \bar{\theta}_0 \in \mathcal{S}^\gamma_\theta$, we use Lemma \ref{lemma:uniform bounds and lipschitz mb} to write 

\begin{align*}
    J^\gamma(\bar{\theta}_{n+1}) - J^\gamma(\bar{\theta}_n) &\leq \langle \nabla J^\gamma(\bar{\theta}_n,\Phi), \bar{\theta}_{n+1} - \bar{\theta}_n\rangle + \frac{L_\theta}{2}\left\|\bar{\theta}_{n+1} - \bar{\theta}_n\right\|^2_F\\
    &\leq -\eta\langle \nabla J^\gamma(\bar{\theta}_n,\Phi), \widehat{\nabla} J^\gamma(\bar{\theta}_n,\widehat{\Phi})\rangle + \frac{L_\theta\eta^2}{2}\left\|\widehat{\nabla} J^\gamma(\bar{\theta}_n,\widehat{\Phi})\right\|^2_F\\
    &\stackrel{(i)}{\leq} -\frac{\eta\mu_1}{4} \left\|\nabla J^\gamma (\bar{\theta}_n,{\Phi})\right\|^2_F+\eta c_4 \ell\left( (L_K\nu_\theta+\phi) \sqrt{2\ell} + \phi\right)^2 d(\widehat{\Phi}, {\Phi})^2 + \eta c_5 \varepsilon^2\\
    &+ \frac{L_\theta\eta^2}{2}\left(4\mu_2 \|\nabla J^\gamma(\bar{\theta}_n,\Phi) \|^2_F  + 4\mu_2 \ell\left( (L_K\nu_\theta+\phi) \sqrt{2\ell} + \phi\right)^2 d(\widehat{\Phi},{\Phi})^2  +2\varepsilon^2\right)\\
    &\stackrel{(ii)}{\leq} -\frac{\eta\mu_1}{8} \left\|\nabla J^\gamma (\bar{\theta}_n,{\Phi})\right\|^2_F+2\eta c_4 \ell\left( (L_K\nu_\theta+\phi) \sqrt{2\ell} + \phi\right)^2 d(\widehat{\Phi}, {\Phi})^2 + 2\eta c_5 \varepsilon^2\\
    &\stackrel{(iii)}{\leq} -\frac{\eta\mu_1}{8\mu_{\text{PL}}}\left(J^\gamma(\bar{\theta}_n) - J^\gamma_\star\right)+2\eta c_4 \ell\left( (L_K\nu_\theta+\phi) \sqrt{2\ell} + \phi\right)^2 d(\widehat{\Phi}, {\Phi})^2 + 2\eta c_5 \varepsilon^2,
\end{align*}
where $J^\gamma_\star = J^\gamma(\theta^\gamma_\star)$, with $\theta^\gamma_\star$ being the optimal controller of the corresponding discounted LQR problem with discount factor $\gamma$. In addition, $(i)$ is due to Lemma \ref{lemma: bounds gradient estimation} and $(ii)$ follows from selecting the step-size according to $\eta \leq \min\left\{ \frac{\mu_1}{16\mu_2L_\theta},\frac{c_4}{2L_\theta \mu_2}, \frac{c_5}{L_\theta}\right\}$. $(iii)$ follows from Lemma \ref{lemma:gradient dominance mb}. Therefore, by adding and subtracting $J^\gamma(\bar{\theta}^\star)$ from both sides, we obtain
\begin{align*}
    J^\gamma(\bar{\theta}_{n+1}) - J^\gamma_\star \leq \left(1 - \frac{\eta\mu_1}{8\mu_{\text{PL}}} \right) \left(J^\gamma(\bar{\theta}_n) - J^\gamma_\star\right) + 2\eta c_4 \ell\left( (L_K\nu_\theta+\phi) \sqrt{2\ell} + \phi\right)^2 d(\widehat{\Phi}, {\Phi})^2 + 2\eta c_5 \varepsilon^2,
\end{align*}
and by unrolling the above expression over $N$ iterations, we have 
\begin{align*}
    J^\gamma(\bar{\theta}_{N}) - J^\gamma_\star \leq \left(1 - \frac{\eta\mu_1}{8\mu_{\text{PL}}}\right)^N\left(J^\gamma(\bar{\theta}_0) - J^\gamma_\star\right) + \frac{16\mu_{\text{PL}}c_4}{\mu_1} \ell\left( (L_K\nu_\theta+\phi) \sqrt{2\ell} + \phi\right)^2 d(\widehat{\Phi}, {\Phi})^2 + \frac{16\mu_{\text{PL}}c_5\varepsilon^2}{\mu_1}, 
\end{align*}
where we can select $\varepsilon$, $d(\widehat{\Phi},\Phi)$, and $N$ according to 
\begin{align}\label{eq: condition on the number of iterations PG}
    \varepsilon \leq \sqrt{\frac{\mu_1(\bar{J} - J^\gamma_\star)}{48\mu_{\text{PL}}c_5}},\text{ } d(\widehat{\Phi},\Phi) \leq \sqrt{\frac{\mu_1(\bar{J} - J^\gamma_\star)}{48\mu_{\text{PL}}c_4l\left( (L_K\nu_\theta+\phi) \sqrt{2\ell} + \phi\right)^2}}, N \geq \frac{8\mu_{\text{PL}}}{\eta\mu_1}\log\left(\frac{3\left(J^\gamma(\bar{\theta}_0) - J^\gamma_\star\right)}{\bar{J} - J^\gamma_\star}\right),
\end{align}
to obtain $J^\gamma(\bar{\theta}_N) = J^\gamma(\theta_{j+1}) \leq \bar{J}$. Therefore, given that $J^{\gamma_j}(\theta_{j+1}) \leq \bar{J}$, for any iteration $j$ of Algorithm \ref{alg:disc_LQR}, and supposing that we select $\tau$ and $n_c$ according to Lemma \ref{lemma: cost estimation} to ensure $\left|\widehat{J}^{\gamma_j, \tau}(\theta_{j+1},\widehat{\Phi}) - J^{\gamma_j}(\theta_{j+1})\right| \leq \frac{1}{2}J^{\gamma_j}(\theta_{j+1})$, we obtain
\begin{align*}
    \alpha_j &= \frac{3\sigma_{\min}(\widehat{\Phi}^\top Q \widehat{\Phi} + \theta^\top R \theta)}{\frac{4}{3}\widehat{J}^{\gamma,\tau}(\theta,\widehat{\Phi}) - 3\sigma_{\min}(\widehat{\Phi}^\top Q \widehat{\Phi} + \theta^\top R \theta)} \geq \frac{3\sigma_{\min}(\widehat{\Phi}^\top Q \widehat{\Phi})}{\frac{4}{3}\widehat{J}^{\gamma,\tau}(\theta,\widehat{\Phi}) - 3\sigma_{\min}(\widehat{\Phi}^\top Q \widehat{\Phi})} \geq \frac{3\sigma_{\min}(Q )}{\frac{4}{3}\widehat{J}^{\gamma,\tau}(\theta,\widehat{\Phi}) - 3\sigma_{\min}(Q)}\\
    &\geq \frac{3\sigma_{\min}(Q )}{2\bar{J} - 3\sigma_{\min}(Q)} := \underline{\alpha}, 
\end{align*}
where we can this lower bound on $\alpha_j$ to unroll the discount factor update over $M$ iterations of Algorithm \ref{alg:disc_LQR} and obtain
\begin{align*}
    \gamma_{M} =  \gamma_0\prod_{j=0}^{M-1} (1+\xi\alpha_j)  \geq \gamma_0\prod_{j=0}^{M-1} (1+\xi\underline{\alpha})  =  \gamma_0 (1+\xi\underline{\alpha})^M,
\end{align*}
which implies that Algorithm \ref{alg:disc_LQR} finds a stabilizing controller $\theta_M \in \mathcal{S}^1_\theta$ (i.e., $\gamma_M = 1$), within $\frac{\log(1/\gamma_0)}{\log(1+\xi\underline{\alpha})}$ iterations. Moreover,  \eqref{eq: condition on alpha} implies that $\sqrt{(1+\alpha_j)\gamma_j}\rho(A_u + B_u\theta_{j+1}) < 1$, which yields
\begin{align*}
     \sqrt{(1+\xi\alpha_j)\gamma_j}\rho(A_u + B_u\theta_{j+1}) &= \frac{\sqrt{(1+\xi \alpha_j)\gamma_j}}{\sqrt{(1+\alpha_j)\gamma_j}}\sqrt{(1+\alpha_j)\gamma_j}\rho(A_u + B_u\theta_{j+1}) < \frac{\sqrt{(1+\xi \alpha_j)\gamma_j}}{\sqrt{(1+\alpha_j)\gamma_j}}\\
     &< \sqrt{1 - \frac{3(1-\xi)\sigma_{\min}(Q)}{2\bar{J}}},
\end{align*}
and it guarantees that after $M$ iterations of the discounted LQR method, it returns a low-dimensional stabilizing controller $\theta \in \mathcal{S}^1_\theta$ with $\rho(A_u+B_u\theta) < \bar{\lambda}_\theta := \sqrt{1 - \frac{3(1-\xi)\sigma_{\min}(Q)}{2\bar{J}}}$. We complete our analysis showing that as long as  $J^{\gamma_j}(\theta_{j+1}) \leq \bar{J}$ and $\left|\widehat{J}^{\gamma_j, \tau}(\theta_{j+1},\widehat{\Phi}) - J^{\gamma_j}(\theta_{j+1})\right| \leq \frac{1}{2}J^{\gamma_j}(\theta_{j+1})$ hold for the $j$-th iteration of Algorithm \ref{alg:disc_LQR}, then they also hold for the subsequent iterations, with high probability. This guarantees that $\alpha_j \geq \underline{\alpha}$ holds for every iteration, and thus $\rho(A_u + B_u\theta_{j+1}) < \bar{\lambda}_\theta$.   

\begin{lemma}\label{eq:next iteration conditions} Suppose that $J^{\gamma_j}(\theta_{j+1}) \leq \bar{J}$ and $\left|\widehat{J}^{\gamma_j, \tau}(\theta_{j+1},\widehat{\Phi}) - J^{\gamma_j}(\theta_{j+1})\right| \leq \frac{1}{2}J^{\gamma_j}(\theta_{j+1})$. Then, it holds that 
\begin{align*}
    \alpha_j \leq \underline{\alpha}, \text{ } J^{\gamma_{j+1}}(\theta_{j+1}) \leq \frac{2 \bar{J}^2}{3(1-\xi)\sigma_{\min}(Q)}.
\end{align*}
\end{lemma}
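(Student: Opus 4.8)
The statement bundles two conclusions, which I would prove separately; I read the first as the lower bound $\alpha_j \ge \underline{\alpha}$ that is used in the sequel (the displayed ``$\le$'' appears to be a typo, since the surrounding analysis invokes $\alpha_j \ge \underline{\alpha}$). \textbf{Lower bound on $\alpha_j$.} The plan is simply to feed the two hypotheses into the definition \eqref{eq:alpha_update}. The cost-accuracy hypothesis $\lvert \widehat{J}^{\gamma_j,\tau}(\theta_{j+1},\widehat{\Phi}) - J^{\gamma_j}(\theta_{j+1})\rvert \le \tfrac12 J^{\gamma_j}(\theta_{j+1})$ together with $J^{\gamma_j}(\theta_{j+1}) \le \bar{J}$ gives $\widehat{J}^{\gamma_j,\tau}(\theta_{j+1},\widehat{\Phi}) \le \tfrac32\bar{J}$, so the denominator in \eqref{eq:alpha_update} is at most $2\bar{J} - 3\sigma_{\min}(\widehat{\Phi}^\top Q\widehat{\Phi}+\theta_{j+1}^\top R\theta_{j+1})$. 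Because $\widehat{\Phi}$ has orthonormal columns, $\widehat{\Phi}^\top Q\widehat{\Phi} \succeq \sigma_{\min}(Q)\,I$ and hence $\sigma_{\min}(\widehat{\Phi}^\top Q\widehat{\Phi}+\theta_{j+1}^\top R\theta_{j+1}) \ge \sigma_{\min}(Q)$; this lower-bounds the numerator by $3\sigma_{\min}(Q)$ and upper-bounds the denominator by $2\bar{J}-3\sigma_{\min}(Q)$, so monotonicity yields $\alpha_j \ge 3\sigma_{\min}(Q)/(2\bar{J}-3\sigma_{\min}(Q)) = \underline{\alpha}$.

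\textbf{Cost bound.} The guiding principle is to stay at the level of matrix Lyapunov inequalities so that the non-diagonalizability of $M := A_u+B_u\theta_{j+1}$ never forces a spectral-radius-to-norm conversion. Put $W := \Phi^\top Q\Phi + \theta_{j+1}^\top R\theta_{j+1}$ and $P := P^{\gamma_j}_{\theta_{j+1}}$. First I would extract the matrix inequality that the proof of Lemma~\ref{eq:update disc factor} actually certifies: the scalar sufficient condition $\alpha_j\,\trace(P) \le (1+\alpha_j)\sigma_{\min}(W)$ that the choice of $\alpha_j$ enforces is, via $P \preceq \trace(P)\,I$ and $W \succeq \sigma_{\min}(W)\,I$, exactly enough for $\alpha_j P \preceq (1+\alpha_j)W$, i.e. $P \preceq (1+1/\alpha_j)W$. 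Next I would run a comparison argument for the $\gamma_{j+1}$-Lyapunov operator $\mathcal{F}(X) := W + \gamma_{j+1}M^\top X M$. Since $\sqrt{\gamma_{j+1}}\,\rho(M) < \bar{\lambda}_\theta < 1$, $\mathcal{F}$ is monotone and Schur-stable, so its unique positive-semidefinite fixed point $P^{\gamma_{j+1}}_{\theta_{j+1}}$ is dominated by any super-solution $Y \succeq \mathcal{F}(Y)$. Testing $Y = cP$ with $r := \gamma_{j+1}/\gamma_j = 1+\xi\alpha_j$ and substituting $\gamma_j M^\top P M = P-W$, the super-solution inequality reduces to $P \preceq \tfrac{cr-1}{c(r-1)}W$; combined with $P \preceq (1+1/\alpha_j)W$ this holds as soon as $c(1-\xi) \ge 1$, i.e. $c = 1/(1-\xi)$. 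Hence $P^{\gamma_{j+1}}_{\theta_{j+1}} \preceq \tfrac{1}{1-\xi}(1+1/\alpha_j)W$, and taking traces with $\trace(W) \le \trace(P) = J^{\gamma_j}(\theta_{j+1}) \le \bar{J}$ and $1+1/\alpha_j \le 1+1/\underline{\alpha} = 2\bar{J}/(3\sigma_{\min}(Q))$ delivers $J^{\gamma_{j+1}}(\theta_{j+1}) \le \tfrac{2\bar{J}^2}{3(1-\xi)\sigma_{\min}(Q)}$. (The sharper chain $\trace(P^{\gamma_{j+1}}_{\theta_{j+1}}) \le \tfrac{1}{1-\xi}\trace(P) \le \bar{J}/(1-\xi)$ already implies the claim, since $2\bar{J} \ge 3\sigma_{\min}(Q)$.)

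\textbf{Main obstacle.} The one genuinely delicate step is the comparison: the super-solution must be a scaled copy of $P^{\gamma_j}_{\theta_{j+1}}$, not a multiple $\kappa W$ of the stage cost. A $\kappa W$ ansatz fails because bounding $M^\top W M$ by a multiple of $W$ and iterating discards the true decay of $(M^t)^\top W M^t$ and yields a divergent geometric series. The collapse of the matrix super-solution inequality to the single scalar requirement $c(1-\xi) \ge 1$ is exactly where the decay factor $\xi$ is used: as $\xi \to 1$ the constant $c = 1/(1-\xi)$ blows up, matching the fact that the undamped update $\gamma_+ = (1+\alpha_j)\gamma_j$ sits at marginal stability. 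I would also take care to justify the monotone-operator comparison, which is standard for Schur-stable discrete Lyapunov recursions and, crucially, requires no diagonalizability of $M$.
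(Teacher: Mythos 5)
Your proof of the first claim is correct and is exactly the paper's own argument (it appears verbatim in the display immediately preceding the lemma): $\widehat{J}^{\gamma_j,\tau}\leq \tfrac{3}{2}J^{\gamma_j}(\theta_{j+1})\leq\tfrac{3}{2}\bar{J}$ bounds the denominator of \eqref{eq:alpha_update}, and $\widehat{\Phi}^\top Q\widehat{\Phi}+\theta_{j+1}^\top R\theta_{j+1}\succeq \sigma_{\min}(Q)I$ bounds the numerator; you are also right that the stated ``$\alpha_j\leq\underline{\alpha}$'' is a typo for ``$\geq$'', since both the proof idea in Section 5 and the iteration count $\log(1/\gamma_0)/\log(1+\xi\underline{\alpha})$ consume the lower bound. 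For the cost bound your route genuinely differs from the paper's, which gives no argument beyond ``similar to Lemma 7 of Zhao et al.\ (2024)''; that reference proves such bounds by expanding $P^{\gamma_{j+1}}_{\theta_{j+1}}=\sum_t (\gamma_{j+1}/\gamma_j)^t\gamma_j^t(M^t)^\top W M^t$ and summing a geometric series against the exponential decay certified by the $\gamma_j$-Lyapunov solution. Your super-solution comparison (showing $\tfrac{1}{1-\xi}P^{\gamma_j}_{\theta_{j+1}}$ dominates the $\gamma_{j+1}$-Lyapunov fixed point) packages the same mechanism more cleanly: I checked the algebra, $\tfrac{cr-1}{c(r-1)}=1+\tfrac{c-1}{c\xi\alpha_j}$ so the requirement collapses to $c(1-\xi)\geq 1$; it needs no decay-rate bookkeeping, no diagonalizability, it isolates the role of $\xi$ correctly, and it yields the sharper conclusion $J^{\gamma_{j+1}}(\theta_{j+1})\leq \bar{J}/(1-\xi)$, which implies the stated bound since $2\bar{J}\geq 4J^1_\star\geq 3\sigma_{\min}(Q)$.

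The one genuine gap is the premise $P^{\gamma_j}_{\theta_{j+1}}\preceq (1+1/\alpha_j)W$, equivalently $\tfrac{\alpha_j}{1+\alpha_j}\trace{(P^{\gamma_j}_{\theta_{j+1}})}\leq \sigma_{\min}(W)$. This is neither a hypothesis of the lemma nor the stated conclusion of Lemma \ref{eq:update disc factor} (which asserts only $\sqrt{\gamma_+}\rho(A_u+B_u\theta)<1$); you extract it from that lemma's proof, and with the paper's constants it is in fact not enforced by \eqref{eq:alpha_update}. Writing $\widehat{W}=\widehat{\Phi}^\top Q\widehat{\Phi}+\theta_{j+1}^\top R\theta_{j+1}$, the definition gives $\tfrac{\alpha_j}{1+\alpha_j}=\tfrac{3\sigma_{\min}(\widehat{W})}{(4/3)\widehat{J}^{\gamma_j,\tau}}=\tfrac{9\sigma_{\min}(\widehat{W})}{4\widehat{J}^{\gamma_j,\tau}}$, so even with perfect estimates ($\widehat{W}=W$, $\widehat{J}^{\gamma_j,\tau}=J^{\gamma_j}(\theta_{j+1})$) it equals $\tfrac{9}{4}\,\sigma_{\min}(W)/\trace{(P^{\gamma_j}_{\theta_{j+1}})}$, exceeding what you need by the factor $9/4$; the hypotheses $|\widehat{J}-J|\leq J/2$ and the framework's subspace-distance conditions (which give at best $\sigma_{\min}(\widehat{W})\leq 2\sigma_{\min}(W)$) cannot close this. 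To be fair, this is inherited rather than introduced by you: the chain inside Lemma \ref{eq:update disc factor} runs the implication in the wrong direction (it shows that any $\gamma_+$ passing the Lyapunov test satisfies $\gamma_+\leq(1+\alpha)\gamma$, not that $(1+\xi\alpha)\gamma$ passes the test), so the paper's citation-based proof has the same hole. But because your argument uses the inequality quantitatively, not merely the stability conclusion, a complete write-up must either state it as an explicit hypothesis or repair the constants in \eqref{eq:alpha_update} --- for example with numerator $\sigma_{\min}(\widehat{W})$ and denominator $4\widehat{J}^{\gamma_j,\tau}-\sigma_{\min}(\widehat{W})$ one gets $\tfrac{\alpha_j}{1+\alpha_j}=\tfrac{\sigma_{\min}(\widehat{W})}{4\widehat{J}^{\gamma_j,\tau}}\leq\tfrac{2\sigma_{\min}(W)}{4\cdot J/2}=\sigma_{\min}(W)/J$, after which your proof goes through verbatim (with $\underline{\alpha}$ redefined accordingly).
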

\begin{proof} The proof is similar to \cite[Lemma 7]{zhao2024convergence} with our definitions of $\underline{\alpha}$ and $\bar{\lambda}_\theta$. 
\end{proof}
\vspace{0.2cm}

Suppose that $\bar{J} > 2J^1_\star$. Then, by the definition we have that $J^{\gamma_0}_\star \leq J^{\gamma_j}(\theta_j)$ which implies that $\bar{J} - J^{\gamma_j}_\star \geq 2J^1_\star - J^1_\star = J^1_\star$. Therefore, according to \eqref{eq: condition on the number of iterations PG} we know that $J^{\gamma_{j}}(\theta_{j+1}) \leq \bar{J}$ if
\begin{align*}
   N \geq \frac{8\mu_{\text{PL}}}{\eta\mu_1}\log\left(\frac{2 \bar{J}^2}{(1-\xi)\sigma_{\min}(Q)J^1_\star}\right), \text{ } \epsilon \leq \sqrt{\frac{\mu_1J^1_\star}{48\mu_{\text{PL}}c_5}}, d(\widehat{\Phi},\Phi) \leq \sqrt{\frac{\mu_1J^1_\star}{48\mu_{\text{PL}}c_4l\left( (L\nu_\theta+\phi) \sqrt{2\ell} + \phi\right)^2}},  
\end{align*}
with probability $1 - (\delta+c_1N(\ell^{-\zeta}+n^{-\zeta}_s-n_se^{-\ell/8} - e^{-c_2n_s}))$. The proof is completed by union bounding over all iterations $j$ of Algorithm \ref{alg:disc_LQR}.

\subsection{Lifting the Controller} 

Given $\theta$ that stabilizes $(A_u, B_u)$, we now demonstrate that $\rho(A+B\theta \widehat{\Phi}^\top) < 1$. To do so, we write 
\begin{align*}
    A + B\theta \widehat{\Phi}^\top &= \Omega \left( \Omega^\top A \Omega + \Omega^\top B \theta \widehat{\Phi} \Omega \right)\Omega^\top = \Omega \left(\begin{bmatrix}
        A_u +B_u\theta \widehat{\Phi}^\top \Phi & B_u\theta\widehat{\Phi}^\top \Phi_\perp\\
        \Delta + B_s \theta \widehat{\Phi}^\top \Phi & A_s + B_s \theta \widehat{\Phi}^\top \Phi_\perp
    \end{bmatrix}\right)\Omega^\top,
\end{align*}
and leverage Lemma \ref{lemma:block perturbation} to obtain

\begin{align}\label{eq:bound closed loop spectral radius}
     \rho(A+B\theta \widehat{\Phi}) & \leq \max\left\{\rho(A_u +B_u\theta \widehat{\Phi}^\top \Phi), \rho(A_s + B_s\theta \widehat{\Phi}^\top \Phi_\perp) \right\} + C_{\text{gap}}\|B_u\theta \widehat{\Phi}^\top \Phi_\perp\| \|\Delta+B_s\theta\widehat{\Phi}^\top \Phi\|\notag \\
     &\leq \max\left\{\rho(A_u +B_u\theta \widehat{\Phi}^\top \Phi), \rho(A_s + B_s\theta \widehat{\Phi}^\top \Phi_\perp) \right\} + C_{\text{gap}}\|B\|\nu_\theta\left(\|A\|+\|B\|\nu_\theta\right) d(\widehat{\Phi},\Phi).
\end{align}

Observe that the second term in the above expression is in the order of the subspace distance. Therefore, we can make it sufficiently small to guarantee that the spectral radius of the closed-loop matrix is less than one. That is a benefit of learning to stabilize on the left unstable subspace instead of the right unstable subspace of $A$. For instance, if the columns of $\Phi$ formed the basis of the right unstable subspace of $A$, the decomposition above would lead to an error term that scales as $\mathcal{O}\left(\|\Delta\| + d(\widehat{\Phi},\Phi)\right)$, where $\|\Delta\|$ is only sufficiently small for $A$ ``almost symmetric" (i.e., where $A$ is easily decomposable into the stable and unstable modes). We now proceed to control the spectral radius: $\rho(A_u +B_u\theta \widehat{\Phi}^\top \Phi)$ and $\rho(A_s + B_s\theta \widehat{\Phi}^\top \Phi_\perp)$. 

\begin{align}\label{eq:bound_sr_Au}
    \rho(A_u +B_u\theta \widehat{\Phi}^\top \Phi) &= \rho(A_u + B_u \theta + B_u\theta (\widehat{\Phi}^\top \Phi-I))\notag \\
    & \stackrel{(i)}{\leq} \rho(A_u + B_u\theta)+\max\left\{\|B_u\theta (\widehat{\Phi}^\top \Phi-I)\|C_{\text{bf},1}, \left(\|B_u\theta (\widehat{\Phi}^\top \Phi-I)\|C_{\text{bf}}\right)^{1/\ell}    \right\} \notag \\
    &\stackrel{(ii)}{\leq}  \rho(A_u + B_u\theta)+
    \left(\|B\|\nu_\theta C_{\text{bf},1}\right)^{1/\ell} d(\widehat{\Phi},\Phi)^{1/\ell}\leq    \bar{\lambda}_\theta +
    \left(\|B\|\nu_\theta C_{\text{bf},1}\right)^{1/\ell} d(\widehat{\Phi},\Phi)^{1/\ell},
\end{align}
where $\bar{\lambda}_\theta :=  \rho(A_u + B_u\theta)$. $(i)$ follows from Lemma \ref{lemma:bauer-fike} with $C_{\text{bf},1}$ being a constant that depends on the Schur decomposition of $A_u+B_u\theta$. $(ii)$ is due to Lemma \ref{lemma:uniform bounds and lipschitz mb} and $d(\widehat{\Phi},\Phi) \leq \frac{1}{\|B\|\nu_\theta C_{\text{bf},1}}$. 

Similarly, we can write 
\begin{align}\label{bound_sr_As}
    \rho(A_s +B_s\theta \widehat{\Phi}^\top \Phi_\perp)  \leq  |\lambda_{\ell+1}|+
    \left(\|B\|\nu_\theta C_{\text{bf},2}\right)^{1/\ell} d(\widehat{\Phi},\Phi)^{1/\ell},
\end{align}
where $C_{\text{bf},2}$ depends on the Schur decomposition of $A_s$. In addition, we require the subspace distance to satisfy $d(\widehat{\Phi},\Phi) \leq \frac{1}{\|B\|\nu_\theta C_{\text{bf},2}}$. By combining \eqref{eq:bound_sr_Au} and \eqref{bound_sr_As} in \eqref{eq:bound closed loop spectral radius} to obtain 
\begin{align*}
    \rho(A+B\theta\widehat{\Phi}^\top) \leq \max\{\bar{\lambda}_\theta,\lambda_{\ell+1}\} + \left(C_{\text{gap}}\|B\|\nu_\theta\left(\|A\|+\|B\|\nu_\theta\right) + \left(\|B\|\nu_\theta\right)^{1/\ell}\left(C^{1/\ell}_{\text{bf},1} + C^{1/\ell}_{\text{bf},2}\right) \right)d(\widehat{\Phi},\Phi)^{1/\ell},
\end{align*}
which implies that 
\begin{align*}
    d(\widehat{\Phi},\Phi) < \frac{\left(1 - \max\{\bar{\lambda}_\theta, |\lambda_{\ell+1}|\}\right)^\ell}{\left(C_{\text{gap}}\|B\|\nu_\theta\left(\|A\|+\|B\|\nu_\theta\right) + \left(\|B\|\nu_\theta\right)^{1/\ell}\left(C^{1/\ell}_{\text{bf},1} + C^{1/\ell}_{\text{bf},2}\right) \right)^\ell},
\end{align*}
to guarantee that $\rho(A+B\theta\widehat{\Phi}^\top) < 1$.

\vspace{0.2cm}
\begin{theorem}[Main Result] \label{theorem:main result} Given positive scalars $\delta_\tau \in (0,1)$, $\delta_\sigma \in (0,1)$ and $\zeta > 0$. Suppose that the problem parameters are selected as follows:   

\vspace{0.2cm}

\noindent $\bullet$ \underline{Gradient and cost estimation parameters:}

    \begin{align*}
        &n_s = \mathcal{O}\left(\ell\zeta^4\log^6\ell \right)\text{, } n_c= \mathcal{O}\left(\log\left(1/\delta_\tau\right)\right),  \varepsilon^\prime:= \sqrt{\frac{J^1_\star}{\mu_{\text{PL}}\left(\du(\ell\log^2\ell)\right)}}\text{, }\\
        &r = \mathcal{O}\left(\sqrt{\varepsilon^\prime}\right)\text{, and } \tau = \mathcal{O}\left( \log(1/\varepsilon^\prime) +\tau_0  \right). \\
    \end{align*}

\vspace{0.2cm}
\noindent $\bullet$ \underline{Subspace distance:} $d(\widehat{\Phi},\Phi) \leq \varepsilon_{\text{dist}}$ with 
\begin{small}
 \begin{align*}
        \varepsilon_{\text{dist}} :=  \min\left\{\frac{\left(1 - \max\{\bar{\lambda}_\theta, |\lambda_{\ell+1}|\}\right)^\ell, }{C_{\text{dist,1}}}, \sqrt{\frac{J^1_\star}{C_{\text{dist,2}}}},\frac{1}{\|B\|\nu_\theta \max\{C_{\text{bf},1},C_{\text{bf},1}\}},\frac{J^1_\star}{4\ell\sqrt{l}C_{\text{cost}}},\frac{\sigma_{\min}(\widehat{\Phi}^\top Q \widehat{\Phi})}{4\|Q\|\sqrt{2\ell}\kappa(\widehat{\Phi}^\top Q \widehat{\Phi})} \right\},
    \end{align*}   
\end{small}
and $C_{\text{dist,1}} = \texttt{poly}\left(\|A\|, \|B\|, \nu_\theta\right) \text{and } C_{\text{dist,2}} = \texttt{poly}(\nu_\theta, L, \mu_{\text{PL}}, \phi, \ell, \du )$. 

\vspace{0.2cm}
\noindent $\bullet$ \underline{Time horizon:} 

    \begin{align*}
        T  = \mathcal{O}\left(\log\left( \frac{\ell^{7}(\dx - \ell)\mu_0}{(1 - |\lambda_{\ell+1}|)\varepsilon_{\text{dist}} \delta^3_\sigma} \right)/\log(|\lambda_{\ell}|)\right).
    \end{align*}

\vspace{0.2cm}
\noindent $\bullet$  \underline{Algorithm \ref{alg:disc_LQR} parameters:}

    \begin{align*}
        N \geq \frac{32\mu_{\text{PL}}}{\eta}\log\left(\frac{2 \bar{J}^2}{(1-\xi)\sigma_{\min}(Q)J^1_\star}\right),\text{ } \eta = \mathcal{O}\left(1/\left(\du\ell \log^2\ell\right)\right),\text{ } \gamma_0 \leq {1}/{\rho^2(A)},\text{ and } \xi \in (0,1), 
    \end{align*} 
\noindent with $\bar{J}:=\max\{2J^1_\star, J^{\gamma_0}(0)\}$. Therefore, within $M \geq \frac{\log(1/\gamma_0)}{\log(1+\xi\underline{\alpha})}$ iterations, Algorithm \ref{alg:disc_LQR} returns $K = \theta_M\widehat{\Phi}^\top \in \mathcal{K}$, with probability $1 - \bar{\delta}$, where $\bar{\delta} := \delta_\sigma + M(\delta_\tau+\bar{c}_1N(\ell^{-\zeta}+n^{-\zeta}_s-n_se^{-\ell/8} - e^{-\bar{c}_2n_s}))$. 
\end{theorem}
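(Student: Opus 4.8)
The plan is to combine the two-phase guarantees---subspace learning and discounted policy gradient---on a single high-probability event, and then transfer stability from the low-dimensional closed loop $A_u+B_u\theta_M$ to the full closed loop $A+BK$. First I would invoke Theorem~\ref{theorem:learning the left unstable subspace mb} with target accuracy $\varepsilon_{\text{dist}}$; the stated choice of $T$ is exactly what that theorem demands, so $d(\widehat{\Phi},\Phi)\le\varepsilon_{\text{dist}}$ holds on an event of probability at least $1-\delta_\sigma$ (a routine constant rescaling of the $1-4\delta_\sigma$ there). I condition on this event for the rest of the argument, which simultaneously controls every $d(\widehat{\Phi},\Phi)$-dependent error term appearing in Lemmas~\ref{lemma: error gradient misspecification}, \ref{lemma:zeroth-order estimation mb}, and \ref{lemma:error cost}; note that $\varepsilon_{\text{dist}}$ is defined as the minimum of all the subspace-accuracy ceilings these lemmas (and the lifting step below) require.

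For a fixed discount factor $\gamma_j$, the second step establishes a per-iteration descent inequality for the inner policy-gradient loop. Combining the smoothness bound of Lemma~\ref{lemma:uniform bounds and lipschitz mb}, the two gradient-estimation inequalities of Lemma~\ref{lemma:zeroth-order estimation mb}, and the gradient-dominance inequality of Lemma~\ref{lemma:gradient dominance mb}, a sufficiently small step-size $\eta$ yields
\begin{align*}
J^{\gamma_j}(\bar\theta_{n+1}) - J^{\gamma_j}_\star \le (1-a_1\eta\mu_{\text{PL}})\big(J^{\gamma_j}(\bar\theta_n) - J^{\gamma_j}_\star\big) + a_2\big(C_\Phi^2\,d(\widehat{\Phi},\Phi)^2 + \varepsilon_\tau^2\big),
\end{align*}
for positive constants $a_1,a_2$. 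Unrolling over $N$ iterations and using $\bar{J}\ge 2J^1_\star$ (so that $\bar{J}-J^{\gamma_j}_\star\ge J^1_\star$), the stated lower bound on $N$, the choice of $\varepsilon_\tau$, and the ceiling on $d(\widehat{\Phi},\Phi)$ built into $\varepsilon_{\text{dist}}$ together force $J^{\gamma_j}(\theta_{j+1})\le\bar{J}$.

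The third step drives the discount factor to one. Feeding $J^{\gamma_j}(\theta_{j+1})\le\bar{J}$ and the cost-estimation guarantee of Lemma~\ref{lemma:error cost} into the update \eqref{eq:alpha_update} lower bounds $\alpha_j\ge\underline{\alpha}$ uniformly, whence $\gamma_M\ge\gamma_0(1+\xi\underline{\alpha})^M$ reaches one within $M=\log(1/\gamma_0)/\log(1+\xi\underline{\alpha})$ iterations; an induction shows the cost bound and cost-estimation accuracy at step $j$ propagate to $j+1$, so $\alpha_j\ge\underline{\alpha}$ holds throughout. The Lyapunov computation behind \eqref{eq:alpha_update} gives $\sqrt{(1+\alpha_j)\gamma_j}\,\rho(A_u+B_u\theta_{j+1})<1$, and the decay factor $\xi\in(0,1)$ sharpens this to $\sqrt{(1+\xi\alpha_j)\gamma_j}\,\rho(A_u+B_u\theta_{j+1})<\bar\lambda_\theta$; at $\gamma_M=1$ this reads $\rho(A_u+B_u\theta_M)<\bar\lambda_\theta$.

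The final and most delicate step is lifting. In the orthonormal coordinates $\Omega=[\Phi\ \Phi_\perp]$, the matrix $A+BK$ is a $2\times2$ block matrix whose off-diagonal blocks scale with $d(\widehat{\Phi},\Phi)$, so I would apply the block perturbation bound of Lemma~\ref{lemma:block perturbation} to reduce $\rho(A+BK)$ to $\max\{\rho(A_u+B_u\theta\widehat{\Phi}^\top\Phi),\rho(A_s+B_s\theta\widehat{\Phi}^\top\Phi_\perp)\}$ plus a coupling term of order $d(\widehat{\Phi},\Phi)$. Because $\widehat{\Phi}^\top\Phi\ne I$, each diagonal block differs from its nominal version ($A_u+B_u\theta$, resp.\ $A_s$) by a perturbation whose norm is of order $d(\widehat{\Phi},\Phi)$, and I would invoke the generalized Bauer--Fike theorem (Theorem~\ref{lemma:bauer-fike}) to bound the two spectral radii by $\bar\lambda_\theta$, resp.\ $|\lambda_{\ell+1}|$, plus a term of order $d(\widehat{\Phi},\Phi)^{1/\ell}$. \emph{This is where the main obstacle lies:} since $A_u+B_u\theta$ and $A_s$ are non-diagonalizable, Bauer--Fike contributes the fractional power $d(\widehat{\Phi},\Phi)^{1/\ell}$, and requiring it to stay below $1-\max\{\bar\lambda_\theta,|\lambda_{\ell+1}|\}$ is exactly what produces the exponential-in-$\ell$ factor $(1-\max\{\bar\lambda_\theta,|\lambda_{\ell+1}|\})^\ell$ in $\varepsilon_{\text{dist}}$. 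Since $d(\widehat{\Phi},\Phi)\le\varepsilon_{\text{dist}}$, this gives $\rho(A+BK)<\bar\lambda_\theta<1$. I would conclude with a union bound over the subspace-learning event, the $M$ cost-estimation events, and the $MN$ gradient-estimation events, yielding the claimed $\bar\delta=\delta_\sigma+M(\delta_\tau+\bar{c}_1N(\ell^{-\zeta}+n_s^{-\zeta}-n_se^{-\ell/8}-e^{-\bar{c}_2n_s}))$.
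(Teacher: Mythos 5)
Your proposal matches the paper's proof essentially step for step: the same subspace-learning event conditioned on throughout, the same smoothness/gradient-estimation/PL descent inequality unrolled over $N$ inner iterations to enforce $J^{\gamma_j}(\theta_{j+1})\le\bar J$, the same uniform lower bound $\alpha_j\ge\underline{\alpha}$ with the $\xi$-induced margin $\bar\lambda_\theta$ and outer-loop induction, the same lifting argument via the block perturbation bound (Lemma~\ref{lemma:block perturbation}) combined with generalized Bauer--Fike (Theorem~\ref{lemma:bauer-fike})---including the correct identification of the $d(\widehat{\Phi},\Phi)^{1/\ell}$ fractional power as the source of the exponential-in-$\ell$ factor in $\varepsilon_{\text{dist}}$---and the same union bound yielding $\bar\delta$. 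The only detail you leave implicit is the paper's separate induction showing each inner iterate $\bar\theta_n$ remains in $\mathcal{S}^{\gamma_j}_\theta$ (so that Lemmas~\ref{lemma:uniform bounds and lipschitz mb}--\ref{lemma:gradient dominance mb} apply at every step), which is routine and already contained in your descent recursion.
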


\vspace{0.2cm}
\subsection{Policy Gradient Per-iteration Stability Analysis} \label{appendix:stability guarantees}

Given a discount factor $\gamma \in (0,1]$, we proceed to demonstrate that $\bar{\theta}_{n} \in \mathcal{S}^{\gamma}_\theta$, for any policy gradient update $n \in \{0,1,\ldots,N-1\}$ of Algorithm \ref{alg:disc_LQR} (i.e., line 6). As discussed previously, by carefully incrementing $\gamma$, the low-dimensional control gain $\theta_{+} = \bar{\theta}_N$ is stabilizing for the underlying low-dimensional damped system \eqref{eq:low-dimensional-LTI mb}. Therefore, it remains to show that for any $\bar{\theta}_0 \in \mathcal{S}^{\gamma}_\theta$, $\bar{\theta}_n$ stays within $\mathcal{S}^{\gamma}_\theta$. To do so, we can first show that $\bar{\theta}_1$ does not leave $\mathcal{S}^\gamma_\theta$, with high probability, as long as the problem parameters are set accordingly. Finally, we use an induction step to extend this conclusion for any iteration $n \in \{0,1,\ldots,N-1\}$.

As previously, we use Lemma \ref{lemma:uniform bounds and lipschitz mb} to write 

\begin{align*}
    J^\gamma(\bar{\theta}_{1}) - J^\gamma_{\star} \leq \left(1-\frac{\eta\mu_1}{8\mu_{\text{PL}}}\right)\left(J^\gamma(\bar{\theta}_0) - J^\gamma_\star\right)+2\eta c_4 \ell\left( (L_K\nu_\theta+\phi) \sqrt{2\ell} + \phi\right)^2 d(\widehat{\Phi}, {\Phi})^2 + 2\eta c_5 \varepsilon^2,
\end{align*}
where the step-size is set according to $\eta \leq \min\left\{ \frac{\mu_1}{16\mu_2L_\theta},\frac{c_4}{2L_\theta \mu_2}, \frac{c_5}{L_\theta}\right\}$. Therefore, as long as the subspace distance $d(\widehat{\Phi}, {\Phi})$, time horizon $\tau$ and smoothing radius $r$ are set according to
\begin{align*}
    d(\widehat{\Phi}, {\Phi}) \leq \frac{\sqrt{\mu_1}}{8\left( (L_K\nu_\theta+\phi) \sqrt{2\ell} + \phi\right) \sqrt{c_4 \ell \mu_{\text{PL}}}}, \tau = \mathcal{O}(\log(1/\varepsilon)), \text{ and } r = \mathcal{O}(\sqrt{\varepsilon}), \text{ respectively,}
\end{align*}
with $\varepsilon \leq \frac{1}{8}\sqrt{{\mu_1}/{c_5\mu_{\text{PL}}}}$. Hence, we obtain $J^\gamma(\bar{\theta}_{1}) - J^\gamma_{\star} \leq \left(1-\frac{\eta\mu_1}{16\mu_{\text{PL}}}\right)\left(J^\gamma(\bar{\theta}_0) - J^\gamma_\star\right)$, which implies that $\bar{\theta}_1$ is stabilizing for the underlying damped system, i.e., $\bar{\theta}_1 \in \mathcal{S}^\gamma_{\theta}$. Let the base case and inductive hypothesis be defined as follows:

\begin{align*}
    &\textbf{Base case:} \;\ J^\gamma(\bar{\theta}_{1}) - J^\gamma_{\star} \leq J^\gamma(\bar{\theta}_0) - J^\gamma_\star,\\
    &\textbf{Inductive hypothesis:} \vspace{0.3cm}\;\ J^\gamma(\bar{\theta}_{n}) - J^\gamma_{\star} \leq J^\gamma(\bar{\theta}_0) - J^\gamma_\star,
\end{align*}
which combined with the aforementioned conditions on the problem parameters yields
\begin{align*}
    J^\gamma(\bar{\theta}_{n+1}) - J^\gamma_{\star} &\leq \left(1-\frac{\eta\mu_1}{8\mu_{\text{PL}}}\right)\left(J^\gamma(\bar{\theta}_n) - J^\gamma_\star\right)+2\eta c_4 \ell\left( (L_K\nu_\theta+\phi) \sqrt{2\ell} + \phi\right)^2 d(\widehat{\Phi}, {\Phi})^2 + 2\eta c_5 \varepsilon^2\\
     &\leq \left(1-\frac{\eta\mu_1}{16\mu_{\text{PL}}}\right)\left(J^\gamma(\bar{\theta}_n) - J^\gamma_\star\right) \leq \left(1-\frac{\eta\mu_1}{16\mu_{\text{PL}}}\right)\left(J^\gamma(\bar{\theta}_0) - J^\gamma_\star\right) \leq J^\gamma(\bar{\theta}_0) - J^\gamma_\star.
\end{align*}

\vspace{0.2cm}
\subsection{Sample Complexity Reduction} \label{appendix:sample complexity}

We proceed to characterize the sample complexity of Algorithm \ref{alg:disc_LQR} and the benefit of learning to stabilize on the unstable subspace. We quantify the sample complexity by the number of data samples $x_t$ we query from the system \eqref{eq:LTI_sys mb} and its adjoint. Namely, $\mathcal{S}_c := \mathcal{S}^1_c + \mathcal{S}^2_c$, where $\mathcal{S}^1_c := M(n_c+n_sN)\tau$ includes the samples used discounted LQR method to learn a low-dimensional control gain that stabilizes the unstable dynamics, and $\mathcal{S}^2_c := T + \dx$ corresponds to the number of data points needed for estimating the left unstable subspace of $A$. We emphasize that the extra $\dx$ term comes from sampling data from the adjoint system through element-wise computations via the adjoint operator, as discussed previously in Section \ref{sec: learning the left unstable reprensentation}.

\vspace{0.2cm}
\begin{corollary}\label{corollary: sample complexity} Let the arguments of Theorem \ref{theorem:main result} hold. Then, Algorithm \ref{alg:disc_LQR} returns a stabilizing controller for the original system \eqref{eq:LTI_sys mb} with 

$$\mathcal{S}_c = \log(\rho(A))\widetilde{\mathcal{O}}(\textcolor{blue}{\ell^2} \du)C_{\text{sc,1}} + \mathcal{O}\left(\log \left( \frac{\ell^{7}(\dx - \ell)C_{\text{sc,2}}}{(1 - |\lambda_{\ell+1}|)\left(1 - \max\{\bar{\lambda}_\theta, |\lambda_{\ell+1}|\}\right)^\ell } \right)\right) + \mathcal{O}(\dx),$$
where $C_{\text{sc,1}} = \texttt{poly}(\|A\|,\|B\|,\|Q\|, \mu_{\text{PL}})$ and $C_{\text{sc,2}} = \texttt{poly}\left(\|A\|, \|B\|, \nu_\theta, L, \mu_{\text{PL}}, \phi, \ell, \du, 1/J^1_\star,1/\delta_{\sigma}\right)$.
\end{corollary}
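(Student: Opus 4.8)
The plan is to substitute the parameter choices of Theorem~\ref{theorem:main result} into the sample-count decomposition $\mathcal{S}_c = \mathcal{S}^1_c + \mathcal{S}^2_c$ stated just above the corollary, where $\mathcal{S}^1_c = M(n_c + n_s N)\tau$ counts the rollouts consumed by the discounted-LQR phase (lines~4--8 of Algorithm~\ref{alg:disc_LQR}) and $\mathcal{S}^2_c = T + \dx$ counts the samples drawn from the adjoint system to form $\widehat{\Phi}$. I would bound each summand in turn, tracking carefully which factors scale with the dimensions $\ell,\du,\dx$ and which are absorbed into the problem-dependent constants $C_{\text{sc,1}}$ and $C_{\text{sc,2}}$, and then add the two bounds.

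First I would handle $\mathcal{S}^1_c$. The number of PG iterations obeys $N = \mathcal{O}(\mu_{\text{PL}}\eta^{-1}\log(\cdots))$; inserting $\eta = \mathcal{O}(1/(\du\ell\log^2\ell))$ gives $N = \widetilde{\mathcal{O}}(\du\ell)$, with $\mu_{\text{PL}}$ folded into $C_{\text{sc,1}}$ and the logarithmic cost ratio into $\widetilde{\mathcal{O}}$. Since $n_s = \mathcal{O}(\ell\zeta^4\log^6\ell) = \widetilde{\mathcal{O}}(\ell)$ and $n_c = \mathcal{O}(\log(1/\delta_\tau))$, the per-iteration rollout count is $n_c + n_s N = \widetilde{\mathcal{O}}(\du\ell^2)$. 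The horizon $\tau = \mathcal{O}(\log(1/\varepsilon^\prime) + \tau_0)$ contributes only a polylogarithmic dimension factor, because $\log(1/\varepsilon^\prime) = \mathcal{O}(\log(\du\ell))$, together with the problem-dependent $\tau_0$, both of which are swept into $\widetilde{\mathcal{O}}$ and $C_{\text{sc,1}}$. Finally, the outer-loop count $M = \log(1/\gamma_0)/\log(1+\xi\underline{\alpha})$ with $\gamma_0 \leq 1/\rho^2(A)$ yields $M = \mathcal{O}(\log(\rho(A)))$, the denominator $\log(1+\xi\underline{\alpha})$ being a constant fixed by $\sigma_{\min}(Q)$ and $\bar{J}$. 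Multiplying these gives $\mathcal{S}^1_c = \log(\rho(A))\,\widetilde{\mathcal{O}}(\ell^2\du)\,C_{\text{sc,1}}$.

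Next I would handle $\mathcal{S}^2_c = T + \dx$. The additive $\dx$ passes through unchanged as the $\mathcal{O}(\dx)$ summand. For $T$, I substitute the binding branch of $\varepsilon_{\text{dist}}$ into $T = \mathcal{O}(\log(\ell^7(\dx-\ell)\mu_0/((1-|\lambda_{\ell+1}|)\varepsilon_{\text{dist}}\delta^3_\sigma))/\log|\lambda_\ell|)$ from Theorem~\ref{theorem:main result}. Taking the dominant term $\varepsilon_{\text{dist}} = (1-\max\{\bar{\lambda}_\theta,|\lambda_{\ell+1}|\})^\ell/C_{\text{dist,1}}$ and absorbing $\mu_0$, $C_{\text{dist,1}}$, $1/\delta_\sigma$ and the constant $1/\log|\lambda_\ell|$ into $C_{\text{sc,2}}$ reproduces the logarithmic term of the corollary. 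Summing the three contributions gives the claimed expression for $\mathcal{S}_c$.

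The main obstacle is bookkeeping rather than any fresh estimate: I must verify that the $\tau$ and $N$ factors genuinely collapse to polylogarithmic dimension dependence so that $\mathcal{S}^1_c$ keeps the clean $\widetilde{\mathcal{O}}(\ell^2\du)$ scaling, and I must confirm that among the five branches of $\varepsilon_{\text{dist}}$ it is the exponential-in-$\ell$ branch $(1-\max\{\bar{\lambda}_\theta,|\lambda_{\ell+1}|\})^\ell$ that surfaces inside the logarithm, the remaining branches entering only additively inside the log and being dominated up to constants. Because $T$ enters only through a logarithm, this exponential factor degrades $T$ by just a multiplicative $\ell$, which is precisely why the adjoint-sampling cost stays mild; making this reduction precise, and checking that $\tau_0$ from Lemma~\ref{lemma: cost estimation} carries no hidden $\dx$ dependence, is the one place that requires care.
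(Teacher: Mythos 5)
Your proposal is correct and follows essentially the same route as the paper: the paper likewise defines $\mathcal{S}_c = \mathcal{S}^1_c + \mathcal{S}^2_c$ with $\mathcal{S}^1_c = M(n_c + n_s N)\tau$ and $\mathcal{S}^2_c = T + \dx$, and obtains the corollary by substituting the parameter choices of Theorem~\ref{theorem:main result} ($N = \widetilde{\mathcal{O}}(\du\ell)$ via $\eta$, $n_s = \widetilde{\mathcal{O}}(\ell)$, $M = \mathcal{O}(\log\rho(A))$ via $\gamma_0 \leq 1/\rho^2(A)$ and $\underline{\alpha}$, and $T$ evaluated at the exponential-in-$\ell$ branch of $\varepsilon_{\text{dist}}$, with the remaining branches and constants absorbed into $C_{\text{sc,1}}$, $C_{\text{sc,2}}$, and the $\mathcal{O}/\widetilde{\mathcal{O}}$ notation). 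Your bookkeeping, including the observation that $\tau$ and the polynomial branches of $\varepsilon_{\text{dist}}$ only contribute logarithmic or constant factors, matches the paper's implicit argument.
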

\vspace{0.2cm}

Note that the sample complexity is dominated by $\widetilde{\mathcal{O}}(\textcolor{blue}{\ell^2}\du) + \mathcal{O}(\dx)$ which scales much slower than $\widetilde{\mathcal{O}}(\textcolor{red}{\dx^2}\du)$ for the setting where the number of unstable modes is much smaller than the number of states of the system, i.e., our setting of interest with $\ell \ll \dx$.

\end{document}